\newtheorem{remark}{Remark}
\DeclareMathOperator*{\argmax}{arg\,max}
\DeclareMathOperator*{\argmin}{arg\,min}
\def\blue#1{{#1}}
\begin{document}
	\title{Benefit of Interpolation in Nearest Neighbor Algorithms}
	
	\author{
		Yue Xing\thanks{Department of Statistics, Purdue University}
		\and
		Qifan Song\footnotemark[1]
		\and
		Guang Cheng\footnotemark[1]\textsuperscript{ }\thanks{Department of Statistics, University of California, Los Angeles}
	}
	
	

	
	
	
	\maketitle
	\begin{abstract}
		In some studies \citep[e.g.,][]{zhang2016understanding} of deep learning, it is observed that over-parametrized deep neural networks achieve a small testing error even when the training error is almost zero. Despite numerous works towards understanding this so-called ``double descent'' phenomenon  \citep[e.g.,][]{belkin2018reconciling,belkin2019two}, in this paper, we turn into another way to enforce zero training error (without over-parametrization) through a data interpolation mechanism. Specifically, we consider a class of interpolated weighting schemes in the nearest neighbors (NN) algorithms. By carefully characterizing the multiplicative constant in the statistical risk, we reveal a U-shaped performance curve for the level of data interpolation in both classification and regression setups. This sharpens the existing result \citep{belkin2018does} that zero training error does not necessarily jeopardize predictive performances and claims a counter-intuitive result that a mild degree of data interpolation actually {\em strictly} improve the prediction performance and statistical stability over those of the (un-interpolated) $k$-NN algorithm. In the end, the universality of our results, such as change of distance measure and corrupted testing data, will also be discussed. 
	\end{abstract}
	\begin{keywords}
		Interpolation, Nearest Neighbors Algorithm, Regret Analysis, Double descent, Regression, Classification 
	\end{keywords}
	
	\section{Introduction}

	Statistical learning algorithms play a central role in modern data analysis and artificial intelligence. A supervised learning algorithm aims at constructing a predictor $h^*$, which uses training samples to predict testing data. Given a loss function and the class of candidate predictors $\mathcal{H}$,  the function $h^*\in\mathcal{H}$ is usually selected by minimizing the empirical loss. 
	To avoid over-fitting, classical learning theory \citep{anthony1995function,bartlett1996fat} suggests controlling the capacity of model space $\mathcal{H}$ (e.g., VC-dimension, fat-shattering dimension, Rademacher complexity), and discourages data interpolation.
	\blue{On the other hand, recent} deep learning applications reveal a completely different phenomenon from classical understanding, that is, with heavily over-parameterized neural network models, when the training loss reaches zero, the testing performance is still sound. \blue{For example, \cite{zhang2016understanding,sanyal2020benign,li2021towards} demonstrated experiments where deep neural networks have a small generalization error even when the training data are perfectly fitted and gave discussions towards \blue{this} phenomenon.}
	This counter-intuitive phenomenon motivates various theoretical studies to investigate the testing performance of estimators belonging to the ``over-fitting regime." For instance, \cite{arora2019fine,xie2016diverse,cao2019generalization,arora2018stronger,bartlett2017spectrally,neyshabur2017pac} established generalization bounds for shallow neural networks with a growing number of hidden nodes; \blue{ \cite{belkin2019two,hastie2019surprises,bartlett2020benign,cao2021risk,chatterji2021finite}  provided the theoretical characterization of the interpolated estimators in some particular models, e.g. linear regression and binary classification}. Some later studies also characterize how the learning curve changes in different models \cite{chen2020multiple}, or its universality beyond natural training \cite{min2020curious}. All these aforementioned results deliver proper explanations for the phenomenon that ``\blue{(proper)} over-fitting does not hurt prediction" under parametric modelings.

	Inspired by these studies, this paper aims to sharpen the existing results on the relationship between over-fitting and testing performance in a nonparametric estimation procedure and provide insights into the phenomenon, ``over-fitting does not hurt prediction", from a different perspective. \blue{ To be more specific, we study the interpolated nearest neighbors algorithm (interpolated-NN, \citep{belkin2018overfitting}), which is a nonparametric regression/classification estimator that interpolates the training data.} \cite{belkin2018overfitting,belkin2018does} have proved the rate optimality of interpolated-NN regression estimation. Beyond rate optimality claim, so far, there is no insight about the behavior of nearest neighbor estimations within the overfitting regimes, such as how different interpolation schemes affect the characteristics of \blue{interpolated-NN}.

	We conduct a comprehensive analysis to quantify the risk of interpolated-NN estimators, including its convergence rate and, more importantly, the associated multiplicative constant. The interpolated-NN estimator, $h_\gamma$, is indexed by a parameter $\gamma$, which directly represents the level of interpolation. As $\gamma$ increases, the value of $h_\gamma(x)$ is more influenced by data point value $Y^1(x)$ in the sense that $\lim_{\gamma\rightarrow\infty}h_\gamma(x)=Y^1(x)$, where $X^1(x)$ denotes the nearest neighbor of $x$ and $Y^1(x)$ is the response of $X^1(x)$. As the interpolation level $\gamma$ increases \blue{within a proper range}, the rate of convergence of the squared bias term and the variance term in the variance-bias decomposition are not affected. Under proper smoothness conditions, as $\gamma$ increases, the multiplicative constants associated with the squared bias and the variance terms decrease and increase, respectively. More importantly, when $\gamma$ is small, the decrease of squared bias dominates the growth of variance; hence mild level interpolation strictly improves interpolated-NN compared with non-interpolated $k$-NN estimator. Overall, the risk of interpolated-NN, as a function of interpolation level $\gamma$, is U-shaped. Therefore, within the over-fitting regime, the risk of interpolated NN will first decrease and then increase concerning the interpolation level. A graphic illustration can be found in Figure \ref{fig:my_label} below with data dimension $d=2$. Besides, we conduct a similar analysis to study the statistical stability \citep{sun2016stabilized} of interpolated-NN classifier and obtain similar results where the stability of interpolated-NN is U-shaped in the interpolation level as well.
	\begin{figure}[!ht]
		\centering
		\includegraphics[scale=0.22]{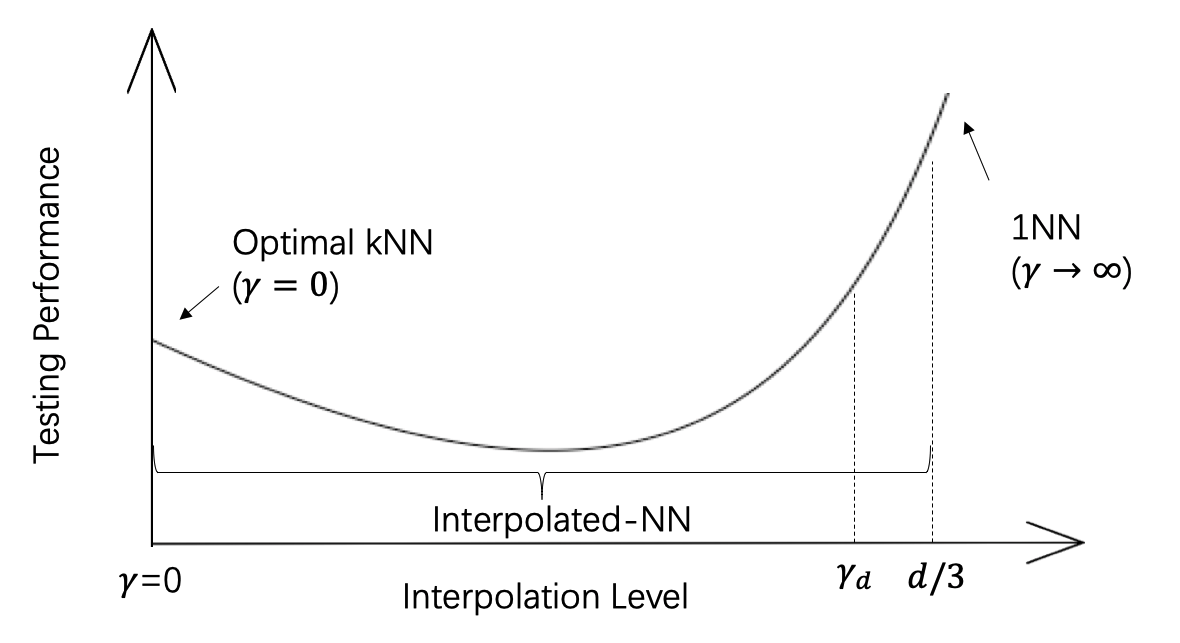}
		\caption{Relationship between testing performance and level of interpolation in interpolated-NN.}
		\label{fig:my_label}
	\end{figure}
	\begin{figure}[ht]
		\centering
		\includegraphics[scale=0.5]{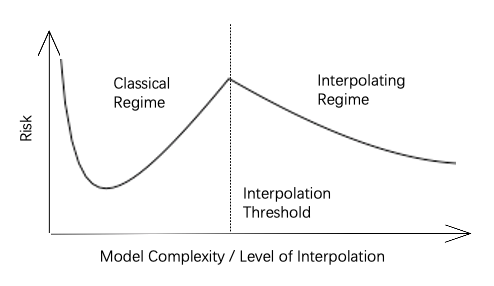}
		\caption{Double-descent Phenomenon in modern machine learning theory}
		\label{fig:double1}
	\end{figure}
	Our major finding, which in spirit claims that increasing degree of overfitting \blue{potentially} benefit testing performance, is somewhat similar to the recently discovered ``double-descent phenomenon'' \cite{belkin2018reconciling}. A graphical illustration of the ``double-descent phenomenon'' is shown in Figure \ref{fig:double1}. {In the first regime of the double-descent phenomenon, the ``classical regime", the model complexity is below some ``interpolation threshold" and the population loss is U-shaped when the model complexity increases. If the model complexity keeps increasing, in the second regime, ``overfitting regime", the training loss will be \blue{exactly} zero, and the generalization performance will get better in the model complexity.} \blue{In both the double-descent phenomenon and our result, the estimator gets improved with \blue{a} proper level of interpolation. However, besides the fact that our work focuses on nonparametric models while the double-descent phenomenon results are mostly on parametric models, it is noteworthy to emphasize the crucial differences between this double descent phenomenon and our result.} \blue{The literature of the double-descent phenomenon studies} the change of interpolated estimators' performance for {\em the level of over-parameterization}, while our work focuses on {\em the level of interpolation}. Although both are related to model over-fitting, they are not equivalent. The level of over-parameterization refers to the complexity of model space $\mathcal{H}$. In contrast, the level of interpolation is merely a measure of the estimating procedure (i.e., how much $h(x)$ is affected by the nearest neighbor of $x$) and has nothing to do with the complexity or dimensionality of $\mathcal{H}$. Hence, in the double-descent literature, one compares the risk of estimator $h(\cdot)$ across different dimensional setups, while in our work, the data dimension is fixed. It is worth noting that some recent works (e.g., \citealp{liu2020kernel}) discover the double-descent phenomenon for nonparametric regression as well (e.g., kernel ridge regression), which also considers that the dimension of model space increases. 
	
	Besides the main contributions, there are some minor contributions. First, we notice that the convergence rate given in \cite{belkin2018overfitting} for classification tasks is sub-optimal, and we improve it to optimal rate convergence \blue{via technical improvements}. Second, we study interpolated-NN under some other scenarios, e.g., change on distance metric and corruptions in testing data attributes. To be more specific, (1) We consider how the choice of distance metric used affects the asymptotic behavior of interpolated-NN. Traditional NN estimation usually utilizes $\mathcal{L}_2$ distance to determine the neighborhood set, and we will investigate whether or not the effect of interpolation is universal among different choices of distance metric. (2) Since interpolation-NN leads to a rather rugged and non-smooth regression estimation function, i.e., the nonparametric estimation function $h_\gamma(x)$ will have a big jump, especially around the training samples (refer to Figure \ref{toy} in Section \ref{sec:knn_wnn_compare}), we will also investigate in whether interpolation affects the prediction performance when testing data is slightly perturbed.
	
	Finally, we want to emphasize that this paper does not aim to promote the practical use of this interpolation method, given that $k$-NN is more user-friendly. Instead, our study on the interpolated-NN algorithm is used to describe the role of interpolation in generalization ability precisely, which is a new and interesting phenomenon motivated by the deep neural network practices. 
	
	The structure of the paper is as follows: in Section \ref{sec:model}, we introduce interpolated-NN and the level of interpolation in this method. We further present theorems regarding to rate optimality and multiplicative constant in Section \ref{sec:rate} and \ref{sec:const} respectively. A measure of statistical instability is also considered in Section \ref{sec:rate} and \ref{sec:const}. In Section \ref{sec:const}, in addition to the main theorems, some discussions relating to double descent, or more specifically, about the interpolation regime, will be given in Section \ref{sec:appendix:metric}, followed by numerical experiments in Section \ref{sec:numerical} and conclusion in Section \ref{sec:conclusion}.
	
	\section{Interpolation in Nearest Neighbors Algorithm}\label{sec:model}
	In this section, we first review the notations and the detailed algorithm of interpolated-NN. 
	Denote $(X_i,Y_i)$ as training samples for $i=1,...,n$. Given a data point $x$ for testing, define $R_{k+1}(x)$ to be the distance between $x$ and its $(k+1)$th nearest neighbor. Without loss of generality, let $X^1(x),\ldots, X^k(x)$ denote the {\it unsorted} $k$ nearest neighbors of $x$, and write as $X^1,\ldots, X^k$ for simplicity. Denote $\{R_i(x)\}_{i=1}^k$ and $\{Y^i(x)\}_{i=1}^k$ ($\{R_i\}_{i=1}^k$ and $\{Y^i\}_{i=1}^k$ in short) as distances between $x$ and $X^i$ \blue{and} the corresponding responses.
	
	For regression models, the aim is to estimate $\mathbb{E}(Y|X=x)$. Denote $\eta(x)$ as the target function, where $\eta(x)=\mathbb{E}(Y|X=x)$, and $\sigma^2(x)=Var(Y|X=x)$. A NN regression estimator at $x$, based on nearest $k$ neighbors, is defined as
	\begin{equation}\label{weightedNN}
		\widehat{\eta}_{k,n}(x)=\sum_{i=1}^k W_i(x) Y^i,
	\end{equation}
	where $W_i(x)$ (write as $W_i$ later) represents the (data-dependent) weight for each neighbor. The weighting scheme satisfies $W_i\geq 0$ and $\sum_{i=1}^k W_i=1$. For traditional $k$-NN, $W_i=1/k$ for all $i=1,...,k$. 
	
	For binary classification, denote $\eta(x)=P(Y=1|X=x)$, with $g(x)=1\{\eta(x)>1/2\}$ as the Bayes classifier. The interpolated-NN classifier is further defined as
	\begin{eqnarray*}
		\widehat{g}_{k,n}(x)=\begin{cases}
			1\qquad \sum_{i=1}^k W_iY^i>1/2\\
			0\qquad \sum_{i=1}^k W_iY^i\leq 1/2
		\end{cases}.
	\end{eqnarray*}
	The asymptotics of traditional $k$-NN algorithm have been extensively studied in \blue{the} literature (e.g., \citealp{cover1968rates,wagner1971convergence,fritz1975distribution,schoenmakers2013optimal,sun2016stabilized,CD14,xing2021predictive}). Variants of $k$-NN have been proposed to improve the  convergence, stability or computational performance, e.g.,optimally weighted NN \citep{samworth2012optimal}, locally weighted NN \citep{cannings2017local}, stabilized NN \citep{sun2016stabilized}, distributed NN \citep{duan2018DNN}, pre-processed 1NN \citep{xue2017achieving}, and robust NN \citep{wang2017analyzing}.

	An interpolating algorithm will interpolate training data, i.e., the fitted value/label is the same as the response/label of the training data. From this aspect, traditional $k$-NN does not interpolate unless $k=1$. To incorporate interpolation with NN algorithm, we follow \cite{belkin2018overfitting} to adopt the interpolated weighting scheme below: 		\begin{eqnarray}\label{eqn:model}
		W_i=\frac{R_i^{-\gamma}}{\sum_{j=1}^k R_j^{-\gamma}}=\frac{(R_i/R_{k+1})^{-\gamma}}{\sum_{j=1}^k (R_j/R_{k+1})^{-\gamma}}:=\frac{ \phi(R_i/R_{k+1}) }{\sum_{j=1}^k\phi(R_j/R_{k+1})},
	\end{eqnarray}
	for $i=1,\ldots,k$ and some $\gamma\geq 0$. Rewrite $\widehat{\eta}_{k,n}$ and $\widehat{g}_{k,n}$ as $\widehat{\eta}_{k,n,\gamma}$ and $\widehat{g}_{k,n,\gamma}$ to emphasize the choice of $\gamma$. In interpolated-NN, the function $\phi$ is chosen as $\phi(t)=t^{-\gamma}$. In general, if the positive function $\phi$ satisfies $\lim_{t\rightarrow 0}\phi(t)=\infty$ and $\phi$ decreases in $t$, through replacing the $\phi$ in $W_i$, we can create different interpolating weighting schemes.
	
	The parameter $\gamma\geq0$ controls the level of interpolation: with a larger $\gamma>0$, the algorithm will put more weights on the closer neighbors, especially the nearest neighbor. In particular, when $\gamma=0$, the interpolated-NN reduces to the common $k$-NN, and when $\gamma=\infty$, interpolated-NN reduces to 1-NN. \cite{belkin2018overfitting} showed that given any fixed $\gamma\in\mathbb{R}^+$, the interpolated estimator using (\ref{eqn:model}) is minimax rate-optimal for mean squared error in regression, but only provided a suboptimal upper bound for the Regret of binary classification tasks.

	To evaluate the predictive performance of a NN algorithm, we adopt the conventional measures as follows: assume the random testing data $X$ and its corresponding response variable $Y$ follow the same distribution as the training data, then
	\begin{eqnarray*}
		&&\mbox{Regression:}\;\;\;\text{MSE}(k,n,\gamma)=\mathbb{E}( (\widehat{\eta}_{k,n,\gamma}(X)-\eta(X))^2 ),\\
		&&\mbox{Classification:}\;\;\;\text{Regret}(k,n,\gamma)=P\left( \widehat{g}_{k,n,\gamma}(X)\neq Y \right)-P\left(g(X)\neq Y\right),
	\end{eqnarray*}
	where $\mathbb{E}$ and $P$ are \blue{the} expectation and probability measure with respect to the joint distribution of training data and testing data.
	
	For regression, MSE is adopted to evaluate the predictive \blue{accuracy of mean responses}. For classification, the Regret measures the difference between the testing accuracies of the estimated classifier and the \blue{oracle} Bayes classifier, i.e., the excessive mis-classification rate. The Regret can be equivalently rewritten as 
	\begin{eqnarray}\label{eqn:weight}
		\text{Regret}(k,n,\gamma)=\mathbb{E}\left[ 2|\eta(X)-1/2|P(\widehat{g}_{k,n,\gamma}(X)\neq Y) \right],
	\end{eqnarray}
	that is, the Regret can be viewed as a weighted mis-classification rate, where less weight is assigned on the region closer to decision boundary $\{x:\eta(x)=1/2\}$.

	\section{Rate Optimality of Interpolated-NN}\label{sec:rate}
	In this section, the convergence rate results for interpolated-NN are provided. We show that both MSE (of regression task) and Regret (of classification task) converge at the optimal rate. Besides, the statistical instability (defined by \citep{sun2016stabilized}) of interpolated-NN is evaluated, revealing that interpolated-NN is as stable (in \blue{terms} of asymptotic rate) as $k$-NN.

	\subsection{Model Setup}
	The theorems in this section are developed under assumptions as those in \cite{CD14} and \cite{belkin2018overfitting}. We state these assumptions as follows:
	\begin{enumerate}
		\item[A.1]$X$ is a $d$-dimensional random variable on a compact set and satisfies the following  regularity condition \cite{audibert2007fast}: there exists positive $(c_0,r_0)$ such that for any $x$ in the support $\mathcal{X}$,
		\begin{equation*}
			\lambda(\mathcal{X}\cap B(x,r))\geq c_0\lambda(B(x,r)),
		\end{equation*}
		for any $0<r\leq r_0$, where $\lambda$ denotes the Lebesgue measure on $\mathbb{R}^d$. 
		\item[A.2] The density of $X$ is between $[m_x,M_x]$ for some constants $0<m_x\leq M_x<\infty$.
		\item[A.3] Smoothness condition: $|\eta(x)-\eta(y)|\leq A\|x-y\|^{\alpha}$ for some $\alpha>0$.
		\item[A.4] For classification, the model satisfies \blue{Tsybakov margin condition \cite{tsybakov2004optimal}}:  $P(|\eta(X)-1/2|<t)\leq Bt^{\beta}$
		\item[A.5] For regression, the variance of noise is finite, i.e., $\sigma^2(x)\leq M_{\sigma}<\infty$ for any $x\in\mathcal{X}$.
	\end{enumerate}
	
	\blue{Assumption A.1 regulates the shape of $\mathcal X$ and avoids spiky support, it essentially ensures that for any $x\in\mathcal X$, all its $k$ nearest neighbors are sufficiently close to $x$ with high probability.} If $\mathcal{X}$ is compact and convex, this regularity condition is automatically satisfied. 
	Assumption A.2 regularizes the neighborhood set for any testing sample $x\in\mathcal X$:
	the upper bound of the density prevents neighbors from clustering at $x$, i.e., $R_i$'s converge to 0 too fast; the lower bound of the density prevents the neighbors from being too far from $x$. Assumption A.1 and A.2 together guarantee that $R_i\asymp (k/n)^{1/d}$ in probability. If this lower bound assumption is violated and the density of $x$ goes to zero,
	classical $k$-NN may not perform well, and a locally-weighted NN \citep{cannings2017local} allows the weight assignment to depend on $x$, is preferred. As shown in \cite{cannings2017local}, locally-weighted NN improves the empirical performance of $k$-NN.  Readers of interest can design interpolated locally-weighted NN algorithm and study its asymptotic behavior as those in our later theorems. \blue{The constants $c_0, r_0, A, B, \beta, \alpha$ are distribution-specific (i.e., can change w.r.t. the distribution of $(X,Y)$ or the dimension $d$) but are independent to $n$. Among them, only $\alpha$ and $\beta$ are the two key values that affect the convergence rate (w.r.t. $n$) of the NN estimator.}

	For regression tasks, assumption A.1-A.3 and A.5 are commonly used in the literature (e.g., \citealp{belkin2018overfitting}), under which one can prove that the NN type regression estimators achieve the minimax rate $O(n^{-2\alpha/(2\alpha+d)})$, via the technical tools of \citep{tsybakov2004optimal}. For assumption A.3, a smaller value of $\alpha$, i.e., a less smooth true $\eta$, implies that estimating $\eta$ is more difficult. Assumption A.5 is imposed to restrict the variation of $y$ given $x$. Assumption A.5 only requires that the variance at each $x$ is finite, which is a \blue{fairly} weak condition.   
	
	For classification tasks, the smoothness condition A.3 and margin condition A.4 usually appear in the Regret analysis of $k$-NN \citep[e.g.,][]{CD14,belkin2018overfitting,sun2016stabilized}. These two conditions together describe how large the probability measure is near the decision boundary $\{x|\eta(x)=1/2\}$. Under A.1 to A.4, it is well known that the optimal rate for Regret of nonparametric classification is  $O(n^{-\alpha(\beta+1)/(2\alpha+d)})$ due to \cite{tsybakov2004optimal}. Note that our theorem allows that $\gamma=0$, i.e., it applies to $k$-NN as well.
	
	Besides the detailed analysis in the following sections, we also provide a concrete example with an intuitive explanation of why interpolation does not hurt the convergence of interpolated-NN in Section \ref{sec:knn_wnn_compare}.
	
	\subsection{Rate of Convergence}
	In the following theorem, we present the convergence rate of interpolated-NN for both regression and classification under a mild level of data interpolation, i.e., $\gamma$ is within some suitable range.
	\blue{This theorem is a refined result of \cite{belkin2018overfitting}, which only obtains the optimal convergence rate of MSE for regression tasks. Note that our rate for MSE is the same as in \cite{belkin2018overfitting}. We include both regression and classifications results in the following theorem for the sake of completeness. }
	
	\begin{theorem}\label{general}
		Assume $d-3\gamma\geq C>0$ for some constant $C>0$ and $\gamma\geq 0$. For regression, under A.1-A.3 and A.5,
		$$\text{MSE}(\gamma,n):=\min_{k}\text{MSE}(k,\gamma,n)= O(n^{-2\alpha/(2\alpha+d)}).$$
		For classification, under A.1-A.3 and A.4 if $\beta<2$,
		$$\text{Regret}(\gamma,n):=\min_k \text{Regret}(k,\gamma,n)= O(n^{-\alpha(\beta+1)/(2\alpha+d)}).$$
		In addition, denote $\kappa(\beta)$ as the smallest even number that is greater than $\beta+1$. If $\beta\geq 2$, when $d-\kappa(\beta)\gamma>0$, 
		$$\text{Regret}(\gamma,n)= O(n^{-\alpha(\beta+1)/(2\alpha+d)}).$$
	\end{theorem}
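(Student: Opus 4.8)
\emph{Proof plan.}\quad The plan is to run a conditional bias--variance analysis, handled separately for the two tasks but resting on one common estimate for the interpolation weights. Condition throughout on the test point $X=x$ and on the design $\{X_i\}_{i=1}^n$; given these, $\widehat\eta_{k,n,\gamma}(x)=\sum_{i=1}^k W_iY^i$ is a weighted sum of conditionally independent responses with conditional mean $\bar\eta(x):=\sum_{i=1}^k W_i\eta(X^i)$ and conditional variance $\sum_{i=1}^k W_i^2\sigma^2(X^i)$. First I would dispatch the geometry: A.1--A.2 give $\mu(B(x,r))\asymp r^d$ uniformly in $x\in\mathcal X$ for $r\le r_0$, so the count of design points in $B(x,r)$ is $\mathrm{Binomial}(n,\Theta(r^d))$ and a Chernoff bound yields $R_{k+1}(x)=O_p\big((k/n)^{1/d}\big)$ uniformly with polynomially light tails; the same counting argument shows that, conditionally on $R_{k+1}(x)$, the scaled radii $\rho_i:=R_i(x)/R_{k+1}(x)\in(0,1]$ have densities on $(0,1)$ bounded by an absolute constant. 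The bias is then immediate from A.3 and $\sum_iW_i=1$: $|\bar\eta(x)-\eta(x)|\le A\sum_iW_iR_i^\alpha\le A\,R_{k+1}(x)^\alpha$, so $\mathbb E[(\bar\eta(X)-\eta(X))^2]=O\big((k/n)^{2\alpha/d}\big)$ once the $R_{k+1}$ tail is absorbed.

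The core of the argument — and the step I expect to be the main obstacle — is showing $\mathbb E\big[\sum_iW_i^2\big]=O(1/k)$, which is exactly where $d-3\gamma\ge C$ is used. Writing $\sum_iW_i^2=\big(\sum_i\rho_i^{-2\gamma}\big)/\big(\sum_j\rho_j^{-\gamma}\big)^2$ and using the trivial bound $\rho_j^{-\gamma}\ge1$ gives $\sum_iW_i^2\le k^{-2}\sum_i\rho_i^{-2\gamma}$; the density bound on the $\rho_i^d$ makes $\mathbb E[\rho_i^{-2\gamma}]$ finite (the governing integral $\int_0^1u^{-2\gamma/d}\,du$ converges iff $2\gamma<d$), and a matching order-statistics estimate gives $\mathbb E\big[\sum_i\rho_i^{-2\gamma}\big]=O(k)$, hence the claim. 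The delicate point is the rare configuration $\rho_1\le\varepsilon$ in which the nearest neighbor dominates, $W_1\approx1$, and the variance proxy jumps to $O(1)$: here I would combine $P(\rho_1\le\varepsilon)\lesssim k\varepsilon^d$ with a slightly higher moment of the weight functional, e.g.\ $\mathbb E\big[(\sum_iW_i^2)^{3/2}\big]=O(k^{-3/2})$, which is finite precisely because $3\gamma<d$, and Hölder's inequality, to show this event contributes $o(1/k)$ even jointly with the bias tail. Putting the pieces together, $\mathrm{MSE}(k,\gamma,n)=O\big((k/n)^{2\alpha/d}+1/k\big)$, and choosing $k\asymp n^{2\alpha/(2\alpha+d)}$ gives $\mathrm{MSE}(\gamma,n)=O\big(n^{-2\alpha/(2\alpha+d)}\big)$.

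For classification I would start from $\mathrm{Regret}(k,\gamma,n)=\mathbb E\big[2|\eta(X)-1/2|\,P(\widehat g_{k,n,\gamma}(X)\neq g(X)\mid X)\big]$ and analyse a fixed $x$ with margin $\Delta:=|\eta(x)-1/2|$. On $\{\Delta>2A\,R_{k+1}(x)^\alpha\}$ the bias bound forces $|\bar\eta(x)-1/2|\ge\Delta/2$, so a misclassification requires $|\widehat\eta_{k,n,\gamma}(x)-\bar\eta(x)|\ge\Delta/2$; since $\widehat\eta-\bar\eta=\sum_iW_i(Y^i-\eta(X^i))$ is a sum of independent centred variables bounded by $W_i$, Hoeffding's inequality on the good-design event $\{\sum_iW_i^2\le C/k\}$ gives $P(\widehat g\neq g\mid x,\text{design})\le 2\exp(-ck\Delta^2)$. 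Integrating this against the law of $|\eta(X)-1/2|$ via the Tsybakov condition A.4, and adding the contribution $\lesssim\mathbb E[R_{k+1}^{\alpha(\beta+1)}]\lesssim(k/n)^{\alpha(\beta+1)/d}$ from the complementary small-margin region, yields $\mathrm{Regret}(k,\gamma,n)\lesssim(k/n)^{\alpha(\beta+1)/d}+k^{-(\beta+1)/2}$, and balancing at $k\asymp n^{2\alpha/(2\alpha+d)}$ gives $O\big(n^{-\alpha(\beta+1)/(2\alpha+d)}\big)$; replacing the union-type bound of \cite{belkin2018overfitting} by this exponential estimate is precisely what restores the optimal exponent. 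When $\beta\ge2$ the margin is so thin that the bad-design event (where $\sum_iW_i^2$ exceeds $C/k$) is no longer controlled by a second-moment estimate of $\widehat\eta-\bar\eta$; there I would instead bound $P(\widehat g\neq g\mid x,\text{design})\le 2^{\kappa(\beta)}\Delta^{-\kappa(\beta)}\,\mathbb E\big[(\widehat\eta-\bar\eta)^{\kappa(\beta)}\mid\text{design}\big]$ with $\kappa(\beta)$ even, so that the centred $\kappa(\beta)$-th moment can be expanded term-by-term using independence and mean-zero-ness and bounded by $\big(\sum_iW_i^2\big)^{\kappa(\beta)/2}$ up to a constant (a Marcinkiewicz--Zygmund/Rosenthal estimate), and then take expectation — which is finite and $O\big(k^{-\kappa(\beta)/2}\big)$ exactly when $\kappa(\beta)\gamma<d$, i.e.\ $d-\kappa(\beta)\gamma>0$. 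The same optimisation over $k$ then closes the argument.
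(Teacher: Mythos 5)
Your regression argument and your $\beta\ge 2$ classification argument are fine and essentially coincide with the paper's route (the paper defers the MSE part to \cite{belkin2018overfitting}, and its $\beta\ge2$ case is exactly a $\kappa(\beta)$-th moment Markov bound, which is what your Marcinkiewicz--Zygmund/Rosenthal step amounts to). The gap is in the classification case $\beta<2$, in how you treat the complement of the good-design event $\{V\le C/k\}$, where $V:=\sum_i W_i^2$. Hoeffding is only available on the good event; your only stated tool for the bad event is a ``second-moment estimate'', i.e.\ the conditional Chebyshev bound $P(|\widehat\eta-\bar\eta|\ge\Delta/2\mid\mathrm{design})\le 4V/\Delta^2$ together with $\mathbb{E}[V]=O(1/k)$. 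That gives a bad-event contribution to the Regret of order $\mathbb{E}_X\bigl[\Delta\min\{1,\mathbb{E}[V\,1\{V>C/k\}]\,\Delta^{-2}\}\bigr]$, and with only $\mathbb{E}[V\,1\{V>C/k\}]\le\mathbb{E}[V]=O(1/k)$ the margins with $\Delta\asymp1$ already contribute order $1/k$ once $\beta>1$, which is strictly larger than the claimed $k^{-(\beta+1)/2}$. Even the H\"older refinement you invoke in the regression part, $\mathbb{E}[V\,1\{V>C/k\}]\le(\mathbb{E}V^{3/2})^{2/3}P(V>C/k)^{1/3}$, does not close this: under the sole hypothesis $d-3\gamma\ge C$ the summands $\rho_i^{-2\gamma}$ have tail index $d/(2\gamma)$, which may be barely above $3/2$, so $P(V>C/k)$ can be as large as $k^{1-d/(2\gamma)}\approx k^{-1/2}$ and the refinement only yields about $k^{-7/6}$, short of the $k^{-3/2}$ needed when $\beta$ is near $2$. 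So, as written, the proposal does not deliver the optimal exponent for $1<\beta<2$ — which is exactly the regime where the paper's improvement over the Chebyshev-based bound of \cite{belkin2018overfitting} lies.

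What is missing is the cubic-in-margin decay that is the real carrier of the condition $d-3\gamma>0$. Two repairs stay within your framework: (i) prove the sharper truncated-moment estimate $\mathbb{E}[V\,1\{V>C/k\}]\lesssim k^{-d/(2\gamma)}=o(k^{-3/2})$ (a one-big-jump analysis of $\sum_i\rho_i^{-2\gamma}$), after which your good/bad split does give $k^{-(\beta+1)/2}$; or, more simply, (ii) drop the split altogether and bound, conditionally on the design, $P(|\widehat\eta-\bar\eta|\ge\Delta/2\mid\mathrm{design})\le C\,V^{3/2}/\Delta^3$ by Rosenthal/MZ plus Markov, then take expectations using $\mathbb{E}[V^{3/2}]=O(k^{-3/2})$ (finite precisely because $3\gamma<d$); integrating $\min\{1,k^{-3/2}\Delta^{-3}\}$ against the margin law via the same dyadic peeling reproduces $k^{-(\beta+1)/2}$ for all $\beta<2$. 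Option (ii) is the moral equivalent of what the paper actually does: it conditions on $R_{k+1}$, transforms to i.i.d.\ variables $Z_i$, and applies the non-uniform Berry--Esseen theorem, whose remainder $\frac{c_1}{\sqrt k}\frac{1}{1+k^{3/2}(\Delta-\Delta_0)^3}$ supplies exactly this cubic decay and is the step where the third moment, hence $\gamma<d/3$, enters. Your choice to condition on the full design (so that the responses are independent and Hoeffding applies) is a legitimate simplification of the paper's $Z_i$ device, but it pushes all the difficulty into integrating out the random weights, and that is precisely where the argument is currently thin.
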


	In below we discuss the critical steps of proving Theorem \ref{general}. For regression, MSE at $x$ can be decomposed into
	\begin{align*}
		\mathbb{E}\big\{ (\widehat{\eta}_{k,n,\gamma}(x)-\eta(x))^2 \big\}\leq A^2\mathbb{E}\bigg[\sum_{i=1}^kW_i\|X^i-x\|^{\alpha}\bigg]^2+ \sum_{i=1}^k \mathbb{E}\bigg[W_i^2(Y(X^i)-\eta(X^i))^2 \bigg],
	\end{align*}
	and our convergence rate of MSE is thus based on careful derivations of upper bounds for $\mathbb{E}W_i$, $\mathbb{E}W_i^2$, and $\mathbb{E}\|X^i-x\|^{\alpha}$. Although interpolation introduces obvious biased prediction at the training data points by forcing $\widehat\eta(X_i)=Y_i$, after taking expectation over both training and testing data, such a bias effect is averaged out. For classification, we note that $(X^i,Y^i)$'s for $i=1,...,k$ are i.i.d. samples within the ball $B(x,r)$ conditional on $R^{k+1}=r$. Consequently, we can apply (non-uniform) Berry-Esseen Theorem to approximate $P(\widehat{g}_{k,n,\gamma}(x)\neq g(x))$ by normal probability, i.e.,
	\begin{equation}\label{eqn:illustration}
		P(\widehat{\eta}_{k,n,\gamma}(x)<1/2| g(x)=1)\approx \Phi\left( \frac{1/2-\mathbb{E}(\widehat{\eta}_{k,n,\gamma}(x))}{\sqrt{Var(\widehat{\eta}_{k,n,\gamma}(x))}}\right),
	\end{equation}
	which is related to the mean and variance of $\widehat{\eta}_{k,n,\gamma}(x)$. Similar to the regression case, the pointwise mis-classification rate is barely affected by data interpolation when $x\neq X_i$, thus the Regret of interpolated-NN has the same rate as the traditional $k$-NN.
	For different $\beta$ regions ($\beta<2$ and $\beta\geq 2$), different restrictions on $\gamma$ are imposed for technical simplicity. This restriction is used to control the reminder term that appears in the non-uniform Berry-Esseen Theorem. A detailed proof for the Regret convergence part of Theorem \ref{general} is postponed to the supplementary material. For the MSE convergence part, we refer readers to  \cite{belkin2018overfitting,belkin2018does}.
	
	\blue{
		Our technical contributions of Theorem \ref{general} (and Theorem \ref{thm:main} later) lie in the 
		adaptations of the existing analysis framework of \cite{CD14,samworth2012optimal}. 
		\begin{enumerate}
			\item[(a)] Compared to Theorem 4.5 in \cite{belkin2018overfitting} that is derived based on the Chebyshev’s inequality, the Berry-Esseen Theorem applied in Theorem \ref{general} leads to a tighter Regret bound.
			\item[(b)] By the definition of $W_i$, the weighted samples $W_iY_i$ are no longer independent with each other, so some transformations are made to enable the use of concentration inequalities in the approximation (\ref{eqn:illustration}). 
	\end{enumerate}}
	
	The insight of Theorem \ref{general} is that under proper \blue{a} interpolation scheme and reasonable overfitting level, data interpolation does not hurt the rate optimality of the NN algorithm for both regression and classification tasks.  
	
	
	\subsection{Statistical Stability}\label{sec:cis}
	In this section, we explore how interpolation affects the statistical stability of NN algorithms in classification.
	For a stable classification method, it is expected that with high probability, the classifier can yield the same prediction label when being trained by different data sets sampled from the same population. Therefore, \cite{sun2016stabilized} introduced a type of statistical stability, named as classification instability (CIS), to quantify how stable the classifier is. Denote $\mathcal{D}_1$ and $\mathcal{D}_2$ as two i.i.d. training sets of the same sample size $n$. Then CIS for interpolated-NN is defined as
	$$\text{CIS}_{k,n}(\gamma)=P_{\mathcal{D}_1,\mathcal{D}_2,X}\left( \widehat{g}_{k,n,\gamma}(X,\mathcal{D}_1)\neq \widehat{g}_{k,n,\gamma}(X,\mathcal{D}_2) \right),$$
	where $\widehat{g}_{k,n,\gamma}(x,\mathcal{D}_j)$ is the predicted label of $\widehat{g}_{k,n,\gamma}$ at $x$ when the training data set is $\mathcal{D}_j$ for $j=1,2$. Therefore, the CIS can be viewed as a counterpart of the variance measure of nonparametric regression estimator:
	$E_X Var(\widehat\eta_{k,n,\gamma}(X))=\frac{1}{2}E_{\mathcal{D}_1,\mathcal{D}_2,X}(\widehat{\eta}_{k,n,\gamma}(X,\mathcal{D}_1)- \widehat{\eta}_{k,n,\gamma}(X,\mathcal{D}_2))^2$.
	
	From the formula of CIS, a larger value of CIS indicates that the classifier is less statistically stable. Both misclassification rate and classification instability should be taken into account when evaluating the merits of any classification algorithm. Thus, our theory aims to characterize the CIS of interpolated-NN under the choice of $k$ that attains its optimal classification performance \blue{(i.e., the Regret)}.
	Besides, it is noteworthy that CIS is different from the algorithmic stability in the literature \citep{bousquet2002stability,hardt2016train,chen2018stability}, where the two data sets $\mathcal{D}_1$ and $\mathcal{D}_2$ are identical except for one sample, rather than independent identically distributed.

	The following theorem studies the CIS of interpolated-NN. In short, we show that the CIS for interpolated-NN and $k$-NN converge under the same rate:
	\begin{theorem}\label{thm:cis_rate}
		Under A.1, A.2, A.3 and A.4, if $\gamma$ satisfies the conditions stated in Theorem \ref{general}, when taking $k=cn^{1/(2\alpha+d)}$ for any constant $c>0$, we have
		\begin{equation*}
			\text{CIS}_{k,n}(\gamma)=O\left( n^{-\alpha\beta/(2\alpha+d)} \right).
		\end{equation*}
		Note that $k\asymp n^{1/(2\alpha+d)}$ is the choice of $k$, which attains the optimal rate of Regret for both $k$-NN and interpolated-NN.
	\end{theorem}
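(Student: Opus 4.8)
The plan is to bound $\text{CIS}_{k,n}(\gamma)$ pointwise, i.e.\ to control $P_{\mathcal{D}_1,\mathcal{D}_2}(\widehat{g}_{k,n,\gamma}(x,\mathcal{D}_1)\neq \widehat{g}_{k,n,\gamma}(x,\mathcal{D}_2))$ for fixed $x$, and then integrate against the distribution of $X$ using the margin condition A.4. The disagreement event at $x$ can only occur when at least one of the two estimators lands on the ``wrong'' side of $1/2$; more precisely, writing $p_j = P(\widehat{\eta}_{k,n,\gamma}(x,\mathcal{D}_j)>1/2)$, the two independent data sets disagree with probability $2p_1(1-p_1)$ when $p_1=p_2$ (same population), so it suffices to show $\min\{p, 1-p\}$ is small for $x$ away from the decision boundary, where $p=p(x)$. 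Concretely, I would split $\mathcal{X}$ into the near-boundary region $\{x:|\eta(x)-1/2|< \Delta_n\}$ and its complement, for a threshold $\Delta_n$ to be tuned (it will end up being of order $(k/n)^{\alpha/d}\asymp n^{-\alpha/(2\alpha+d)}$, the pointwise bias scale). On the near-boundary region, A.4 gives probability mass $O(\Delta_n^\beta)$ and we bound the disagreement probability there trivially by $1$, contributing $O(\Delta_n^\beta)=O(n^{-\alpha\beta/(2\alpha+d)})$. On the far region, we need $\min\{p,1-p\}$ to be so small that, even after summing, the contribution is of the same order or smaller.

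For the far region the key estimate is exactly the Berry--Esseen / normal approximation already invoked for Theorem \ref{general} in display (\ref{eqn:illustration}): if $g(x)=1$ and $|\eta(x)-1/2|\geq \Delta_n$, then
\begin{equation*}
P(\widehat{\eta}_{k,n,\gamma}(x)<1/2) \leq \Phi\!\left(\frac{1/2-\mathbb{E}\widehat{\eta}_{k,n,\gamma}(x)}{\sqrt{\operatorname{Var}\widehat{\eta}_{k,n,\gamma}(x)}}\right) + (\text{Berry--Esseen remainder}).
\end{equation*}
Here one uses $\mathbb{E}\widehat{\eta}_{k,n,\gamma}(x)-\eta(x)=O((k/n)^{\alpha/d})$ from A.3 together with the bounds on $\mathbb{E}W_i$ from the proof of Theorem \ref{general}, and $\operatorname{Var}\widehat{\eta}_{k,n,\gamma}(x)=\Theta(\sum_i \mathbb{E}W_i^2)=\Theta(1/k)$ — the crucial point being that the interpolated weights, under the constraint $d-\kappa(\beta)\gamma>0$ (or $d-3\gamma\ge C$), still satisfy $\sum_i \mathbb{E}W_i^2 \asymp 1/k$ up to a $\gamma$-dependent constant, so the variance rate is unchanged. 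Hence the argument of $\Phi$ is $\lesssim -(\Delta_n - O((k/n)^{\alpha/d}))\sqrt{k}$; choosing $\Delta_n$ a large enough multiple of $(k/n)^{\alpha/d}$ makes this argument $\lesssim -\sqrt{k}\,(k/n)^{\alpha/d}$, and with $k\asymp n^{1/(2\alpha+d)}$ this quantity $\to-\infty$, so the Gaussian tail is super-polynomially small. After integrating the far-region bound against $X$ and using A.4 to handle the dyadic shells $\{2^{j}\Delta_n\le |\eta-1/2|<2^{j+1}\Delta_n\}$ — each shell has mass $O((2^j\Delta_n)^\beta)$ and Gaussian-tail probability decaying doubly-exponentially in $2^j$ — the far-region total is $O(\Delta_n^\beta)=O(n^{-\alpha\beta/(2\alpha+d)})$, matching the near-region contribution.

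The main obstacle is the Berry--Esseen remainder term. Because the summands $W_iY^i$ are neither independent (the $W_i$ share the common normalizer and $R_{k+1}$) nor identically weighted, one must first condition on $R_{k+1}=r$ — rendering $(X^i,Y^i)$ i.i.d.\ inside $B(x,r)$ as noted after (\ref{eqn:illustration}) — and then apply a non-uniform Berry--Esseen bound to the conditionally-independent but non-identically-distributed array $\{W_i(r)Y^i\}$, where the non-uniform form is needed so that the remainder is controlled \emph{at the relevant deviation scale} rather than just uniformly. The resulting remainder scales like $\rho_3/(\text{variance})^{3/2}$ with $\rho_3=\sum_i \mathbb{E}|W_i|^3$, and here the restriction $d-\kappa(\beta)\gamma>0$ enters: it guarantees $\sum_i\mathbb{E}W_i^3$ is small enough (of order $k^{-2}$ up to constants) that the remainder is $o$ of the Gaussian main term across the whole far region, including the shells closest to the boundary where the margin condition with exponent $\beta$ forces us to track constants carefully. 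Controlling this remainder uniformly over the dyadic shells, and then unconditioning over $R_{k+1}$ using the concentration $R_{k+1}\asymp (k/n)^{1/d}$ from A.1--A.2, is the technical heart of the argument; everything else is bookkeeping that parallels the CIS analysis of \citep{sun2016stabilized} for ordinary $k$-NN.
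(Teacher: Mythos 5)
Your overall route is the same as the paper's: bound the conditional disagreement probability $2p(1-p)$ by $2\min\{p,1-p\}\le 2P(\widehat g_{k,n,\gamma}(x)\neq g(x))$ (the paper writes this as $\text{CIS}_{k,n}(\gamma)=2P(\widehat g_1\neq g,\widehat g_2=g)\le 2P(\widehat g\neq g)$ using conditional independence of the two training sets given $X$), then split $\mathcal X$ by the margin, charge the near-boundary region with its A.4 mass, and control the far region by the conditional-on-$R_{k+1}$ non-uniform Berry--Esseen bound summed over dyadic shells; this is exactly the paper's proof, which reruns the argument of Theorem \ref{general} without the weight $|\eta(X)-1/2|$ and obtains $O(\Delta_0^{\beta})+O(k^{-\beta/2})$.

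One quantitative claim in your far-region step is wrong, though harmless. You assert that with $\Delta_n$ a large multiple of $(k/n)^{\alpha/d}$ the standardized argument $\sqrt{k}\,\Delta_n$ diverges, so the Gaussian tail is super-polynomially small. At the Regret-optimal $k\asymp n^{2\alpha/(2\alpha+d)}$ (which is what the theorem intends), the bias scale $(k/n)^{\alpha/d}$ and the noise scale $k^{-1/2}$ are of the \emph{same} order $n^{-\alpha/(2\alpha+d)}$, so $\sqrt{k}\,\Delta_n=O(1)$ and the tail on the innermost shells is only $\Theta(1)$; with the literal $k\asymp n^{1/(2\alpha+d)}$ in the statement, $\sqrt{k}(k/n)^{\alpha/d}$ even tends to $0$. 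The correct bookkeeping (and what the paper does, via the threshold $\Delta_{i_0}\asymp\max\{(k/n)^{\alpha/d},k^{-1/2}\}$) is that the innermost shells are bounded by their margin mass times a constant, contributing $O(\Delta_{i_0}^{\beta})=O(k^{-\beta/2})$, while the Gaussian and non-uniform Berry--Esseen terms only need to decay geometrically across the outer shells $\{2^{j}\Delta_{i_0}\le|\eta-1/2|<2^{j+1}\Delta_{i_0}\}$, which your own shell computation already provides; so you should drop the ``super-polynomially small'' claim and take the threshold at the max of the two scales. A minor related imprecision: the third-moment condition that controls the Berry--Esseen remainder is $d>3\gamma$ (the $\beta<2$ case of Theorem \ref{general}); the condition $d-\kappa(\beta)\gamma>0$ belongs to the separate Markov-inequality argument used when $\beta\ge 2$, not to the $\rho_3$ bound.
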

	Based on Theorem \ref{thm:cis_rate}, the rate of CIS of interpolated-NN is the same as traditional $k$-NN (i.e., taking $\gamma=0$) when we either (i) select the best $k$ value for $k$-NN and apply it to both $k$-NN and interpolated-NN, or (ii) select the optimal $k$ for $k$-NN and interpolated-NN respectively. Also, this rate of CIS matches the minimax lower bound described below.
	Let $P$ be the marginal distribution of $X$, and $P\otimes\eta$ denotes the joint distribution of $(X,y)$ determined by $P$ and conditional mean function $\eta$, then the following proposition holds:
	\begin{proposition}[Theorem 4 in \cite{sun2016stabilized}]
		Let $\mathcal{P}_{\alpha,\beta}$ be the set of probability distributions that contains all distributions of form $P\otimes\eta$, where $P$ and $\eta$ satisfy A.1, A.2, A.3 and A.4. If $\alpha\beta<d$, then there exists some constant $C>0$ such that for any estimator $\hat{\eta}$,
		\begin{eqnarray*}
			\sup\limits_{P\otimes\eta\in \mathcal{P}_{\alpha,\beta} } CIS(\hat{\eta}) \geq Cn^{-\alpha\beta/(2\alpha+d)}.
		\end{eqnarray*}
	\end{proposition}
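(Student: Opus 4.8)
I would prove this minimax lower bound by the classical reduction to hypothesis testing over a finite family of ``bump'' alternatives, in the spirit of the Audibert--Tsybakov construction \cite{audibert2007fast,tsybakov2004optimal}; the instability-specific part of the argument is the one in \cite{sun2016stabilized}. Keep the marginal $P$ fixed --- say uniform on a cube $\mathcal X\subseteq\mathbb R^d$, so that A.1 and A.2 hold --- and perturb $\eta$ only near the boundary $\{\eta=1/2\}$. Pack $M$ disjoint balls $B(z_1,r),\dots,B(z_M,r)$ of a common radius $r\asymp n^{-1/(2\alpha+d)}$ inside $\mathcal X$, and for a sign vector $\omega\in\{0,1\}^M$ put $\eta_\omega=\tfrac12+(2\omega_j-1)\varphi_j$ on $B(z_j,r)$, where $\varphi_j$ is a bump supported in the ball, of height $h\asymp r^\alpha$, equal to $h$ on $B(z_j,r/2)$ with a Hölder profile on the annulus; away from the balls set $\eta_\omega\equiv\tfrac12+h_0$ for a fixed small $h_0>0$. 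The choice $h\asymp r^\alpha$ makes A.3 hold; with a suitable (for $\beta\ge1$, multi-scale) choice of the profiles together with $M\asymp r^{-(d-\alpha\beta)}$, the Tsybakov margin condition A.4 holds with exponent $\beta$. This is the one place the hypothesis $\alpha\beta<d$ is used: it guarantees $M\to\infty$ as $r\to0$, and a direct computation gives $Mr^d\asymp r^{\alpha\beta}\asymp n^{-\alpha\beta/(2\alpha+d)}$, the target rate.

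\textbf{Reduction and the key dichotomy.} For $\omega$ and its flip $\omega^{(j)}$ in one coordinate, the KL divergence between the product laws $(P\otimes\eta_\omega)^{\otimes n}$ and $(P\otimes\eta_{\omega^{(j)}})^{\otimes n}$ is concentrated on ball $j$ and equals $\Theta(n\,r^d h^2)=\Theta(n\,r^{d+2\alpha})=\Theta(1)$, and by rescaling the construction by a small constant it is made $\le\tfrac18$, so the two laws have total-variation distance $\le\tfrac14$. Le Cam's two-point inequality then shows that, for any test point $X\in B(z_j,r/2)$ and any classifier $\widehat g(\cdot,\mathcal D)$, writing $\pi_{\omega,X}:=P_{\mathcal D}(\widehat g(X,\mathcal D)=1)$, one has $(1-\pi_{\omega,X})+\pi_{\omega^{(j)},X}\ge\tfrac34$ and $\pi_{\omega,X}+(1-\pi_{\omega^{(j)},X})\ge\tfrac34$ (taking $\omega_j=1$), in particular $|\pi_{\omega,X}-\pi_{\omega^{(j)},X}|\le\tfrac14$. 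Since $\mathcal D_1$ and $\mathcal D_2$ are i.i.d.\ given $\omega$, the pointwise disagreement probability at $X$ equals $2\pi_{\omega,X}(1-\pi_{\omega,X})$, so an Assouad-type averaging over $\omega\sim\mathrm{Unif}\{0,1\}^M$ (pairing $\omega\leftrightarrow\omega^{(j)}$) reduces the lower bound on $\mathbb E_\omega\,\mathrm{CIS}_\omega(\widehat\eta)$ to a pointwise estimate for $\pi_{\omega,X}(1-\pi_{\omega,X})+\pi_{\omega^{(j)},X}(1-\pi_{\omega^{(j)},X})$, summed over $X$ against the $P$-mass $\asymp Mr^d$ carried by the inner balls. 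I would then dichotomize each pair $(X,\{\omega,\omega^{(j)}\})$: if both $\pi$'s lie in $[\delta,1-\delta]$ the pair contributes $\gtrsim\delta$ to $\mathrm{CIS}$; if one $\pi$ is within $\delta$ of $\{0,1\}$, then the Le Cam constraint forces the classifier to be confidently wrong under one of $\omega,\omega^{(j)}$ on $B(z_j,r/2)$, contributing $\gtrsim h$ to the weighted misclassification rate, i.e.\ to $\mathrm{Regret}$. Summing over the $M$ balls recovers the Regret lower bound $n^{-\alpha(\beta+1)/(2\alpha+d)}$ and, once the degenerate rules are excluded (see below), gives $\sup_{\mathcal P_{\alpha,\beta}}\mathrm{CIS}(\widehat\eta)\ge\mathbb E_\omega\,\mathrm{CIS}_\omega(\widehat\eta)\gtrsim Mr^d\asymp n^{-\alpha\beta/(2\alpha+d)}$.

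\textbf{Main obstacle.} The delicate step is the last one: Le Cam's inequality by itself does not keep $\pi_{\omega,X}(1-\pi_{\omega,X})$ away from $0$, since a data-independent rule has zero instability and being (nearly) deterministic on the bumps is not incompatible with a small Regret. One therefore has to use that $\widehat\eta$ is a \emph{genuine} estimator of $\eta$ --- e.g.\ Bayes-risk consistent, as $k$-NN and interpolated-NN are --- to rule out $\pi_{\omega,X}$ being pinned near $\{0,1\}$ on an $\Omega(1)$-fraction (by $P$-mass) of the bumps simultaneously over $\omega$; equivalently, one must show the ``good'' pairs carry a constant fraction of the mass $Mr^d$. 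Making this quantitative is the part that requires real care and is what \cite{sun2016stabilized} carries out. The remaining ingredient --- constructing bump profiles that satisfy A.3 and A.4 with the stated exponents over the whole range $\alpha\beta<d$, which for $\beta\ge1$ needs a multi-scale profile, together with the Bernoulli-KL estimate behind the Le Cam step --- is standard but notation-heavy.
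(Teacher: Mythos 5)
First, a point of comparison: the paper itself does not prove this proposition at all --- it is imported verbatim as Theorem 4 of \cite{sun2016stabilized}, so there is no internal proof to measure your argument against; the relevant benchmark is the original proof in that reference, which indeed proceeds by the Audibert--Tsybakov-type hypercube construction you describe \cite{audibert2007fast,tsybakov2004optimal}.

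On the substance, your sketch has a genuine gap, and you have located it yourself but not closed it. The Le Cam/Assouad machinery lower-bounds a testing affinity and hence, via your dichotomy, a \emph{combination} of instability and regret; it cannot by itself keep $\pi_{\omega,X}(1-\pi_{\omega,X})$ away from zero. Indeed, a data-independent classifier has $\mathrm{CIS}\equiv 0$ for every distribution, so the displayed inequality is simply false if ``any estimator'' is read literally; this shows the deferred step (``use that $\hat\eta$ is a genuine estimator'') cannot be completed without adding a hypothesis. The resolution is to restrict to classifiers satisfying an accuracy constraint (e.g.\ whose Regret over $\mathcal P_{\alpha,\beta}$ is suitably small), which is exactly the qualification carried by the theorem in \cite{sun2016stabilized} and dropped in the abridged statement quoted here. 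With that restriction, your dichotomy is the right mechanism: on pairs where the rule is near-deterministic, the two-point bound forces it to be confidently wrong under one of the two hypotheses, costing $\gtrsim h$ in Regret on a ball of mass $\asymp r^d$, and the Regret budget then limits the number of such ``pinned'' pairs, leaving a constant fraction of the mass $Mr^d\asymp n^{-\alpha\beta/(2\alpha+d)}$ to contribute $\gtrsim\delta$ to $\mathrm{CIS}$. But none of this bookkeeping --- the choice of $\delta$, the budget argument converting the Regret constraint into a bound on the number of pinned pairs, and the verification of A.3/A.4 for the multi-scale profiles when $\beta\ge 1$ --- is actually carried out in your proposal; you explicitly delegate it to \cite{sun2016stabilized}. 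As written, the proposal is a plausible plan (and consistent in spirit with the cited proof), but not a proof, and the statement it targets needs the omitted restriction on the classifier class to be true at all.
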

	
	\section{Quantification of Interpolation Effect}\label{sec:const}
	Our results presented in Section \ref{sec:rate} show that interpolated-NN retains the rate minimaxity in terms of MSE, Regret, and statistical stability. To further reveal the subtle relationship between data interpolation and estimation performance, we sharply quantify the multiplicative \blue{constants} associated with the convergence rates of MSE, Regret, and statistical instability of interpolated-NN algorithm. 
	
	\subsection{Model Assumptions}
	To facilitate our theoretical investigation, a slightly different set of assumptions are imposed:
	\begin{enumerate}
		\item [A.1'] $X$ is a $d$-dimensional random variable on a compact $\mathbb{R}^d$ manifold $\mathcal{X}$ with boundary $\partial \mathcal{X}$.
		\item[A.2'] The density of $X$ is in $[m_x,M_x]$ for some $0<m_x\leq M_x<\infty$, and twice differentiable.
		\item [A.3'] For classification, the set $\mathcal{S}=\{ x|\eta(x)=1/2 \}$ is non-empty. There exists an open subset $U_0$ in $\mathbb{R}^d$ which contains $\mathcal{S}$ such that, for an open set containing $\mathcal{X}$ (defined as $U$), $\eta$ is continuous on $U\backslash U_0$.
		\item [A.4'] For classification, there exists some constant $c_x>0$ such that when $|\eta(x)-1/2|\leq c_x$, $\eta$ has bounded fourth-order derivative; when $\eta(x)=1/2$, the gradient $\dot{\eta}(x)\neq 0$ when $\eta(x)=1/2$, and with restriction on $x\in\partial\mathcal{X}$, $\dot{\partial\eta}(x)\neq 0$ if $\eta(x)=1/2$, where $\partial\eta(x)$ denotes the restriction of $\eta$ to $\partial \mathcal{X}$. Also the derivative of $\eta(x)$ within restriction on the boundary of support is non-zero.
		
		\item [A.5'] For regression, the second-order derivative of $\eta$ is smooth for all $x$.
		\item [A.6'] For regression, $\sup\limits_{x\in\mathcal{X}}\sigma^2(x)\leq M_{\sigma}<\infty$ and $\sigma^2(x)$ is twice-continuously differentiable.
	\end{enumerate}
	
	The smoothness condition A.4' describes how smooth the function $\eta$ is, which affects the minimax performance of $k$-NN given a class of $\eta$'s. \blue{$c_x$ is a distribution-specific value. The existence of nonzero $c_x$ and the high-order smoothness condition allows us to rigorously
		analyze the near decision boundary region where most of \blue{the} mispredictions are made (Step 2 in the proof of Theorem \ref{thm:main} in Appendix \ref{sec:classification_proof}). Note that the validity of our results doesn't depend on the \blue{magnitude} of $c_x$.} Assumptions A.3' and A.4' describe how far the samples are away from $\{x|\eta(x)=1/2 \}$. The assumptions are mostly derived from the framework established by \cite{samworth2012optimal}. Note that the additional smoothness required in $\eta$ and $f$ is needed to facilitate the asymptotic study of the interpolated weighting scheme. We also want to point out that these assumptions are generally stronger than A.1 to A.5, \blue{but they allow us to obtain a sharper bound. Using A.1 to A.5, one can only obtain some upper bound rate for the approximation error in (\ref{eqn:illustration}), while using A.1' to A.6' one can obtain the exact value of the approximation error in (\ref{eqn:illustration}) (except for high order remainder terms)}.
	
	\begin{remark}\label{rem:con}
		There is a heuristic relationship between smoothness \blue{conditions} A.1-A.4 and A.1'-A.4'. 
		First of all, Assumption A.1' to A.4' also imply the marginal condition A.4 with $\beta=1$. In addition, as mentioned by \cite{CD14}, A.2 in fact implies 
		\begin{eqnarray}\label{eqn:connect}
			|\eta(x)-\eta(B(x,r))|\leq Lr^{\alpha},
		\end{eqnarray}
		while \cite{samworth2012optimal} showed the approximation  $\mathbb{E}\eta(X)\approx\eta(x_0)+tr[\ddot{\eta}(x_0)\mathbb{E}(X-x_0)(X-x_0)^{\top}]$ under A.2' and A.4'. Therefore, by matching result (\ref{eqn:connect}) and the above approximation, we can view  A.1'-A.4' as a special case of conditions A.1-A.4 under smoothness parameter $\alpha=2$ and $\beta=1$. Furthermore, for classification, the minimax rate under conditions A.1-A.4 is $O(n^{-\alpha(\beta+1)/(2\alpha+d)})$, which becomes $O(n^{-4/(4+d)})$ under $(\alpha,\beta)=(2,1)$. It coincides with convergence rates result \citep{samworth2012optimal} under condition set A.1'-A.4'.
	\end{remark}			 
	
	\subsection{Main Theorem}\label{sec:main}
	
	The following theorem examines the asymptotics of MSE and Regret of interpolated-NN and $k$-NN under Assumptions A.1'-A.6'. We first define some useful quantities. Let $P_1$ and $P_2$ (and $f_1$, $f_2$) be the conditional distributions (and densities) of $X$ given $Y=0,1$ respectively, and $\pi_1,\pi_2$ be the marginal probability $P(Y=0)$ and $P(Y=1)$. Denote $P=\pi_1P_2+\pi_2P_2$,  $\bar{P}=\pi_1P_1-\pi_2P_2$,  $f(x)=\pi_1 f_1(x)+\pi_2 f_2(x)$, and also denote $\Psi(x)=\pi_1 f_1(x)-\pi_2 f_2(x)$. Define \begin{eqnarray*}\label{eqn:a}
		a(x)&=&\frac{1}{f(x)^{1+2/d}d}\left\{ \sum_{j=1}^d [\dot{\eta}_j(x)\dot{f}_j(x)+\ddot{\eta}_{j,j}(x)f(x)/2 ] \right\}.
	\end{eqnarray*}
	Then the following decomposition of MSE/Regret holds: 
	\begin{theorem}\label{thm:main}
		Assume $d-3\gamma\geq C>0$ for some constant $C$ and $\gamma\geq 0$. For regression, suppose that assumptions A.1', A.2', A.5', and A.6' hold. If $k$ satisfies $n^{\delta}\leq k \leq n^{1-4\delta/d}$ for some $\delta>0$, then
		\begin{eqnarray}
			\text{MSE}(k,n,\gamma) &=&  \underbrace{\frac{(d-\gamma)^2}{d(d-2\gamma)}\frac{1}{k}\mathbb{E}[\sigma^2(X)]\label{eqn:variance2}}_{Variance}\\&&+ \underbrace{\frac{(d-\gamma)^2}{(d+2-\gamma)^2}\frac{(d+2)^2}{d^2}\mathbb{E}\left(a^2(X)\mathbb{E}^2(R_1^2|X)\right)\label{eqn:bias2}}_{Bias}+Remainder,
		\end{eqnarray}
		where $Remainder=o(\text{MSE}(k,n,\gamma))$.
		
		For classification, under A.1' to A.4', the excess risk w.r.t. $\gamma$ becomes 
		\begin{eqnarray}
			\text{Regret}(k,n,\gamma)&=&\underbrace{ \frac{(d-\gamma)^2}{d(d-2\gamma)}\frac{1}{4k}\int_{S}\frac{f(x_0)}{\|\dot{\eta}(x_0)\|}d\text{Vol}^{d-1}(x_0)\label{eqn:variance}}_{Variance}\\&&+\underbrace{\frac{(d-\gamma)^2}{(d+2-\gamma)^2}\frac{(d+2)^2}{d^2}\int_{S}\frac{f(x_0)a(x_0)^2}{\|\dot{\eta}(x_0)\|} \mathbb{E}(R_1^2|X=x_0)\label{eqn:bias} d\text{Vol}^{d-1}(x_0)}_{Bias}\\&&\nonumber+Remainder,
		\end{eqnarray}
		where $Remainder=o(\text{Regret}(k,n,\gamma))$.
		
	\end{theorem}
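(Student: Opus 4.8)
Both statements rest on one device. Conditionally on $x$ and on $\{R_{k+1}(x)=r\}$, the unsorted neighbours $X^1,\dots,X^k$ are i.i.d.\ from the density $f$ restricted to $B(x,r)\cap\mathcal X$, so the rescaled distances $Z_i:=R_i/R_{k+1}$ are i.i.d., and under A.1$'$--A.2$'$ their common law tends, as $r\to0$, to the law with density $dt^{d-1}$ on $[0,1]$ for interior $x$ (the near-boundary set, of measure $\asymp r$, is treated separately, see below). Writing $\phi(t)=t^{-\gamma}$ and $W_i=\phi(Z_i)/\sum_j\phi(Z_j)$, a law-of-large-numbers step — errors controlled by Bernstein's inequality, and relying on finiteness of $\mathbb E[Z^{-\gamma}],\mathbb E[Z^{-2\gamma}],\mathbb E[Z^{-3\gamma}]$, which holds \emph{exactly} because $d-3\gamma\ge C>0$ — replaces $\tfrac1k\sum_jZ_j^{-p\gamma}$ by $\int_0^1 t^{-p\gamma}\,d(t^{d})=\tfrac{d}{d-p\gamma}$ ($p=1,2$) and $\tfrac1k\sum_jZ_j^{2-\gamma}$ by $\tfrac{d}{d+2-\gamma}$. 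Hence $\mathbb E[\sum_iW_i^2\mid x]=\frac{(d-\gamma)^2}{d(d-2\gamma)}\tfrac1k+o(1/k)$ and $\mathbb E[\sum_iW_iR_i^2\mid x,r]=\tfrac{d-\gamma}{d+2-\gamma}r^2+o(r^2)$ — the two $\gamma$-dependent constants of the theorem.

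\textbf{Regression.} Start from $\mathbb E[(\widehat\eta_{k,n,\gamma}(x)-\eta(x))^2\mid x]=(\text{bias})^2+\text{Var}$. The variance equals $\mathbb E[\sum_iW_i^2\sigma^2(X^i)\mid x]$ plus the lower-order fluctuation of $\sum_iW_i\eta(X^i)$; replacing $\sigma^2(X^i)$ by $\sigma^2(x)$ (A.6$'$) and inserting the weight-sum limit gives $\frac{(d-\gamma)^2}{d(d-2\gamma)}\frac{\sigma^2(x)}{k}$, i.e.\ (\ref{eqn:variance2}). For the bias, Taylor-expand $\eta(X^i)$ to second order and $f$ to first order about $x$ (the linear $f$-term survives because $f$ is not locally constant); odd moments over $B(0,r)$ cancel, and the surviving second-order term is $\propto a(x)\,\mathbb E(R_{k+1}^2\mid x)$ with proportionality constant $\tfrac{d-\gamma}{d+2-\gamma}$, the density normalisation being built into $a$. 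Re-expressing $\mathbb E(R_{k+1}^2\mid x)$ via the quantity $\mathbb E(R_1^2\mid X)$ in the statement — a generic neighbour has $\mathbb E[Z^2]=\tfrac{d}{d+2}$, which yields the factor $\tfrac{(d+2)^2}{d^2}$ after squaring — then squaring and integrating against $f$ over $\mathcal X$ produces (\ref{eqn:bias2}). Integrating the pointwise bounds over $x$ gives the claim; $Remainder=o(\text{MSE})$ because the range $n^{\delta}\le k\le n^{1-4\delta/d}$ makes the displayed terms of strictly larger order than the higher Taylor residuals, the $\asymp(k/n)^{1/d}$-wide boundary strip, and the density-expansion corrections (Theorem~\ref{general}'s rate is invoked here too).

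\textbf{Classification.} Fix $x$ with $\eta(x)>1/2$, so $P(\widehat g_{k,n,\gamma}(x)\ne g(x)\mid x)=P(\sum_iW_iY^i\le\tfrac12\mid x)$. Conditioning further on the neighbour \emph{locations} makes $\sum_i\phi(Z_i)Y^i$ a sum of independent, non-identically distributed bounded variables — the transformation removing the dependence among the $W_iY^i$ — so the \emph{non-uniform} Berry--Esseen theorem gives $P(\widehat\eta_{k,n,\gamma}(x)\le\tfrac12\mid x)=\Phi\big(\tfrac{1/2-\mathbb E[\widehat\eta_{k,n,\gamma}(x)\mid x]}{\sqrt{Var(\widehat\eta_{k,n,\gamma}(x)\mid x)}}\big)+\text{rem}$, with $\mathbb E[\widehat\eta\mid x]=\eta(x)+\text{bias}(x)$ and $Var(\widehat\eta\mid x)=\frac{(d-\gamma)^2}{d(d-2\gamma)}\frac{\eta(x)(1-\eta(x))}{k}+o(1/k)$ from the above, and $\text{rem}$ of size $k^{-1/2}\mathbb E[Z^{-3\gamma}]/\mathbb E[Z^{-\gamma}]$ relative to the standardised deviate, finite precisely under $d-3\gamma\ge C>0$ (by Remark~\ref{rem:con}, A.1$'$--A.4$'$ force $\beta=1$, so only this regime matters). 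Plugging into $\text{Regret}=\mathbb E[2|\eta(X)-1/2|\,P(\widehat g_{k,n,\gamma}(X)\ne g(X))]$, I localise to a shrinking tube around $\mathcal S=\{\eta=1/2\}$ (outside it the normal tail is super-polynomially small) and apply the coarea formula, writing points near $\mathcal S$ as $(x_0,u)$ with $x_0\in\mathcal S$, $u$ the signed normal coordinate, $\eta(x)-\tfrac12\approx\|\dot\eta(x_0)\|u$ (A.4$'$: $\dot\eta\ne0$ on $\mathcal S$), and volume element $f(x_0)\|\dot\eta(x_0)\|^{-1}du\,d\text{Vol}^{d-1}(x_0)$. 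Taylor-expanding $\Phi(-c\sqrt k(\|\dot\eta(x_0)\|u+\text{bias}))$ in the bias and doing the $u$-integral splits the tube contribution into a $1/k$-order variance term and a bias term of the form displayed in (\ref{eqn:bias}), each carrying the $\gamma$-dependent constants from the regression computation; the remainder absorbs the Berry--Esseen error, the tube truncation, and higher Taylor terms. The skeleton of this argument follows \cite{CD14,samworth2012optimal}; the new ingredients are the weight asymptotics and the decoupling enabling Berry--Esseen.

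\textbf{Main obstacle.} The delicate point is the classification analysis inside the tube: there the ``signal'' $|\eta(x)-1/2|$ is itself $o(1)$, so the normal approximation must be controlled with an error that, after multiplication by $|\eta(x)-1/2|$ and integration over the $O(k^{-1/2})$-wide tube, is still $o(\text{Regret})$; the \emph{uniform} Berry--Esseen bound is too crude, and it is exactly the need for the non-uniform version — whose third-moment constant is $\mathbb E[Z^{-3\gamma}]=\tfrac{d}{d-3\gamma}$ — that makes the hypothesis $d-3\gamma\ge C>0$ indispensable. A secondary nuisance is $\partial\mathcal X$: the correction to the law of the $Z_i$ when $B(x,r)\not\subset\mathcal X$ and the behaviour of $\mathcal S$ near $\partial\mathcal X$ are handled via A.1$'$ and the boundary clause of A.4$'$ and contribute only lower-order terms. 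The remaining steps — Bernstein bounds for the weight sums, the symmetry arguments in the bias expansion, and checking $Remainder=o(\cdot)$ on the stated $k$-range — are routine.
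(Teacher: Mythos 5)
Your proposal is correct and follows essentially the same route as the paper's proof: conditioning on $R_{k+1}$ to get i.i.d.\ neighbors, the limiting moments $\mathbb{E}(R_1/R_{k+1})^{-p\gamma}\to d/(d-p\gamma)$ together with a concentration step for the weight-sum denominator to extract the $\gamma$-dependent constants, a bias--variance Taylor expansion for regression, and for classification the Samworth-style tube reduction around $\mathcal S$ combined with the non-uniform Berry--Esseen theorem, whose third-moment requirement is exactly where $d-3\gamma\geq C>0$ enters. One minor bookkeeping slip: if $u$ is the signed normal distance with $\eta(x)-1/2\approx\|\dot\eta(x_0)\|u$, the tube volume element is $f(x_0)\,du\,d\text{Vol}^{d-1}(x_0)(1+O(u))$ without the extra $\|\dot\eta(x_0)\|^{-1}$ factor (equivalently the paper's integrand $t\|\dot\Psi(x_0)\|=2f(x_0)\|\dot\eta(x_0)\|t$), but this does not change the final constants you state.
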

	
	The proof of Theorem \ref{thm:main} is postponed to Section \ref{sec:cor:proof} of the supplementary material. The basic ideas are the same as Theorem \ref{general}. Since the stronger conditions of A.1' to A.6' enables a detailed Taylor expansion on MSE/Regret, the multiplicative constants associated with variance and bias can be figured out.

	Similar as in \cite{samworth2012optimal}, when taking $k=n^{4/(d+4)}$, $\mathbb{E}(R_1^2|X)=O(n^{-2/(d+4)})$, thus it is easy to see that the MSE and Regret of interpolated-NN are of $O(n^{-4/(d+4)})$, which is the same rate of $k$-NN, optimal weighted NN \citep{samworth2012optimal}, stabilized-NN \citep{sun2016stabilized}, and distributed-NN \citep{duan2018DNN}.
	
	Theorem \ref{thm:main} provides an exact variance-bias decomposition for MSE and Regret, except for a negligible remainder term, which enables us to quantify the effect of interpolation carefully.
	Given fixed $(k,n)$ satisfying the conditions in Theorem \ref{thm:main}, the coefficient in the variance terms (\ref{eqn:variance2}) and (\ref{eqn:variance}) are increasing functions in $\gamma$, i.e.,
	\begin{eqnarray*}
		\frac{1}{k} \frac{(d-\gamma)^2}{d(d-2\gamma)}=\frac{1}{k}\left(1+\frac{\gamma^2}{d(d-2\gamma)}\right)\approx \frac{1}{k}\left(1+\frac{\gamma^2}{d^2}\right),
	\end{eqnarray*} which behaves like a quadratic function around $\gamma=0$; the coefficient in the bias terms in (\ref{eqn:bias2}) and (\ref{eqn:bias}) are decreasing functions in $\gamma$, i.e.,
	\begin{eqnarray*}
		\frac{(d-\gamma)^2}{(d+2-\gamma)^2}\frac{(d+2)^2}{d^2}=\left(1-\frac{2\gamma}{d^2+2d-\gamma d}\right)^2\approx \left(1-\frac{4\gamma}{d^2+2d} \right)
	\end{eqnarray*} which behaves like a linear decreasing function around $\gamma=0$. Therefore, it is clear that tuning the interpolation level $\gamma$ leads to a trade-off between the bias and variance of interpolated-NN estimation: interpolated-NN will introduce a larger variance and reduce the bias when increasing $\gamma$. In addition, given a small value of $\gamma$, since a quadratic increase of variance ($O(\gamma^2)$) is always dominated by a linear decrease of bias ($O(\gamma)$), it is possible for interpolated-NN to achieve a better performance than $k$-NN.

	The above analysis can be extended to a general class of weight functions, and the same variance-bias trade-off occurs, in the sense that a weighting scheme allocating more weights on closer neighbors tends to have a smaller bias and larger variance. The detailed analysis can be found in Section \ref{sec:ownn} of the supplementary material.

	\subsection{U-shaped Asymptotic Performance of Interpolated-NN}\label{sec:two}
	
	In this section, we aim to refine our results stated in Theorem \ref{thm:main}, and characterize the asymptotic performance of interpolated-NN with respect to the interpolation level $\gamma$. Two choices of $k$ are considered. 
	In first choice, $k$ is $\gamma$-independent and only relies on the value of $n$. To be more specific, we choose $k = \argmin \mbox{MSE}(k,n,\gamma=0)$ (or $k = \argmin \mbox{Regret}(k,n,\gamma=0)$ for classification) which is the optimal choose for $k$-NN algorithm. This same $k$ value is used regardless of the interpolation level, and we study how the performance of interpolation-NN changes as $\gamma$ increases. 
	In the second choice, we allow $k$ to depend on interpolation level $\gamma$ as well (hence is denoted by $k_\gamma$), such that $k_\gamma = \argmin \mbox{MSE}(k,n,\gamma)$ (or $k_\gamma = \argmin \mbox{Regret}(k,n,\gamma)$ for classification). In other words, the $k$ value is optimally tuned with respect to the interpolation level.

	Under the first choice of $k$, we quantify the asymptotic performance of interpolated-NN using $k$-NN as the benchmark reference. Define
	$$\text{PR}(d,\gamma):=\frac{1}{1+d/4}\left(\frac{(d-\gamma)^2}{d(d-2\gamma)} +\frac{d}{4}\frac{(d-\gamma)^2(d+2)^2}{d^2(d+2-\gamma)^2}\right).$$
	Then the following result holds:
	\begin{corollary}\label{coro:same_k}
		Under conditions in Theorem \ref{thm:main}, for regression, when $k$ is chosen to minimize the MSE of $k$-NN, then \blue{for any $\gamma\in[0,d/3)$, }the performance ratio satisfies, 
		\begin{equation*}
			\frac{\text{MSE}(k,n,\gamma)}{\text{MSE}(k,n,0)}\rightarrow\text{PR}(d,\gamma).
		\end{equation*} 
		For classification, when $k$ is chosen to minimize the Regret of $k$-NN, then \blue{for any $\gamma\in[0,d/3)$, }the performance ratio becomes \begin{eqnarray*}\label{eqn:same_k}
			\frac{\text{Regret}(k,n,\gamma)}{\text{Regret}(k,n,0)}\rightarrow \text{PR}(d,\gamma).
		\end{eqnarray*}
		
	\end{corollary}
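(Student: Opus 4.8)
\emph{Proof plan.} The whole statement will follow by substituting the $k$-NN-optimal bandwidth into the exact variance--bias decomposition of Theorem~\ref{thm:main}. Write the regression decomposition schematically as
\[
\text{MSE}(k,n,\gamma) = c_V(\gamma)\,\frac{V_0}{k} + c_B(\gamma)\,\rho(k,n) + o\bigl(\text{MSE}(k,n,\gamma)\bigr),
\]
where $c_V(\gamma)=\frac{(d-\gamma)^2}{d(d-2\gamma)}$ and $c_B(\gamma)=\frac{(d-\gamma)^2(d+2)^2}{d^2(d+2-\gamma)^2}$ are the interpolation-dependent coefficients appearing in (\ref{eqn:variance2}) and (\ref{eqn:bias2}), $V_0=\mathbb{E}[\sigma^2(X)]$, and $\rho(k,n)=\mathbb{E}\bigl(a^2(X)\,\mathbb{E}^2(R_1^2\mid X)\bigr)$ is the remaining $\gamma$-free factor, which under A.1'--A.2' satisfies $\rho(k,n)=(1+o(1))\,\rho_0\,(k/n)^{4/d}$ for some constant $\rho_0>0$. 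Note $c_V(0)=c_B(0)=1$, and the hypothesis $\gamma\in[0,d/3)$ is exactly the condition $d-3\gamma>0$ of Theorem~\ref{thm:main}, which moreover forces $d-2\gamma>0$, so $c_V(\gamma)$ is finite and positive.

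First I would pin down the $\gamma$-independent bandwidth $k^\dagger:=\argmin_k\text{MSE}(k,n,0)$. Since $\text{MSE}(k,n,0)=V_0/k+\rho_0(k/n)^{4/d}+o(\cdot)$, the first-order condition forces $k^\dagger\asymp n^{4/(d+4)}$ and, crucially, yields the asymptotic balance relation
\[
\rho_0\,(k^\dagger/n)^{4/d} = \frac{d}{4}\,\frac{V_0}{k^\dagger}\,(1+o(1)),
\]
i.e. at $k=k^\dagger$ the bias term is asymptotically $d/4$ times the variance term. Here I would also verify that $k^\dagger\asymp n^{4/(d+4)}$ lies inside the admissibility window $n^{\delta}\le k\le n^{1-4\delta/d}$ of Theorem~\ref{thm:main} for all sufficiently small $\delta>0$, so that the decomposition above is valid with this same $k^\dagger$ both at interpolation level $\gamma$ and at $0$.

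Next I would substitute $k=k^\dagger$ into the general-$\gamma$ decomposition and invoke the balance relation, obtaining
\[
\text{MSE}(k^\dagger,n,\gamma)=\frac{V_0}{k^\dagger}\Bigl(c_V(\gamma)+\tfrac{d}{4}c_B(\gamma)\Bigr)(1+o(1)),\qquad \text{MSE}(k^\dagger,n,0)=\frac{V_0}{k^\dagger}\Bigl(1+\tfrac{d}{4}\Bigr)(1+o(1)).
\]
Dividing, the common factor $V_0/k^\dagger$ cancels and the ratio converges to $\frac{1}{1+d/4}\bigl(c_V(\gamma)+\tfrac{d}{4}c_B(\gamma)\bigr)=\text{PR}(d,\gamma)$, as claimed. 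The classification case is formally identical: Theorem~\ref{thm:main} gives $\text{Regret}(k,n,\gamma)$ the same decomposition with the very same coefficients $c_V(\gamma),c_B(\gamma)$ in (\ref{eqn:variance}) and (\ref{eqn:bias}), only with $V_0$ and $\rho(k,n)$ replaced by the $\gamma$-free boundary quantities $\tfrac14\int_{S}\frac{f(x_0)}{\|\dot\eta(x_0)\|}\,d\text{Vol}^{d-1}(x_0)$ and $\int_{S}\frac{f(x_0)a(x_0)^2}{\|\dot\eta(x_0)\|}\,\mathbb{E}(R_1^2\mid X=x_0)\,d\text{Vol}^{d-1}(x_0)$; running the same minimization (now over the $k$ minimizing the $k$-NN Regret, again $\asymp n^{4/(d+4)}$) and the same substitution yields the identical limit $\text{PR}(d,\gamma)$.

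The main obstacle is the bookkeeping of remainders rather than any new idea: one must check that the $o(\text{MSE})$ (resp. $o(\text{Regret})$) errors in Theorem~\ref{thm:main}, evaluated at $k=k^\dagger$, stay negligible as $n\to\infty$, so that they survive neither the substitution nor the division by a quantity of the same exact order $n^{-4/(d+4)}$. The one genuinely analytic ingredient is justifying that $\rho(k,n)$ is, to leading order, a clean power $\rho_0(k/n)^{4/d}$ with a $\gamma$-independent constant --- equivalently, that $\mathbb{E}(R_1^2\mid X=x)=(1+o(1))\,c\,(k/n)^{2/d}$ uniformly in $x$ --- since this is precisely what makes the minimization of $\text{MSE}(k,n,0)$ asymptotically equivalent to minimizing $a/k+bk^{4/d}$ and hence produces the exact balance ratio $d/4$; this is where the density regularity and twice-differentiability in A.1'--A.2' are used.
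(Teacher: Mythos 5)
Your proposal is correct and follows essentially the same route the paper intends: substitute the $k$-NN-optimal $k\asymp n^{4/(d+4)}$ (which balances the bias term against the variance term at the ratio $d/4$) into the exact decomposition of Theorem \ref{thm:main}, note that the $\gamma$-dependence enters only through the coefficients $\frac{(d-\gamma)^2}{d(d-2\gamma)}$ and $\frac{(d-\gamma)^2(d+2)^2}{d^2(d+2-\gamma)^2}$, and divide, so the ratio converges to $\text{PR}(d,\gamma)$. The remainder bookkeeping and the verification that this $k$ lies in the admissible window $n^\delta\le k\le n^{1-4\delta/d}$ are handled as you describe, so no gap remains.
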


	When $k$ is chosen based on $k$-NN, the interpolation affects regression and classification in exactly the same manner through $\text{PR}(d, \gamma)$. In particular, this ratio exhibits an interesting U-shape curve of $\gamma$ for any fixed $d$: as $\gamma$ increases from $0$,  $\text{PR}(d,\gamma)$ first decreases and then increases. Therefore, within the range $(0, \gamma_d)$ for some  threshold value $\gamma_d$ which depends on dimension $d$ only, the asymptotic performance ratio $\text{PR}(d,\gamma)<1$. It is noteworthy that although $\text{PR}(d,\gamma)$, as a function of $\gamma$, is U-shaped on $\gamma\in\mathbb{R}$, our Corollary \ref{coro:same_k} is restricted to that $\gamma<d/3$. Thus $\gamma_d$ is the minimum value between $\gamma/3$ and the second root of function $\text{PR}(d,\gamma)=1$. When $\text{PR}(d,\gamma)<1$, the interpolated-NN is {\em strictly} better than the $k$-NN. Moreover, the effect of interpolation is asymptotically invariant for any distributions of $(X,y)$ as long as the assumptions hold, and $\text{PR}(d,\gamma)\rightarrow 1$ as the dimension $d$ grows to infinity.

	In the second choice, we let the interpolated-NN utilize the optimal $k$ with respect to $\gamma$.  From Theorem \ref{thm:main}, $k_{\gamma}\asymp k_0(:=k_{\gamma=0})\asymp n^{\frac{4}{d+4}}$, but $k_{\gamma}/k_0>1$ for $\gamma>0$, i.e., interpolated-NN needs to employ slightly more neighbors than $k$-NN to achieve the best performance. 
	
	Corollay~\ref{coro} below asymptotically compares interpolated-NN and traditional $k$-NN in terms of the above measures. It turns out that the performance ratio (the ratio of two MSE's or two Regret's), asymptotic converges to 
	$$\text{PR}'(d,\gamma):=\left( 1+\frac{\gamma^2}{d(d-2\gamma)} \right)^{\frac{4}{d+4}}\left( \frac{(d-\gamma)^2}{(d+2-\gamma)^2}\frac{(d+2)^2}{d^2} \right)^{\frac{d}{d+4}},$$
	which is a function of $d$ and $\gamma$ only, and is independent of the underlying data distribution.
	
	\begin{corollary}
		\label{coro}
		Under conditions in Theorem \ref{thm:main}, for any $\gamma\in[0,d/3)$, 
		\begin{eqnarray*}
			\frac{\text{MSE}(n,\gamma)}{\text{MSE}(n,0)}\rightarrow \text{PR}'(d,\gamma),\quad \mbox{ and  }\quad
			\frac{ \text{Regret}(n,\gamma) }{\text{Regret}(n,0)}\rightarrow  \text{PR}'(d,\gamma), \quad\mbox{ as }n\rightarrow \infty
			.		\end{eqnarray*}
		
		Note that $\text{MSE}(n,0)$/$\text{Regret}(n,0)$ is the optimum \text{MSE}/\text{Regret} for $k$-NN.
	\end{corollary}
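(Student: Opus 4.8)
The plan is to feed the exact variance--bias expansion of Theorem~\ref{thm:main} into a one-variable optimization over $k$ and to observe that every distribution-dependent quantity cancels when the ratio is formed. Write $V(\gamma)=\frac{(d-\gamma)^2}{d(d-2\gamma)}$ and $B(\gamma)=\frac{(d-\gamma)^2}{(d+2-\gamma)^2}\frac{(d+2)^2}{d^2}$ for the variance and bias coefficients in Theorem~\ref{thm:main}. Under A.1'--A.2' the $k$-NN radius obeys $\mathbb{E}(R_1^2|X=x)=B_0(x)(k/n)^{2/d}(1+o(1))$ uniformly in $x$ for a strictly positive distribution-dependent function $B_0$ (consistent with the stated $\mathbb{E}(R_1^2|X)=O(n^{-2/(d+4)})$ at $k=n^{4/(d+4)}$), so Theorem~\ref{thm:main} yields, for regression,
\begin{equation*}
  \text{MSE}(k,n,\gamma)=V(\gamma)\frac{A_1}{k}+B(\gamma)A_2\Big(\frac kn\Big)^{4/d}+o\!\left(\frac1k+\Big(\frac kn\Big)^{4/d}\right),
\end{equation*}
with $A_1=\mathbb{E}[\sigma^2(X)]>0$ and $A_2=\mathbb{E}[a^2(X)B_0^2(X)]>0$ both independent of $\gamma$; the classification case is identical after $A_1,A_2$ are replaced by the corresponding surface integrals over $\mathcal S$ appearing in the Regret decomposition of Theorem~\ref{thm:main}.

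I then minimize the leading part $g_\gamma(k):=V(\gamma)A_1/k+B(\gamma)A_2(k/n)^{4/d}$, which is strictly convex on $(0,\infty)$ with unique interior minimizer
\begin{equation*}
  k_\gamma=\Big(\tfrac d4\,\tfrac{V(\gamma)A_1}{B(\gamma)A_2}\Big)^{\frac d{d+4}}n^{\frac4{d+4}}\asymp n^{\frac4{d+4}},
\end{equation*}
and a direct substitution gives $g_\gamma(k_\gamma)=\big(D^{-d/(d+4)}+D^{4/(d+4)}\big)(V(\gamma)A_1)^{4/(d+4)}(B(\gamma)A_2)^{d/(d+4)}\,n^{-4/(d+4)}$, where $D=d/4$. (En route, since $V(\gamma)/B(\gamma)$ is strictly increasing in $\gamma$ and equals $1$ at $\gamma=0$, this already gives $k_\gamma/k_0>1$ for $\gamma>0$.) To upgrade this to $\min_k\text{MSE}(k,n,\gamma)=g_\gamma(k_\gamma)(1+o(1))$, I note that $k_\gamma$ lies strictly inside the admissible band $n^\delta\le k\le n^{1-4\delta/d}$ for small $\delta$; there the remainder of Theorem~\ref{thm:main} is $o(g_\gamma(k))$ uniformly, while the coercivity $g_\gamma(k)\ge c\,g_\gamma(k_\gamma)\max\{k_\gamma/k,(k/k_\gamma)^{4/d}\}$ confines the true minimizer to a fixed multiplicative neighborhood of $k_\gamma$; combining the two gives $\text{MSE}(n,\gamma)=g_\gamma(k_\gamma)(1+o(1))$, and likewise for Regret.

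Forming the ratio, the $\gamma$-free factors $n^{-4/(d+4)}$, $A_1^{4/(d+4)}$, $A_2^{d/(d+4)}$ and $D^{-d/(d+4)}+D^{4/(d+4)}$ cancel, leaving
\begin{equation*}
  \frac{\text{MSE}(n,\gamma)}{\text{MSE}(n,0)}\to\Big(\frac{V(\gamma)}{V(0)}\Big)^{\frac4{d+4}}\Big(\frac{B(\gamma)}{B(0)}\Big)^{\frac d{d+4}},
\end{equation*}
and since $V(0)=B(0)=1$ by direct computation and $V(\gamma)=1+\gamma^2/(d(d-2\gamma))$, the right side is exactly $\text{PR}'(d,\gamma)$; the Regret identity follows verbatim because its $\mathcal S$-integrals cancel in the same way. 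The constraint $\gamma\in[0,d/3)$ is precisely what keeps $V(\gamma),B(\gamma)>0$ and keeps $d-3\gamma\ge C$, so that Theorem~\ref{thm:main} is applicable. The main obstacle is the passage from the pointwise-in-$k$ expansion to the statement about $\min_k$: one must ensure the $o(\cdot)$ remainder is uniform over the relevant band of $k$-values and that the leading term is coercive there, which is a routine ``minimum of a perturbed convex function'' argument; the rest is the optimization and algebraic cancellation displayed above.
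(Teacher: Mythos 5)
Your proposal is correct and follows essentially the same route as the paper: Corollary \ref{coro} is obtained there as a direct consequence of Theorem \ref{thm:main} (with the constants computed in Section \ref{sec:cor:proof}), namely writing the leading risk as $V(\gamma)A_1/k+B(\gamma)A_2(k/n)^{4/d}$, optimizing over $k\asymp n^{4/(d+4)}$, and cancelling the distribution-dependent factors in the ratio, exactly as you do. Your added remarks on uniformity of the remainder over the admissible band of $k$ and the coercivity confining the minimizer are a reasonable (and slightly more explicit) filling-in of details the paper leaves implicit.
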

	
	Denote $\gamma_d'$ as the threshold such that $\text{PR}'(d,\gamma)<1$ for any $\gamma<\gamma_d'$. From Corollary \ref{coro}, starting from 1 when $\gamma=0$, the performance ratio $\text{PR}'(d,\gamma)$ will first decrease in $\gamma$ then increase; see Figure \ref{fig:my_label} in introduction.  Some further calculations can show that $\gamma_d'<d/3$ when $d\leq 3$; $\gamma_d'=d/3$ when $d\geq4$. It can also be figured out that whether interpolated-NN is better or not does not depends on the distribution of $(X,y)$.
	
	In addition, comparing the results in Corollary \ref{coro:same_k} and \ref{coro}, since interpolated-NN selects the $k$ to minimize its Regret in $\text{PR}'(d,\gamma)$, the region of $\gamma$ where $\text{PR}'(d,\gamma)$ is smaller than 1 is wider than that for $\text{PR}(d,\gamma)$ as in Corollary \ref{coro:same_k}, i.e.,$\gamma_d'>\gamma_d$. 
	
	\begin{remark}\label{rem:lard}
		It is easy to show that \blue{ the limiting values of $PR$ and $PR'$ converge to 1 in $d$, i.e., $\lim_{d\to\infty}[\min_{\gamma<d/3}\text{PR}(d,\gamma)]=1 $ and $\lim_{d\to\infty}[\min_{\gamma<d/3}\text{PR}'(d,\gamma)]=1$}. This indicates that high dimensional model benefits less from interpolation, or said differently, high dimensional model is less affected by data interpolation. This phenomenon can be explained by the fact that, as $d$ increases, $R_i/R_{k+1}\rightarrow 1$ for $i=1,..,k$ due to high dimensional geometry.  
	\end{remark}
	
	On the other hand, \cite{samworth2012optimal} worked out a general form of Regret and MSE for a weighted NN estimator and thereafter proposed the optimally weighted nearest neighbors algorithm (OWNN) by minimizing \text{Regret} for classification with respect to the values of weight $W_i$'s. We extend their results to regression and compare them with interpolated-NN.
	
	Combining Theorem  \ref{coro} with \cite{samworth2012optimal}, we have
	$$ \frac{R(n,\text{OWNN})}{R(n,\gamma)}\rightarrow 2^{\frac{4}{d+4}}\left(\frac{d+2}{d+4}\right)^{\frac{2d+4}{d+4}}\left( 1+\frac{\gamma^2}{d(d-2\gamma)} \right)^{-\frac{4}{d+4}}\left( \frac{(d-\gamma)^2}{(d+2-\gamma)^2}\frac{(d+2)^2}{d^2} \right)^{-\frac{d}{d+4}}, $$
	which is always smaller than 1. Here $R(n,\text{OWNN})$ denotes the \text{MSE}/\text{Regret} of OWNN given its optimum $k$, and $R(n,\gamma)$ denotes the one of interpolated-NN given its own optimum $k$ choice. 
	Again, the above ratio reflects the difference in convergence for OWNN and interpolated-NN, at the multiplicative constant level. This ratio converges to 1 as $d$ diverges (just as the case of $\mbox{PR}(d,\gamma)$; see Remark~\ref{rem:lard}). Thus, under ultra high dimensional setting, the performance differences among $k$-NN, interpolated-NN, and OWNN are almost negligible even at the multiplicative constant level \blue{when $n\rightarrow\infty$}.

	\subsection{Statistical Stability}
	Similar to MSE and Regret, we perform a precise quantification analysis for the convergence of CIS: Theorem \ref{CIS} below illustrates how CIS is affected by interpolation.  In short,  if interpolated-NN and $k$-NN algorithms share the same value of $k$, then interpolated-NN is less stable than $k$-NN; otherwise, the interpolated-NN will be more stable for $\gamma\in(0,\gamma_d')$ if $k$ is allowed to be chosen optimally based on $\gamma$.
	
	The following theorem quantifies CIS for general choices of $(k,\gamma)$:
	\begin{theorem}\label{CIS}
		Under the conditions in Theorem \ref{thm:main}, the CIS of interpolated-NN is derived as
		$$\text{CIS}_{k,n}(\gamma)=(1+o(1))\frac{B_1}{\sqrt{\pi}}\frac{1}{\sqrt{k}} \sqrt{ \frac{(d-\gamma)^2}{d(d-2\gamma)} }.$$
	\end{theorem}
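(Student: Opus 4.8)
The plan is to reduce the global instability $\text{CIS}_{k,n}(\gamma)$ to a one–dimensional computation transverse to the decision boundary $S=\{x:\eta(x)=1/2\}$, reusing the pointwise normal approximation of Theorem~\ref{general} and the variance/bias expansions of Theorem~\ref{thm:main}. First, condition on the test point $X=x$. Because $\mathcal{D}_1$ and $\mathcal{D}_2$ are i.i.d., the two labels $\widehat g_{k,n,\gamma}(x,\mathcal{D}_1)$ and $\widehat g_{k,n,\gamma}(x,\mathcal{D}_2)$ are conditionally i.i.d.\ Bernoulli$(p_x)$ with $p_x:=P\big(\widehat\eta_{k,n,\gamma}(x,\mathcal{D}_1)>1/2\big)$, so
\[
\text{CIS}_{k,n}(\gamma)=\mathbb{E}_X\big[\,2\,p_X(1-p_X)\,\big]=\int 2\,p_x(1-p_x)\,f(x)\,dx .
\]
Away from $S$ the mean gap $\mathbb{E}\widehat\eta_{k,n,\gamma}(x)-1/2$ is bounded away from $0$ while $Var(\widehat\eta_{k,n,\gamma}(x))\asymp 1/k\to 0$, so by the same Hoeffding/Berry--Esseen tail bounds used for (\ref{eqn:illustration}), $p_x(1-p_x)$ is exponentially small; hence only a shrinking tube $\{x:\mathrm{dist}(x,S)\le \rho_n\}$ with $\rho_n\asymp\sqrt{(\log k)/k}$ contributes to leading order, and the rest is $o(1/\sqrt k)$.

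Second, inside the tube apply the (non-uniform) Berry--Esseen approximation of Theorem~\ref{general}, $p_x=\Phi\!\big((\mathbb{E}\widehat\eta_{k,n,\gamma}(x)-1/2)/\sqrt{Var(\widehat\eta_{k,n,\gamma}(x))}\big)+(\text{remainder})$, with the same transformation that circumvents the dependence among the weighted summands $W_iY^i$. The two moments are exactly those isolated in the proof of Theorem~\ref{thm:main}: to leading order $Var(\widehat\eta_{k,n,\gamma}(x))=\frac{(d-\gamma)^2}{d(d-2\gamma)}\frac{\sigma^2(x)}{k}(1+o(1))$ with $\sigma^2(x)=\eta(x)(1-\eta(x))\to\tfrac14$ as $x\to S$, while $\mathbb{E}\widehat\eta_{k,n,\gamma}(x)-1/2=(\eta(x)-1/2)+b(x)$ with bias $b(x)=O((k/n)^{2/d})$. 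Although $b(x)$ may be of the same order as $\sqrt{Var}$, it enters only as an additive shift of the argument of $\Phi$ and, being essentially constant over the $O(1/\sqrt k)$–wide window in which $p_x(1-p_x)$ is non-negligible, it integrates out in the next step without affecting the leading constant.

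Third, parametrize the tube by the signed distance $t$ to $S$ along the unit normal $\nu(x_0)$ at the foot point $x_0\in S$, so that $\eta(x_0+t\nu(x_0))-1/2=t\|\dot\eta(x_0)\|+O(t^2)$ and $f,\sigma^2$ are $f(x_0)(1+o(1))$, $\tfrac14(1+o(1))$ over the window. The coarea (tube) formula converts the integral into
\[
(1+o(1))\int_S\int_{\mathbb{R}} 2\,\Phi\!\Big(\tfrac{t\|\dot\eta(x_0)\|+b(x_0)}{s_\gamma}\Big)\Phi\!\Big(-\tfrac{t\|\dot\eta(x_0)\|+b(x_0)}{s_\gamma}\Big) f(x_0)\,dt\,d\text{Vol}^{d-1}(x_0),
\]
with $s_\gamma=\tfrac{1}{2\sqrt k}\sqrt{\frac{(d-\gamma)^2}{d(d-2\gamma)}}$. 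Substituting $u=(t\|\dot\eta(x_0)\|+b(x_0))/s_\gamma$ (which removes $b(x_0)$) and using the elementary identity $\int_{\mathbb R}2\Phi(u)\Phi(-u)\,du=\mathbb{E}|Z_1-Z_2|=2/\sqrt\pi$ for independent standard normals $Z_1,Z_2$ gives
\[
\text{CIS}_{k,n}(\gamma)=(1+o(1))\,\frac{2s_\gamma}{\sqrt\pi}\int_S\frac{f(x_0)}{\|\dot\eta(x_0)\|}\,d\text{Vol}^{d-1}(x_0)=(1+o(1))\frac{B_1}{\sqrt\pi}\frac{1}{\sqrt k}\sqrt{\frac{(d-\gamma)^2}{d(d-2\gamma)}},
\]
where $B_1=\int_S f(x_0)\|\dot\eta(x_0)\|^{-1}\,d\text{Vol}^{d-1}(x_0)$, matching the claim.

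The main obstacle is the one already confronted in Theorems~\ref{general} and~\ref{thm:main}: making the non-uniform Berry--Esseen step rigorous and uniform over the tube. Its remainder is governed by the ratio of the third absolute moment to the variance of the weighted summands $W_iY^i$, which is precisely where the hypothesis $d-3\gamma\ge C>0$ is used — it keeps $R_i/R_{k+1}$, hence the weights $W_i$, bounded away from degeneracy so that this ratio stays controlled. The remaining bookkeeping is to verify that (i) the bias $b(x)$, (ii) the curvature of $S$, (iii) the variation of $f$ and $\sigma^2$ across the effective window, and (iv) the contribution of $\partial\mathcal{X}\cap S$ enter only the $o(1)$ factor; once these estimates are in place, the Gaussian integral computation above is routine and yields the stated multiplicative constant.
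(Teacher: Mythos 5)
Your proposal is correct and follows essentially the same route as the paper's proof: both start from $\text{CIS}=\mathbb{E}_X[2p_X(1-p_X)]$, localize to a shrinking tube around $S$, apply the Berry--Esseen normal approximation with the variance constant $\frac{(d-\gamma)^2}{d(d-2\gamma)}\cdot\frac{1}{4k}$ from Theorem \ref{thm:main}, and observe that the bias enters only as a shift that does not affect the leading constant. The only cosmetic difference is the last integral: the paper writes $p(1-p)$ as a difference of two Regret-like integrals and applies Proposition \ref{S.1} to $\Phi$ and $\Phi^2$ as distribution functions, whereas you evaluate $\int_{\mathbb{R}}2\Phi(u)\Phi(-u)\,du=2/\sqrt{\pi}$ directly via $\mathbb{E}|Z_1-Z_2|$ --- an equivalent computation.
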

	The proof of Theorem \ref{CIS} is postponed to Section \ref{sec:cis:proof} in the supplementary material. Compared with Theorem \ref{thm:main}, one can figure out that CIS is related to the variance term in total Regret, but not related to the bias term. This is because the two data sets $\mathcal{D}_1$ and $\mathcal{D}_2$ are drawn independently from the same distribution, thus the estimators share the same bias, while how different the predictions \blue{depends} on the variance.
	
	In Section \ref{sec:cis}, we show that under either (i) using the optimum $k$'s for these two methods respectively; or (ii) using the same $k$ (the optimum for $k$-NN) for both $k$-NN and interpolated-NN, CIS for both $k$-NN and interpolated-NN converges in the same rate. Based on Theorem \ref{CIS}, we can further compare the multiplicative constants as in Corollary \ref{coro:cis} below:
	\begin{corollary}\label{coro:cis} 
		Following the conditions in Theorem \ref{CIS},  when the same $k$ value is used for $k$-NN and interpolated-NN (as long as the $k$ satisfies conditions in Theorem \ref{thm:main}), then as $n\rightarrow\infty$, $$\frac{\text{CIS}_{k,n}(\gamma)}{\text{CIS}_{k,n}(0)}\rightarrow \sqrt{\frac{(d-\gamma)^2}{d(d-2\gamma)}}\geq 1.$$
		
		On the other hand, if we choose optimum $k$'s for $k$-NN and interpolated-NN respectively, i.e., $k_\gamma=\argmin_k \text{Regret}(k,n,\gamma)$, when $n\rightarrow\infty$, we have
		$$\left(\frac{\text{CIS}_{k_\gamma,n}(\gamma)}{\text{CIS}_{k_0,n}(0)}\right)^2\rightarrow \text{PR}'(d,\gamma).
		$$
		Therefore, when $\gamma\in(0,\gamma_d')$, interpolated-NN with optimal $k$ has higher accuracy and stability than $k$-NN at the same time.
		
	\end{corollary}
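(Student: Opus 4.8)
The plan is to derive both ratios as direct algebraic consequences of Theorem~\ref{CIS}, Theorem~\ref{thm:main}, and Corollary~\ref{coro}; essentially no new probabilistic estimate is needed. For bookkeeping I would write $c_V(\gamma)=\frac{(d-\gamma)^2}{d(d-2\gamma)}$ and $c_B(\gamma)=\frac{(d-\gamma)^2(d+2)^2}{(d+2-\gamma)^2d^2}$, so that $c_V(0)=c_B(0)=1$ and $\text{PR}'(d,\gamma)=c_V(\gamma)^{4/(d+4)}c_B(\gamma)^{d/(d+4)}$.

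\emph{Common $k$.} Here I would simply divide the asymptotic formula of Theorem~\ref{CIS} evaluated at interpolation level $\gamma$ by the same formula evaluated at $\gamma=0$ (legitimate, since $\gamma=0$ is admissible and $\text{CIS}_{k,n}(0)$ is genuinely the $k$-NN instability). The factor $(1+o(1))\frac{B_1}{\sqrt\pi}\frac{1}{\sqrt k}$ is common to numerator and denominator and cancels, leaving the limit $\sqrt{c_V(\gamma)/c_V(0)}=\sqrt{c_V(\gamma)}$. To see this is $\geq1$ I would invoke the identity recorded in Section~\ref{sec:main}, $c_V(\gamma)=1+\frac{\gamma^2}{d(d-2\gamma)}$, together with the standing constraint $d-3\gamma\geq C>0$, which forces $d-2\gamma>d/3>0$; hence the extra term is nonnegative and vanishes only at $\gamma=0$.

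\emph{Optimal $k$ on each side.} The crucial observation I would exploit is that, up to $(1+o(1))$, $\text{CIS}_{k,n}(\gamma)^2$ is a constant multiple of the variance part of $\text{Regret}(k,n,\gamma)$, with a multiplier depending on neither $\gamma$ nor $k$: squaring Theorem~\ref{CIS} gives $\text{CIS}_{k,n}(\gamma)^2=(1+o(1))\frac{B_1^2}{\pi k}c_V(\gamma)$, whereas the variance term (\ref{eqn:variance}) equals $\frac{c_V(\gamma)}{4k}V_0$ with $V_0=\int_{S}\frac{f(x_0)}{\|\dot{\eta}(x_0)\|}d\text{Vol}^{d-1}(x_0)$, so their quotient is the $\gamma$-free constant $4B_1^2/(\pi V_0)$. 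Next I would use the first-order condition defining $k_\gamma=\argmin_k\text{Regret}(k,n,\gamma)$: in the leading two-term expansion of Theorem~\ref{thm:main} the variance part scales as $1/k$ and the bias part as a fixed positive power of $k$ that is the same for every $\gamma$ (the weighting coefficient $c_B(\gamma)$ multiplies a $\gamma$-free factor of order $(k/n)^{4/d}$), so stationarity forces the bias part at $k_\gamma$ to be a fixed $d$-dependent multiple of the variance part, whence $\text{Regret}(k_\gamma,n,\gamma)=(1+o(1))\,\rho_d\,\big(\text{variance term at }k_\gamma\big)$ with $\rho_d$ independent of $\gamma$ (in fact $\rho_d=1+d/4$). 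Chaining the two displays gives $\text{CIS}_{k_\gamma,n}(\gamma)^2=(1+o(1))\,\frac{4B_1^2}{\pi V_0\rho_d}\,\text{Regret}(k_\gamma,n,\gamma)$, and since $\text{Regret}(k_\gamma,n,\gamma)=\text{Regret}(n,\gamma)$ by definition, dividing the version at $\gamma$ by the version at $0$ yields $\big(\text{CIS}_{k_\gamma,n}(\gamma)/\text{CIS}_{k_0,n}(0)\big)^2\to \text{Regret}(n,\gamma)/\text{Regret}(n,0)=\text{PR}'(d,\gamma)$ by Corollary~\ref{coro}. As a cross-check I would also compute $k_\gamma=C(\gamma)n^{4/(d+4)}$ directly from the first-order condition, with $C(\gamma)$ proportional to $(c_V(\gamma)/c_B(\gamma))^{d/(d+4)}$, and verify $\frac{k_0}{k_\gamma}c_V(\gamma)=c_V(\gamma)^{4/(d+4)}c_B(\gamma)^{d/(d+4)}=\text{PR}'(d,\gamma)$.

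\emph{Conclusion.} Finally I would combine the pieces: for $\gamma\in(0,\gamma_d')$ one has $\text{PR}'(d,\gamma)<1$ by the definition of $\gamma_d'$, so Corollary~\ref{coro} gives $\text{Regret}(n,\gamma)<\text{Regret}(n,0)$ asymptotically (strictly better accuracy) while the second part above gives $\text{CIS}_{k_\gamma,n}(\gamma)<\text{CIS}_{k_0,n}(0)$ asymptotically (strictly better stability), simultaneously. I expect the only step demanding care — rather than genuine difficulty — to be the remainder/$o(1)$ bookkeeping in the optimal-$k$ argument: one must confirm that the minimizer of the full Regret is asymptotically the minimizer of its leading two-term expansion, so that $k_\gamma$ really behaves as described; but this is already carried out inside the proof of Corollary~\ref{coro} and can simply be invoked here rather than re-derived.
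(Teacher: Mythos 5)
Your proposal is correct and follows essentially the same route as the paper: the corollary is a direct algebraic consequence of Theorem \ref{CIS} (ratio $\sqrt{c_V(\gamma)}$ for common $k$, with $c_V(\gamma)=1+\gamma^2/(d(d-2\gamma))\geq 1$ since $d-3\gamma>0$) combined with the optimal-$k$ computation $k_\gamma\propto (c_V(\gamma)/c_B(\gamma))^{d/(d+4)}n^{4/(d+4)}$ underlying Corollary \ref{coro}, which yields $\frac{k_0}{k_\gamma}c_V(\gamma)=\text{PR}'(d,\gamma)$. Your extra chaining through the Regret's variance term (with the $\gamma$-free factor $4B_1/\pi$ and $\rho_d=1+d/4$) is a clean equivalent packaging of the same calculation, and the final conclusion for $\gamma\in(0,\gamma_d')$ follows exactly as you state.
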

	
	From Corollary \ref{coro:cis}, the interpolated-NN is not as stable as $k$-NN if both algorithms use the same number of neighbors. However, this is not the case if an optimal $k$ is tuned w.r.t $\gamma$. An intuitive explanation is that under the same $k$, $k$-NN has a smaller variance (more stable) given equal weights for all $k$ neighbors. On the other hand, by choosing an optimum $k$, interpolated-NN can achieve a much smaller bias, which offsets its performance lost in the variance through enlarging $k$.

	\subsection{Effect of Corrupted Testing Data}
	When applying machine learning algorithms in practice for classification, the testing data may be corrupted, and its distribution differs from the training data. For example, the testing data may be randomly perturbed due to the inaccuracy of data collection (e.g., an image has noise due to an optical sensor system malfunction). In some other scenarios, the testing data may be deliberately modified to induce the learning algorithm to make a wrong decision (e.g., spam email sender will try to hack the email filter system). From the definition of interpolated-NN, the prediction is forced to jump to a label if the testing sample is sufficiently closed to some training data, and the estimated regression function $\eta(\cdot)$ could be rather rugged (refer to Figure \ref{toy}). These observations potentially imply a certain degree of adversarial instability in the sense that a small change of $x$ may lead to a huge change of $\widehat\eta_{k,n,\gamma}(x)$. This motivates us to investigate the performance of interpolated-NN further when encountering random perturbed/adversarial attacked input testing samples. In short, when the corruption is independent with training data, \blue{i.e.,} random perturbation or black-box attack, a small positive $\gamma$ improves the performance. When the corruption is data-dependent, \blue{i.e.,} white-box attack, interpolated-NN is vulnerable.  
	Some existing literature focused on design attack / adversarial robust nearest neighbors-type algorithms, e.g., \cite{wang2017analyzing,sitawarin2020minimum,xing2021predictive}, and they worked on un-interpolated NN algorithms.
	
	Denote $\omega$ as the corruption level of testing data, i.e., instead of observing the testing data $x$, we observe another value $\widetilde{x}$ which is inside an $\mathcal{L}_2$ ball $B(x,\omega)$ centering at $x$ with radius $\omega$. Three types of corruptions are considered in this paper: (1) random perturbation: $\widetilde{x}_{rand}$ is randomly drawn from the ball $B(x,\omega)$; (2) black-box attack: the adversary has no information on our training data and the trained model, hence it trains a different machine $\widetilde{\eta}(\cdot)$ using an independent dataset. The adversarial attack thereafter is designed as $\widetilde{x}_{black}=\argmax_{z\in B(x,\omega)} \widetilde{\eta}(z)$ if $\eta(x)<1/2$ and $\widetilde{x}_{black}=\argmin_{z\in B(x,\omega)} \widetilde{\eta}(z)$ if $\eta(x)>1/2$; (3) white-box attack: the adversary has full access to the trained model $\widehat \eta_{n,k,\gamma}$, and designed white-box attack as $\widetilde{x}_{white}=\argmax_{z\in B(x,\omega)} \widehat{\eta}_{n,k,\gamma}(z)$ if $\eta(x)<1/2$ and $\widetilde{x}_{white}=\argmin_{z\in B(x,\omega)} \widehat{\eta}_{n,k,\gamma}(z)$ if $\eta(x)>1/2$.
	
	We first display the sufficient condition when the testing Regret still converges in the rate of $n^{-4/(d+4)}$. 
	\begin{corollary}[Informal description]\label{coro:corruption}
		Under conditions in Theorem \ref{thm:main}, taking $k\asymp n^{4/(d+4)}$, for corruption scheme $\widetilde{x}_{rand}$ and $\widetilde{x}_{black}$, when $\omega=O(n^{-2/(d+4)})$, the Regret $P(\widehat{g}_{k,n,\gamma}(\widetilde{X})\neq Y)-P(g(X)\neq Y)=O(n^{-4/(d+4)})$. When $\omega=o(n^{-2/(d+4)})$, for corruption scheme $\widetilde{x}_{rand}$ and $\widetilde{x}_{black}$, the Regret ratio (Regret of interpolated-NN over Regret of $k$-NN) is the same as the one for un-corrupted data. When $\omega^3=o(n^{-4/(d+3)})$, the regret decreases when $\gamma$ slightly increases from zero.
		
		For corruption scheme $\widetilde{x}_{white}$, when $\omega=O(n^{-2/(d+4)}) \wedge o(n^{-1/d})$, the Regret $P(\widehat{g}_{k,n,\gamma}(\widetilde{X})\neq Y)-P(g(X)\neq Y)=O(n^{-4/(d+4)})$ for any $\gamma$ satisfying conditions in Theorem \ref{thm:main}. When $\omega=o(n^{-2/(d+4)})\wedge o(n^{-1/d})$, the Regret ratio (Regret of interpolated-NN over Regret of $k$-NN) is the same as the one for un-corrupted data. 
	\end{corollary}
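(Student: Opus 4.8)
\emph{Proof sketch.} The plan is to reduce each claim to a pointwise estimate in a shrinking neighbourhood of the decision boundary and then to recycle the (non-uniform) Berry--Esseen expansion behind Theorem~\ref{thm:main}. Exactly as in~(\ref{eqn:weight}), write the corrupted excess risk as $P(\widehat{g}_{k,n,\gamma}(\widetilde{X})\neq Y)-P(g(X)\neq Y)=\mathbb{E}_X[\,2|\eta(X)-1/2|\,q_\gamma(X)\,]$, where $q_\gamma(x):=P(\widehat{g}_{k,n,\gamma}(\widetilde{X})\neq g(X)\mid X=x)$. Since $q_\gamma\le 1$ and $q_\gamma(x)\to 0$ for $x$ bounded away from $S=\{x:\eta(x)=1/2\}$, only a shrinking band around $S$ matters; under A.4' (equivalently $\beta=1$) the band $\{x:|\eta(x)-1/2|\le t\}$ has probability $\Theta(t)$, so the Regret scales like the square of the width of the ``active'' band.

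For $\widetilde{x}_{rand}$ and $\widetilde{x}_{black}$ the corrupted point is, conditionally on $X$, independent of the training sample --- for the black-box attack because the surrogate $\widetilde{\eta}$ is fitted on an independent data set --- so I would condition on $\widetilde{X}=\widetilde{x}$ and rerun the Berry--Esseen argument of Theorem~\ref{thm:main} with the expansion point moved from $x$ to $\widetilde{x}$. The only new quantity is $\eta(\widetilde{x})-\eta(x)=\dot{\eta}(x)^{\top}(\widetilde{x}-x)+O(\omega^{2})$, which is $\Theta(\omega)$ in the worst case for the black-box attack and has conditional mean $O(\omega^{2})$ by symmetry for the random perturbation; this shifts the effective margin of $x$ by $O(\omega)$, so $q_\gamma(x)$ stays bounded away from $0$ only on a band of width $O(\omega+n^{-2/(d+4)})$ around $S$ at the optimal $k\asymp n^{4/(d+4)}$. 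Integrating with $\beta=1$ gives $\text{Regret}\asymp n^{-4/(d+4)}+\omega^{2}$ uniformly over the $(k,\gamma)$ permitted in Theorem~\ref{thm:main}; hence the Regret is $O(n^{-4/(d+4)})$ when $\omega=O(n^{-2/(d+4)})$, and when $\omega=o(n^{-2/(d+4)})$ the $\omega^{2}$ correction is lower order, so the Regret ratio converges to the same limit as in the uncorrupted case (Corollary~\ref{coro:same_k}). For the ``U-shape persists'' statement I would Taylor-expand the corrupted Regret in $\gamma$ near $0$ and show that the $\Theta(\gamma)$ decrease of the bias coefficient in~(\ref{eqn:bias}) still dominates every $\gamma$-dependent perturbation correction; tracking these corrections --- which, after exploiting the symmetry of the corruption law and the first-order smoothness of $\eta$, enter at order $\omega^{3}$ relative to the governing scale --- is precisely what produces the threshold $\omega^{3}=o(n^{-4/(d+3)})$.

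For $\widetilde{x}_{white}$ the conditional-independence reduction fails, so I would split on the event $E_x=\{B(x,\omega)\text{ contains no training sample}\}$. The nearest training point to a generic $x$ lies at distance $\asymp n^{-1/d}$, so $P(E_x^{c})=O(n\omega^{d})=o(1)$ when $\omega=o(n^{-1/d})$, and on $E_x^{c}$ the trivial bound $q_\gamma(x)\le 1$ contributes only $O(n\omega^{d})$ to the Regret. On $E_x$ no denominator in $W_i(\cdot)=\|\cdot-X^{i}\|^{-\gamma}/\sum_j\|\cdot-X^{j}\|^{-\gamma}$ degenerates on $B(x,\omega)$; using $\widehat{\eta}_{k,n,\gamma}(z)-\widehat{\eta}_{k,n,\gamma}(x)=\sum_i(W_i(z)-W_i(x))Y^{i}$ together with $\sum_i|W_i(z)-W_i(x)|\lesssim \gamma\omega\,\frac{\sum_i R_i^{-\gamma-1}}{\sum_j R_j^{-\gamma}}\asymp \gamma\omega/R_{k+1}$ one gets $\sup_{z\in B(x,\omega)}|\widehat{\eta}_{k,n,\gamma}(z)-\widehat{\eta}_{k,n,\gamma}(x)|=O(\omega n^{1/(d+4)})$ at the optimal $k$. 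Thus the white-box adversary can shift $\widehat{\eta}$ by at most that amount, the active band has width $O(\omega n^{1/(d+4)}+n^{-2/(d+4)})$, and adding the $O(n\omega^{d})$ off-event term yields the rate $O(n^{-4/(d+4)})$ under $\omega=O(n^{-2/(d+4)})\wedge o(n^{-1/d})$; when $\omega=o(n^{-2/(d+4)})\wedge o(n^{-1/d})$ all $\omega$-dependent terms are lower order and the Regret ratio again matches the uncorrupted case.

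The hard part is the white-box analysis. Because $\widetilde{X}$ cannot be decoupled from the training data, everything rests on the uniform-in-$z$ control of $\widehat{\eta}_{k,n,\gamma}$ on $B(x,\omega)$, i.e.\ on a sharp bound for the Lipschitz behaviour of the interpolated weights away from the training points; this is exactly where $d-3\gamma\ge C>0$ is used (it keeps $\sum_j R_j^{-\gamma}\asymp kR_{k+1}^{-\gamma}$ and prevents any single $W_i$ from dominating), and it is why $B(x,\omega)$ must be kept clear of training samples --- the role of $\omega=o(n^{-1/d})$ --- since at a training point the interpolated estimator has a genuine jump to $Y^{i}$ that a white-box adversary would otherwise exploit. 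For the rand/black cases the only delicate step is isolating the exact threshold $\omega^{3}=o(n^{-4/(d+3)})$, which needs the perturbation expansion carried one order further than in the rate argument together with the relevant symmetry cancellations; the rest is a routine reprise of the proof of Theorem~\ref{thm:main}.
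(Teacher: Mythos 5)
Your treatment of $\widetilde{x}_{rand}$ and $\widetilde{x}_{black}$ follows essentially the paper's route: the paper also re-runs the tube/Berry--Esseen argument of Theorem \ref{thm:main} with the normal-approximation mean shifted by $\pm\omega\|\dot{\eta}(x_0)\|$ (worst case), integrates via the same c.d.f.\ identities, and obtains the uncorrupted variance and bias terms plus an extra corruption term of order $\omega^2$, which is lower order exactly when $\omega=o(n^{-2/(d+4)})$. The one substantive piece you leave unproved is the third claim (regret decreases for small $\gamma>0$ when $\omega^3=o(n^{-4/(d+3)})$): you assert that "tracking the corrections" yields the threshold, whereas the paper actually does the work by computing the cross/remainder term (its $r_5$) explicitly and showing its derivative with respect to the bias scale $t_{k,n}$ is positive, so that shrinking the bias via a small $\gamma$ strictly shrinks the corrupted Regret; in your sketch this $\gamma$-dependence of the corruption term (which distinguishes the black-box case from the $\gamma$-independent random-perturbation corruption) is a placeholder rather than an argument.

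The genuine gap is in the white-box half, where the paper offers no formal proof and your own chain of estimates does not deliver the stated conclusions. First, your uniform bound $\sup_{z\in B(x,\omega)}|\widehat{\eta}_{k,n,\gamma}(z)-\widehat{\eta}_{k,n,\gamma}(x)|=O(\omega/R_{k+1})=O(\omega n^{1/(d+4)})$, fed into your own "Regret $\asymp$ (width of active band)$^2$" logic with $\beta=1$, gives $O(\omega^2 n^{2/(d+4)}+n^{-4/(d+4)}+n\omega^d)$. At the boundary $\omega\asymp n^{-2/(d+4)}$ this is $O(n^{-2/(d+4)})$, not the claimed $O(n^{-4/(d+4)})$, and for $\omega=o(n^{-2/(d+4)})$ the term $\omega^2n^{2/(d+4)}$ need not be $o(n^{-4/(d+4)})$, so the "ratio unchanged" claim does not follow either; your bound would only suffice under the stronger $\omega=O(n^{-3/(d+4)})$. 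Likewise the off-event contribution is genuinely of order $n\omega^d$, which under $\omega=o(n^{-1/d})$ is only $o(1)$, not $O(n^{-4/(d+4)})$, so "adding the $O(n\omega^d)$ term" cannot be waved through. Second, the Lipschitz estimate itself is not a valid bound on the adversary's power: differentiating $W_i(\cdot)$ with the neighbor set held fixed ignores that moving $z$ by $\omega$ swaps of order $k\omega/R_{k+1}$ points in and out of the $k$-neighborhood (for $\gamma=0$ your bound gives zero shift, which is plainly wrong), and a white-box adversary can select the swap direction using the labels. So the white-box statements remain unproved by your argument; you would need either a sharper argument showing the attainable shift in the effective margin is $O(\omega)$ rather than $O(\omega/R_{k+1})$, or a weaker set of conditions on $\omega$.
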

	
	For the formal representations for random perturbation and black-box attack, we postpone them to Section \ref{sec:corruption} of the supplementary material.
	
	Corollary \ref{coro:corruption} reveals that when the corruption level is small enough, we can still benefit from interpolation since the performance ratio between interpolated-NN and $k$-NN is unchanged. However, interpolated-NN is more vulnerable to the white-box attack. 
	
	On the opposite, when $\omega$ is not sufficiently small, the testing data corruptions lead to a sub-optimal convergence rate, especially under white-box \blue{attacks}. The reason is that unless $\eta\in\{0,1\}$, there are always (infinitely many as $n$ increases) training samples, denoted as $\mathcal{D}'$, whose labels are different from the Bayes classifier. Then for any testing sample $x$, such that $B(x,\omega)$ overlaps with $\mathcal{D}'$, the white-box attack can be designed as $\widetilde x_{white} \in B(x,\omega)\cap \mathcal{D}'$, and will yield a wrong prediction, as the prediction of interpolated-NN always interpolate $\mathcal{D}'$.
	
	For $k$-NN, such a problem will not happen since the predictor always takes an average among $k$ training samples. Based on \cite{belkin2018overfitting}, the misclassified labels are dense in interpolated-NN. However, for (1) random perturbation and (2) black-box attack, the phenomenon is different: 
	\begin{corollary}
		Under the conditions of Theorem \ref{thm:main}, for white-box attack $\widetilde{x}_{white}$, when  $\omega/(n^{-1/d})\rightarrow\infty$, there exists some constant $c>0$ (depending on $f$ and $\eta$) such that the Regret is asymptotically greater than $c$. For random perturbation $\widetilde{x}_{rand}$ and black-box attack $\widetilde{x}_{black}$,  when  $\omega/(n^{-1/d})\rightarrow\infty$, the Regret is in $O(\omega^2)$.
	\end{corollary}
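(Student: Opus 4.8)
The plan is to start from the pointwise form of the excess risk (as in \eqref{eqn:weight}), which when the trained classifier is read off at a corrupted input $\widetilde X$ becomes
\begin{equation*}
\text{Regret}=\mathbb{E}\Big[\,2\big|\eta(X)-1/2\big|\;P\big(\widehat{g}_{k,n,\gamma}(\widetilde X)\neq g(X)\,\big|\,X\big)\Big],
\end{equation*}
where the weight $|\eta(X)-1/2|$ and the Bayes label $g(X)$ still refer to the clean $X$; the two halves of the corollary then follow from opposite bounds on the inner conditional probability. Since the complementary regime $\omega=o(n^{-1/d})$ is covered by Corollary \ref{coro:corruption}, I work throughout with $\omega/n^{-1/d}\to\infty$, i.e., equivalently, $n\omega^{d}\to\infty$.

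For the white-box lower bound I would first isolate a favorable region: because $\mathcal S=\{\eta=1/2\}\neq\emptyset$, $\dot\eta\neq0$ on $\mathcal S$ (A.4'), and $\eta$ is continuous near $\mathcal S$, there are an open set $A$ and a constant $\epsilon\in(0,1/8)$ with $1/2+\epsilon\le\eta\le1-2\epsilon$ on $A$ and $P(X\in A)>0$. Fix $X=x\in A$ and set $\omega'=\omega\wedge(\epsilon/L)$, with $L$ a Lipschitz constant for $\eta$ near $\mathcal S$, so that $1-\eta(u)\ge\epsilon$ on $B(x,\omega')$; then the number of training points in $B(x,\omega')$ carrying label $0$ stochastically dominates a $\text{Binomial}(n,q_0)$ variable with $q_0\ge c\,\epsilon\,(\omega')^{d}$, and since $nq_0\to\infty$ — this is exactly where $n\omega^{d}\to\infty$ is used — such a training point $X_j\in B(x,\omega)$ exists with probability $\to1$, uniformly over $x\in A$. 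By interpolation $\widehat{\eta}_{k,n,\gamma}(X_j)=Y_j=0$, which is the global minimum of the $[0,1]$-valued function $\widehat{\eta}_{k,n,\gamma}$, so the white-box choice $\widetilde x_{white}=\argmin_{z\in B(x,\omega)}\widehat{\eta}_{k,n,\gamma}(z)$ attains $\widehat{\eta}_{k,n,\gamma}(\widetilde x_{white})=0$ and hence $\widehat{g}_{k,n,\gamma}(\widetilde x_{white})=0\neq1=g(x)$. Substituting into the identity gives $\text{Regret}\ge 2\epsilon\,P(X\in A)(1-o(1))$, proving the first claim with $c=\epsilon\,P(X\in A)$.

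For $\widetilde x_{rand}$ and $\widetilde x_{black}$ I would exploit that the corrupted point is (conditionally) independent of the training sample $\mathcal D$ — by construction for the random perturbation, and because the auxiliary model is fit on an independent dataset for the black-box attack — so that $\widehat{\eta}_{k,n,\gamma}(\widetilde X)$ is a genuine interpolated-NN estimate at the ``fresh'' point $\widetilde X$, to which the Berry-Esseen expansion underlying Theorems \ref{general} and \ref{thm:main} applies: with $k\asymp n^{4/(d+4)}$, the probability $P(\widehat{g}_{k,n,\gamma}(\widetilde X)\neq g(\widetilde X)\mid\widetilde X)$ is exponentially small once $|\eta(\widetilde X)-1/2|\gg n^{-2/(d+4)}$. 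On the event $\{\widehat{g}_{k,n,\gamma}(\widetilde X)\neq g(X)\}$ either $g(\widetilde X)\neq g(X)$, which by $\|\widetilde X-X\|\le\omega$ and the Lipschitz property of $\eta$ near $\mathcal S$ (and, for $\widetilde x_{black}$, the lower-order sup-error of the auxiliary model) forces $|\eta(X)-1/2|\lesssim\omega$, or $\widehat{g}_{k,n,\gamma}(\widetilde X)\neq g(\widetilde X)$, whose non-negligible region is $|\eta(\widetilde X)-1/2|\lesssim n^{-2/(d+4)}$ and therefore $|\eta(X)-1/2|\lesssim\omega+n^{-2/(d+4)}$. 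In every case the integrand is localized to the strip $\{|\eta(X)-1/2|\lesssim\omega+n^{-2/(d+4)}\}$, on which the margin condition with $\beta=1$ (Remark \ref{rem:con}) gives $\int_{\{|\eta-1/2|<t\}}2|\eta-1/2|\,dP\lesssim t^{2}$; hence $\text{Regret}=O\big((\omega+n^{-2/(d+4)})^{2}\big)$, which is $O(\omega^{2})$ under the scaling of the corollary.

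I expect the second half to be the main obstacle. One must upgrade the pointwise Berry-Esseen approximation to something uniform enough over the vanishing-width boundary strip to control the integrated misclassification probability, and, for the black-box case, verify that the displacement $\widetilde x_{black}$ produced by the independent auxiliary estimator moves $\eta$ by at most $O(\omega)$ in the regime considered — i.e., that the auxiliary model's own statistical error is dominated by the corruption radius. The white-box half, by contrast, reduces to a covering / coupon-collector estimate the moment one notices that interpolation pins $\widehat{\eta}_{k,n,\gamma}$ to a wrong training label as soon as the attack ball reaches a single training point.
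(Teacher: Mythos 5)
Your argument is sound, and the two halves relate to the paper differently. For the white-box half you essentially formalize what the paper only sketches in the main text: wrongly labelled training points have positive density wherever $\eta\notin\{0,1\}$, the condition $n\omega^{d}\to\infty$ guarantees one falls inside the attack ball with probability tending to one, and interpolation pins $\widehat\eta_{k,n,\gamma}$ to that wrong label, so the arg-min attack misclassifies on a fixed region bounded away from the decision boundary; this gives the constant lower bound exactly as intended (the paper's appendix contains no separate formal proof of this part). For the random/black-box half your route is genuinely different from the paper's: the paper re-runs the tube expansion behind Theorem \ref{thm:main}, letting the corruption enter as a shift of size $\omega\|\dot{\eta}(x_0)\|$ inside the Gaussian approximation, and integrates via Proposition \ref{S.1} to obtain an explicit corruption term of the form $\tfrac{1}{2}\int_{\mathcal S}\omega^{2}\|\dot{\Psi}(x_0)\|\,d\text{Vol}^{d-1}(x_0)$ (with a $1/d$ factor for random directions) on top of the usual variance and bias terms; you instead localize misclassification to the strip $\{|\eta(X)-1/2|\lesssim \omega+n^{-2/(d+4)}\}$ and invoke the effective margin exponent $\beta=1$ from Remark \ref{rem:con}. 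Your approach is more elementary and avoids redoing the expansion, but it buys less: it loses the multiplicative constants and the $\gamma$-dependence of the corruption term (which the paper uses to argue the corruption effect shrinks as $\gamma$ increases from zero), and it yields $O(\omega^{2}+n^{-4/(d+4)})$, which is literally $O(\omega^{2})$ only when $\omega\gtrsim n^{-2/(d+4)}$ --- automatic from $\omega\gg n^{-1/d}$ when $d\ge 4$ but not for $d\le 3$; this caveat, however, is shared by the paper's own statement, whose expansion also retains the $1/k$ variance term. Two small touch-ups: the tail outside the strip is not exponentially small under Berry--Esseen alone and should be handled by the dyadic-strip argument with the non-uniform polynomial tail, as in the proof of Theorem \ref{general}; and for the black-box upper bound you need nothing about the auxiliary model's accuracy --- only $\|\widetilde X-X\|\le\omega$ and independence of $\widetilde X$ from the training set --- so the verification you flag as an obstacle can simply be dropped.
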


	\subsection{Bless of Interpolation and Effect of Distance Metric}\label{sec:appendix:metric}
	As shown by the results in \blue{the} previous section, data interpolation leads to a more accurate and stable performance than traditional $k$-NN when $k=k_\gamma$ and  $\gamma\in(0,\gamma_d')$. Similarly, in the literature of double descent, \citep[e.g.,][]{belkin2019two,hastie2019surprises} use a linear regression model to demonstrate a U-shaped curve (w.r.t. data dimension $d$) of MSE in the over-fitting regime (i.e., the number of linear covariates is larger than the number of observations). 
	
	To compare interpolated-NN and linear regression, although we observe a U-shaped performance ratio curve for both interpolated-NN estimator and ridgeless estimator in high dimensional linear regression when over-fitting occurs, they benefit from interpolation in different ways. For the ridgeless estimator in the regression problem, an increasing over-parametrization level causes a larger bias, but the coefficients all tend to zero. Therefore, following the analysis in \cite{belkin2019two} and \cite{hastie2019surprises}, the variance is very small and leads to a descent of MSE in the over-fitting regime. This descent in MSE in this interpolation regime is the second descent of MSE in the double descent phenomenon.
	
	On the other hand, for interpolated-NN, the benefit of increasing the level of interpolation is the reduction of bias. As a result, although the double descent phenomenon is observed in many different estimation procedures \citep[e.g.,][]{belkin2018reconciling}, a detailed study is needed to comprehensively understand how each machine learning technique enjoys the benefit of interpolation.

	Besides, it is worth mentioning that the benefit of interpolation in interpolated-NN is not affected by \blue{the} choice of distance metric. In the previous discussions, the $\mathcal{L}_2$ norm is used to determine the neighborhood set and measure the distance $R_i$. If we replace it by a different norm, e.g., $\mathcal{L}_1$ or $\mathcal{L}_{\inf}$, it turns out that the effect of interpolation is the same as under $\mathcal{L}_2$ measure:

	\begin{corollary}\label{coro:metric}
		Assume the optimum $k$'s for $k$-NN and interpolated-NN are chosen, respectively. Under assumptions in Corollary \ref{coro}, if $\mathcal{L}_p$ ($p\geq 1$) distance is used to calculate the distances among all data points, as well as for deciding interpolation weighting scheme, then the performance ratio between interpolated-NN and $k$-NN shares the same shape as in Corollary \ref{coro}, and CIS ratio shares the same shape as in Corollary \ref{CIS}.
	\end{corollary}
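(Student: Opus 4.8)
The plan is to revisit the proofs of Theorem~\ref{thm:main}, Corollary~\ref{coro}, and Corollary~\ref{coro:cis}, and to isolate exactly the two places where the Euclidean norm is used, showing that replacing it by $\mathcal{L}_p$ only changes certain $\gamma$-free multiplicative constants that then cancel in every ratio. The first place is the calibration of the neighborhood radius: under $\mathcal{L}_p$ one has $R_{k+1}^d\,V_{p,d}\,f(x)\approx k/n$, where $V_{p,d}$ is the volume of the unit $\mathcal{L}_p$ ball, which simply replaces the Euclidean volume constant; consequently $\mathbb{E}(R_1^2\mid X)$ is multiplied by a factor $V_{p,d}^{-2/d}$ relative to the $\mathcal{L}_2$ case. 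The second place is the second-moment matrix of a neighbor: conditional on $\|X^i-x\|_p=s$, to leading order (using A.2') $X^i-x$ is uniform on the $\mathcal{L}_p$ sphere of radius $s$, and because an $\mathcal{L}_p$ ball is invariant under coordinate permutations and sign flips this matrix equals $\kappa_{p,d}\,s^2 d^{-1} I_d$ for a scalar $\kappa_{p,d}$ (with $\kappa_{2,d}=1$). Hence the Taylor expansion of $\mathbb{E}[\eta(X^i)\mid R_i=s]$ still contracts $\ddot\eta$ against the identity and yields the same functional $a(\cdot)$ up to the overall scalar $\kappa_{p,d}$.

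The structural lemma to establish, and the crux of the universality, is distribution- and norm-free: conditional on $R_{k+1}=r$, the normalized powers $(R_1/R_{k+1})^d,\dots,(R_k/R_{k+1})^d$ are asymptotically i.i.d.\ $\mathrm{Uniform}[0,1]$, for every $p\ge 1$. This holds because, for a point of locally constant density in $B_p(x,r)$, $P(\|X^i-x\|_p\le s)=V_{p,d}s^d/(V_{p,d}r^d)=(s/r)^d$. Since the weights enter the analysis only through $R_i/R_{k+1}=\big((R_i/R_{k+1})^d\big)^{1/d}$, every weight-dependent quantity used in the proof of Theorem~\ref{thm:main}---in particular $\mathbb{E}\sum_i W_i$, $\mathbb{E}\sum_i W_i^2$, $\mathbb{E}\sum_i W_i(R_i/R_{k+1})^{2/d}$ and the Berry--Esseen third-moment terms---is a functional of these uniform variables alone, hence identical to the $\mathcal{L}_2$ case, and produces the very same $\gamma$-dependent factors $c_V(\gamma):=(d-\gamma)^2/(d(d-2\gamma))$ and $c_B(\gamma):=(d-\gamma)^2(d+2)^2/((d+2-\gamma)^2d^2)$.

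Assembling these facts, I would write the $\mathcal{L}_p$ variance term of MSE/Regret as $k^{-1}c_V(\gamma)\,A_p$ and the bias term as $(k/n)^{4/d}c_B(\gamma)\,B_p$, where $A_p$ collects $\mathbb{E}\sigma^2(X)$ or the corresponding boundary integral and $B_p$ collects $\kappa_{p,d}^2$, $V_{p,d}^{-4/d}$ and the $\int a(\cdot)^2$ factor---both $A_p,B_p$ depending on $p,d$ and the data distribution but not on $\gamma$; and, by the argument behind Theorem~\ref{CIS}, $\text{CIS}_{k,n}(\gamma)=(1+o(1))\,A_p'\,k^{-1/2}\sqrt{c_V(\gamma)}$ with $A_p'$ again $\gamma$-free. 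Because the $\gamma=0$ expressions carry exactly the same $A_p,B_p,A_p'$, these cancel in the performance ratio and in the CIS ratio; and optimizing $k$ by minimizing $A_p c_V(\gamma)/k+(k/n)^{4/d}B_p c_B(\gamma)$ gives $k_\gamma/k_0=\big((c_V(\gamma)/c_B(\gamma))/(c_V(0)/c_B(0))\big)^{d/(d+4)}$, also free of $p$. Substituting back reduces the ratios to exactly $\text{PR}'(d,\gamma)$ of Corollary~\ref{coro} (and, for the same-$k$ rule, $\text{PR}(d,\gamma)$ of Corollary~\ref{coro:same_k}) and $\sqrt{\text{PR}'(d,\gamma)}$ for CIS as in Corollary~\ref{coro:cis}.

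The hard part will be the boundary. Under A.1'--A.2' the support may be a manifold with boundary, and for $x$ near $\partial\mathcal{X}$ the effective neighborhood is $B_p(x,r)\cap\mathcal{X}$: there the coordinate symmetry that turns the second-moment matrix into a multiple of $I_d$ breaks, and $(R_i/R_{k+1})^d$ is no longer exactly uniform. I would deal with this exactly as in the proof of Theorem~\ref{thm:main}: for classification the Regret localizes to a vanishing tube around $\mathcal{S}=\{\eta=1/2\}$ on which $\partial\mathcal{X}$ contributes only a lower-order boundary layer by A.4' (and is irrelevant for MSE under A.5'), so any extra $p$-dependence introduced near $\partial\mathcal{X}$ is absorbed into the remainder rather than the leading constants. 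Everything else---the non-uniform Berry--Esseen control and the $d-3\gamma\ge C$, $d-\kappa(\beta)\gamma>0$ restrictions---goes through verbatim, since those steps only use that $\mathcal{L}_p$ balls have volume $\asymp r^d$ and that neighbors are locally near-uniform, both valid for all $p\ge1$.
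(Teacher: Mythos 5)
Your proposal is correct and is essentially the argument the paper implicitly relies on (no separate proof of this corollary appears in the supplement): the $\gamma$-dependent factors in Theorem \ref{thm:main}, Corollary \ref{coro}, and Corollary \ref{coro:cis} arise solely from moments of $R_1/R_{k+1}$ conditional on $R_{k+1}$, for which $P(R_1\le s\mid R_{k+1}=r)\approx (s/r)^d$ holds under any $\mathcal{L}_p$ ball, while the norm enters only through $\gamma$-free constants (the ball-volume factor and the second-moment scalar multiplying the identity by permutation/sign symmetry) that cancel in every performance and CIS ratio. Your treatment of the boundary via localization to the tube around $\{\eta=1/2\}$ matches how the paper's own proof of Theorem \ref{thm:main} absorbs such effects into the remainder.
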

	
	\section{Numerical Experiments}\label{sec:numerical}
	In this section, several simulation studies are presented to justify our theoretical discoveries for regression, classification, and stability of the interpolated-NN algorithm, together with some real data analysis.

	\subsection{Simulations}
	In Section \ref{sec:two}, two scenarios of the performance of interpolated-NN are presented: (1) interpolated-NN utilizes the same $k$ as $k$-NN; (2) $k$ is chosen optimally for each $\gamma$. In Section \ref{sec:numerical:same}, the experiment setups are described in details and the numerical results are presented for scenario (1). The numerical results for scenario (2) are postponed to Section \ref{sec:numerical:diff}.
	
	\subsubsection{$k$ is Chosen Optimally for $\gamma=0$}\label{sec:numerical:same}
	We aim to estimate the performance ratio curve via numerical simulations and compare it with the theoretical curve 
	$\text{PR}(d,\gamma)$ (in Corollary \ref{coro:same_k}). 
	We take training sample size $n=2048$ and use 5000 testing samples to evaluate MSE/Regret/CIS for all simulations. This procedure is repeated 500 times to obtain the mean and standard deviation of the performance ratios. As observed in \cite{samworth2012optimal}, the empirical performance ratio is not stable, with $\pm 0.02$ difference to the theoretical value when $n=1000$. Since our goal of this simulation is to empirically validate the performance ratios rather than promoting an interpolated-NN based method, instead of tuning $k$ via cross-validation over training data, we select $k$ which minimizes the MSE/Regret over independently simulated large testing data set.

	For regression simulation, each attribute of $X$ follows i.i.d. uniform distribution on $[-1,1]$ with $d=2$ or $5$, and $Y=\eta(x)+\varepsilon$ with $\varepsilon\sim N(0,1)$ where
	\begin{eqnarray}\label{eqn:simualtion_model}
		\eta(x)&=&\frac{e^{x^{\top}w}}{e^{x^{\top}w}+e^{-x^{\top}w}},\\\qquad
		w_i&=& i-d/2-0.5\qquad i=1,...,d.\nonumber
	\end{eqnarray} 
	A sequence of levels of interpolation $\gamma/d=0,0.05,0.1,\ldots,0.35$ are used to evaluate the performance of interpolated-NN.
	\begin{figure}[!ht]
		\centering
		\includegraphics[trim=0 20 50 0,clip,scale=0.65]{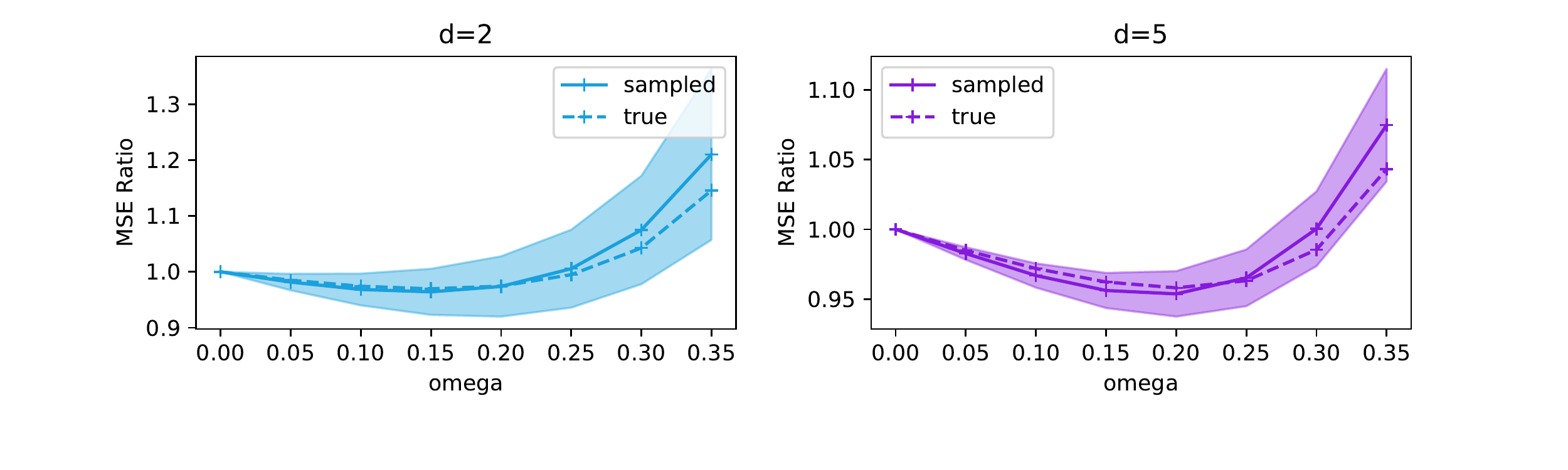}
		\caption{Performance Ratio in Regression when $k$ is Chosen Optimally for $k$-NN}
		\label{fig:regre3_same_k}
	\end{figure}

	The results are summarized in Figure \ref{fig:regre3_same_k}.  The trends for theoretical value and simulation value are close. The small difference is mostly caused by the small order terms in the asymptotic result and shall vanish if larger $n$ is used. Note that $\gamma/d=0.35$ is outside our theoretical range $\gamma/d<1/3$, but the empirical performance is still reasonable. 
	
	For classification, two models are generated. For the first model, each dimension of $P_1$, the marginal distribution of $X$ given $Y=0$, follows independent standard Cauchy. The first $\lfloor d/2\rfloor$ dimensions of $P_2$, the marginal distribution of $X$ given $Y=1$, follows independent standard Cauchy, and the remaining dimensions of $P_2$ \blue{follow} independent standard Laplace distribution. Through the design of the first model, the two classes have the same center at the origin. In the second model, each dimension of $P_1$ follows independent $N(0,1)/2+N(3,4)/2$, and each dimension of $P_2$ follows independent $N(1.5,1)/2+N(4.5,4)/2$. For both $P_1$ and $P_2$ in the second model, the distributions are multi-modal.

	The CIS of interpolated-NN classifier is estimated by calculating the proportion of testing samples that have different prediction labels, that is
	\begin{eqnarray*}
		\widehat{\text{CIS}}(\gamma)=\frac{1}{n}\sum_{i=1}^n1( \widehat{g}_{k,n,\gamma}(x_i,\mathcal{D}_1)\neq \widehat{g}_{k,n,\gamma}(x_i,\mathcal{D}_2) ),
	\end{eqnarray*}
	where $D_1$, $D_2$ are two independent training data sets, and $x_i$'s are independently sampled testing data points.

	Similar to \blue{Figure \ref{fig:regre3_same_k}}, the classification results are summarized in \blue{Figures} \ref{fig:model3_same_k} and \ref{fig:model2_same_k}.
	\begin{figure}[!ht]
		\centering
		\includegraphics[trim=0 0 50 0,clip,scale=0.65]{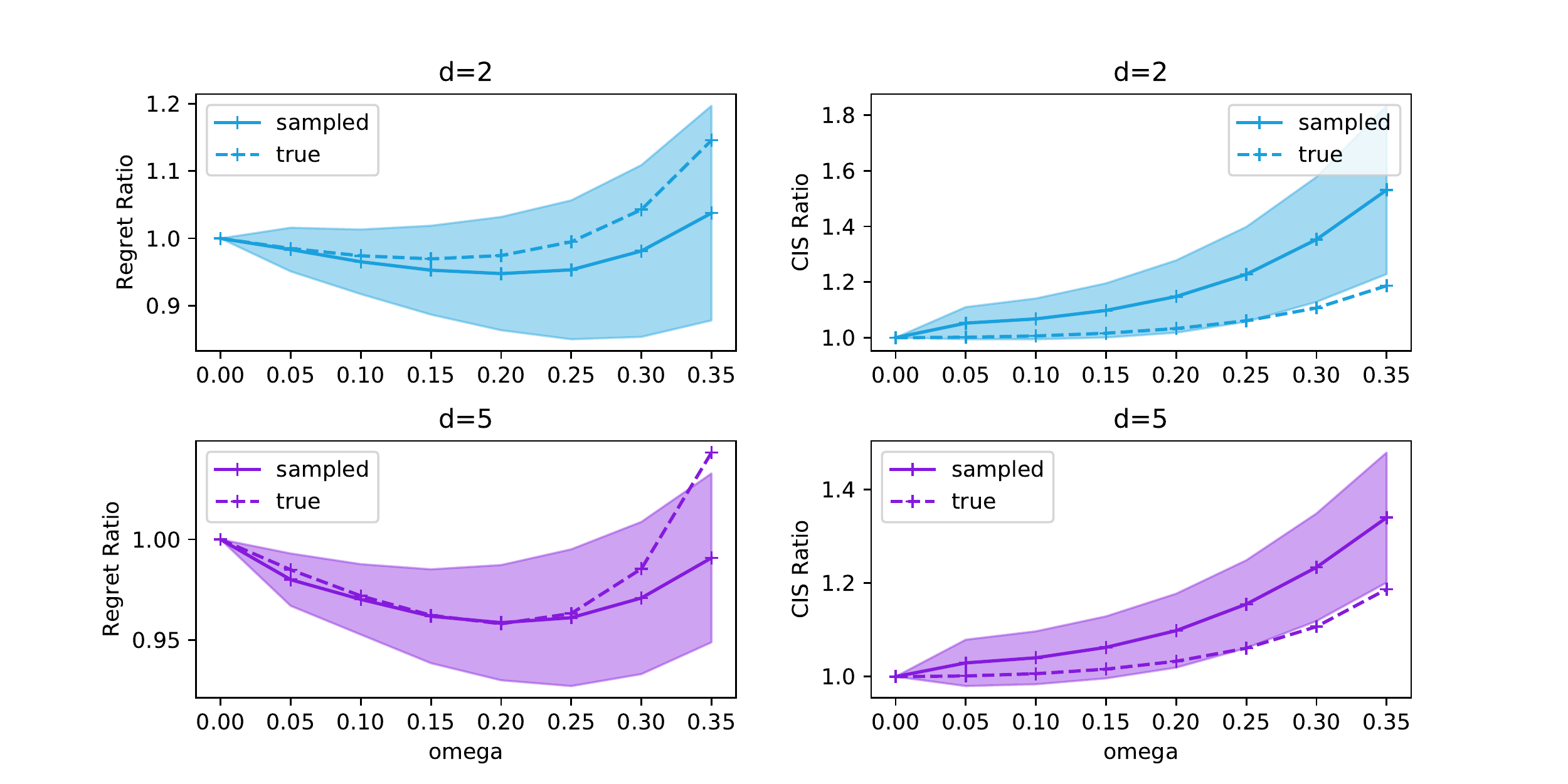}
		\caption{Performance Ratio in Classification, Model 1, when $k$ is Chosen Optimally  for $k$-NN}
		\label{fig:model3_same_k}
	\end{figure}
	\begin{figure}[!ht]
		\centering
		\includegraphics[trim=0 0 50 0,clip,scale=0.65]{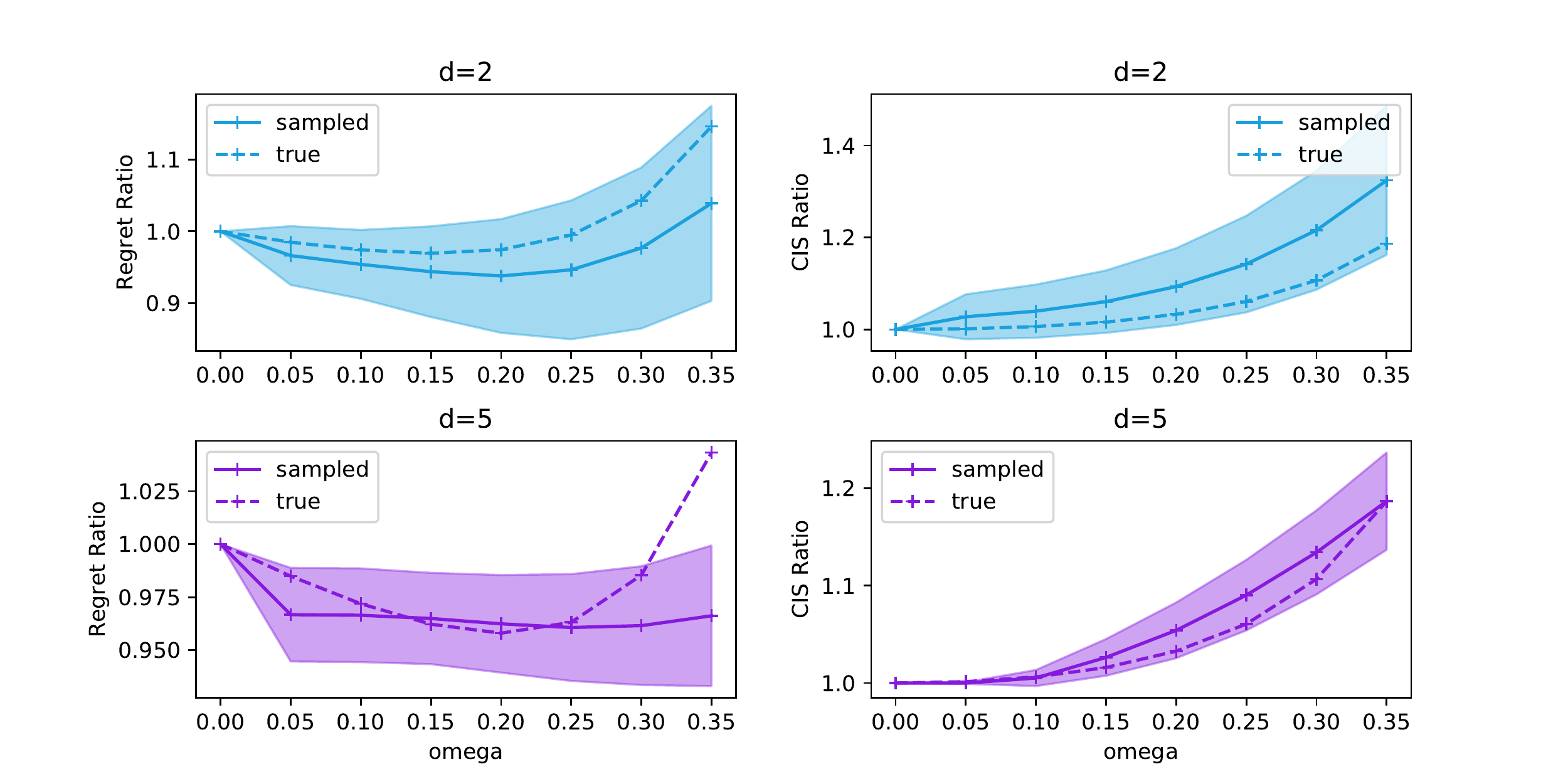}
		\caption{Performance Ratio in Classification, Model 2, when $k$ is Chosen Optimally  for $k$-NN}
		\label{fig:model2_same_k}
	\end{figure}
	Similar to \blue{the} regression setup, the theoretical value and empirical value trends are closed for the Regret ratio. Compared with the Regret ratio, the standard deviation of the CIS ratio is much larger due to the randomness of the two \blue{training} sample sets; thus, its trend is slightly away from the theoretical value.
	
	\subsubsection{$k$ is Chosen Optimally for each $\gamma$}\label{sec:numerical:diff}
	
	For all the three models in Section \ref{sec:numerical:same}, the performance ratios are also calculated when interpolated-NN uses the $k$ value, which \blue{optimizes} its testing performance. The formula for the performance ratio in this scenario can be found in Corollary \ref{coro}. The results are shown in Figures \ref{fig:regre3}, \ref{fig:model3} and \ref{fig:model2}. Similar to \blue{the results} in Section \ref{sec:numerical:same}, the empirical curves are mostly closed to the theoretical curves. 
	\begin{figure}[!ht]
		\centering
		\includegraphics[trim=0 20 50 0,clip,scale=0.65]{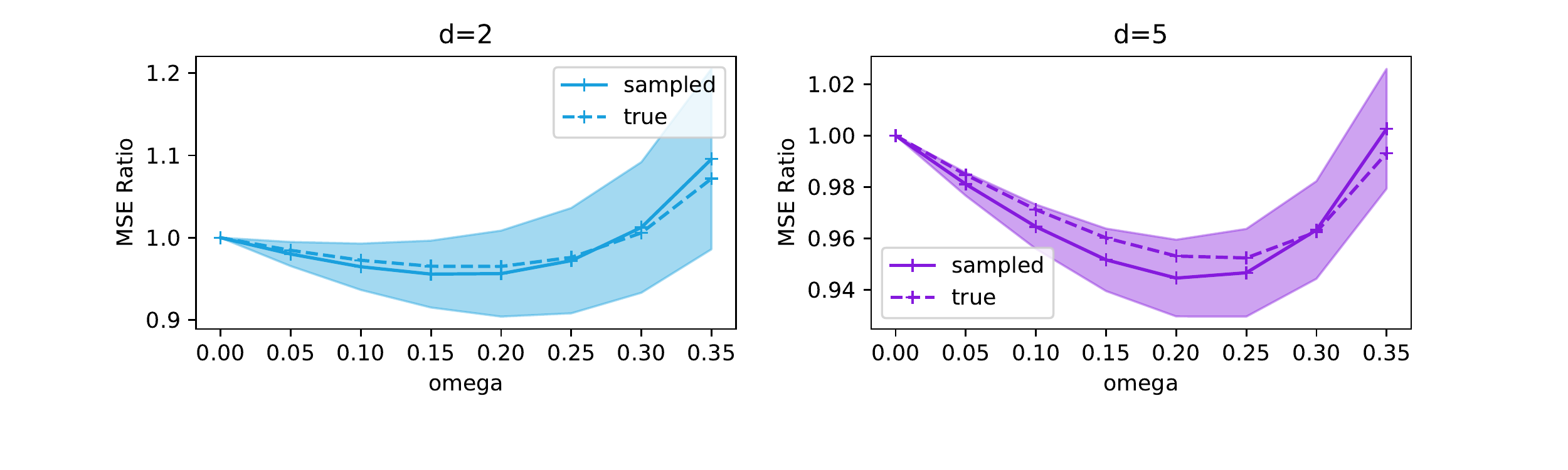}
		\caption{Performance Ratio in Regression when $k$ is Chosen Optimally for each $\gamma$}
		\label{fig:regre3}
	\end{figure}
	
	\begin{figure}[!ht]
		\centering
		\includegraphics[trim=0 0 50 0,clip,scale=0.65]{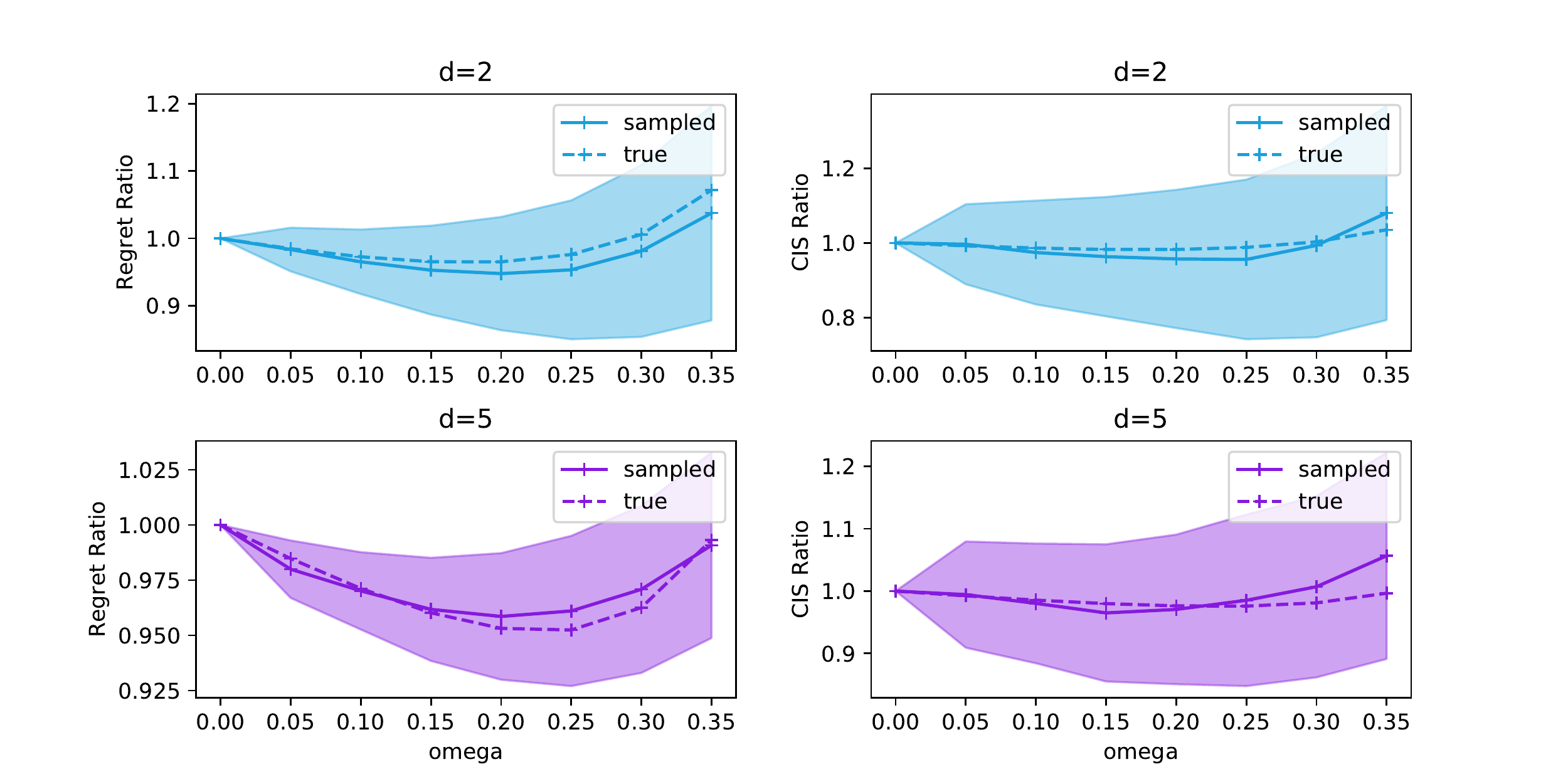}
		\caption{Performance Ratio in Classification, Model 1, when $k$ is Chosen Optimally for each $\gamma$}
		\label{fig:model3}
	\end{figure}
	\begin{figure}[!ht]
		\centering
		\includegraphics[trim=0 0 50 0,clip,scale=0.65]{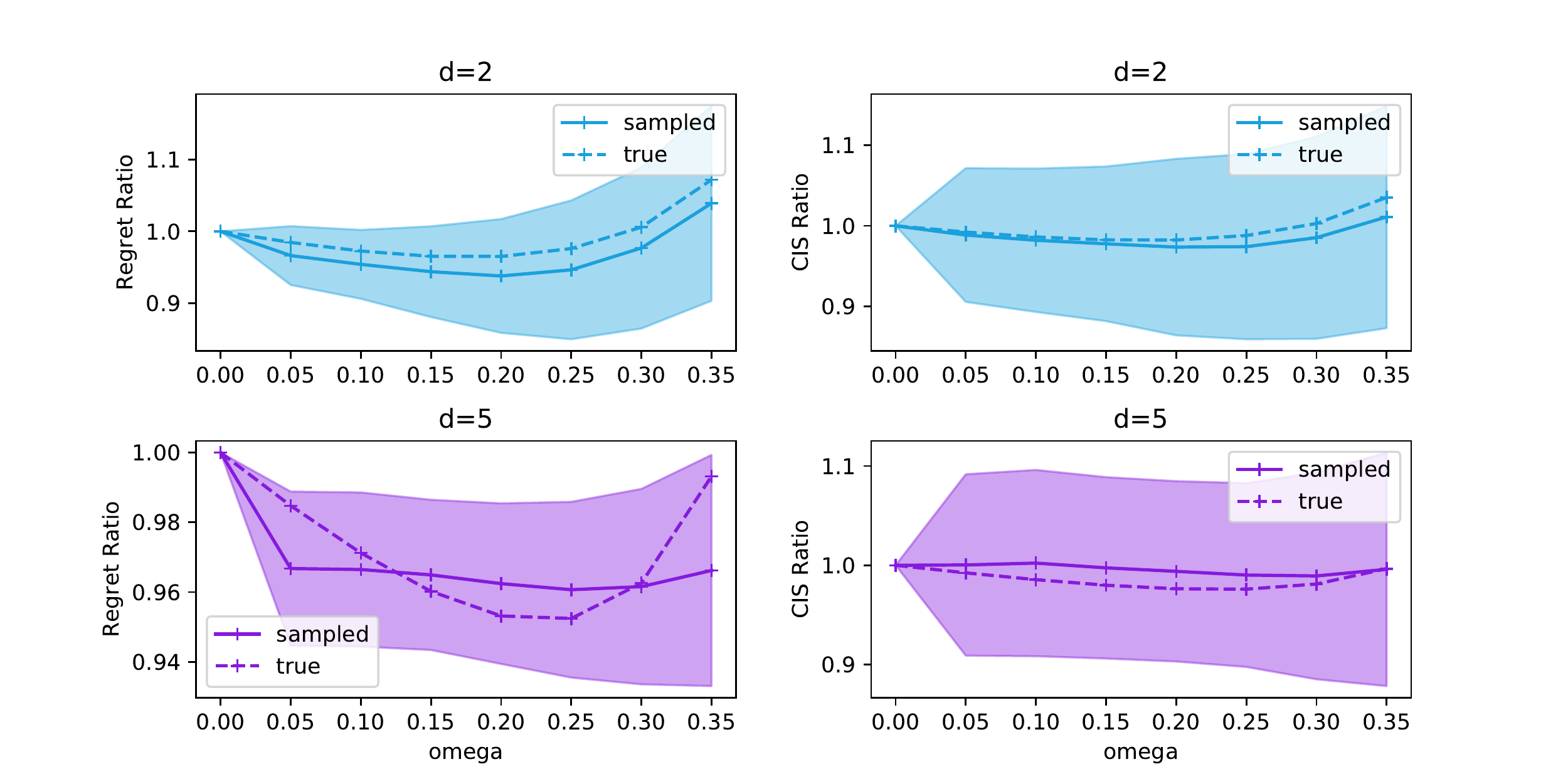}
		\caption{Performance Ratio in Classification, Model 2, when $k$ is Chosen Optimally for each $\gamma$}
		\label{fig:model2}
	\end{figure}	
	
	\subsection{Real Data Analysis}
	In \blue{the} real data experiments, we compare the classification accuracy of interpolated-NN with $k$-NN when $k$ is optimally tuned (via cross-validation) with respect to $\gamma$ values. 
	
	\begin{table}[ht]
		\centering
		\begin{tabular}{c|cccc}
			\hline
			Data & $d$ & Error ($\gamma=0$) & Error (best $\gamma/d$) & best $\gamma/d$\\\hline 
			Abalone & 7 & 0.22239 & 0.22007 & 0.3 \\
			HTRU2 & 8 & 0.02315 & 0.0226 & 0.2\\
			Credit & 23 & 0.1933 & 0.19287 & 0.05 \\
			Digit & 64 & 0.01745 & 0.01543 & 0.25\\
			MNIST & 784 & 0.04966 & 0.04656 & 0.05\\\hline
		\end{tabular}
		\caption{Prediction Error of $k$-NN, interpolated-NN under the best choice of $\gamma$, together with the value of the best $\gamma$ for interpolated-NN.}
		\label{tab:real}
	\end{table}
	
	Five data sets \blue{are} considered in this experiment. 
	The data set HTRU2 from \cite{lyon2016fifty} uses 17,897 samples with 8 continuous attributes to classify pulsar candidates. The data set Abalone contains 4,176 samples with seven attributes. Following \cite{wang2017analyzing}, we predict whether the number of rings is greater than 10. The data set Credit \citep{yeh2009comparisons} has 30,000 samples with 23 attributes and predicts whether the payment will be default in the next month given the current payment information. The built-in data set of digits in \textit{sklearn} \citep{scikit-learn} contains 1,797 samples of 8$\times$8 images. For images in MNIST are $26\times 26$, we will use part of it in our experiment. Both the data set of digit and MNIST have ten classes. Here for binary classification, we group 0 to 4 as the first and 5 to 9 as the second class.
	
	For each data set, a proportion of data is used for training, and the rest is reserved for testing. Note that the same testing data set is also used to tune the optimal choice of $k$. For Abalone, HTRU2, Credit, and Digit, we use 25\% data as training data and 75\% as testing data. For MNIST, we randomly choose 2000 samples as training data and 1000 as testing data, which is sufficient for our comparison.  The above experiment repeats 50 times, and the average testing error rate is summarized in Table \ref{tab:real}. 
	
	For all data sets, the best testing error among interpolated-NN's with different choices of $\gamma>0$ (column ``best $\gamma/d$'') is always smaller than the $k$-NN(column ``$\gamma=0$''), which verifies that the nearest neighbor algorithm \blue{can benefit from mild-level} interpolation.

	\section{Conclusion}\label{sec:conclusion}

	Our work precisely quantifies how data interpolation affects the performance of the nearest neighbor algorithms beyond the rate of convergence. For both regression and classification problems, the asymptotic performance ratios between interpolated-NN and $k$-NN converge to the same value, independent of data distribution, and depend on $d$ and $\gamma$ only. More importantly, when the interpolation level $\gamma/d$ is within a reasonable range, the interpolated-NN is strictly better than $k$-N, attaining both faster convergence and better stability performance.

	Classical learning frameworks oppose data interpolation as it believes that over-fitting means fitting the random noise rather than the model structures. However, in the interpolated-NN, the weight degenerating occurs only on a nearly zero-measure set, and thus there is only ``local over-fitting", which may not hurt the overall rate of convergence. Technically, through balancing the variance and bias, data interpolation potentially improves the overall performance. Essentially, our work precisely quantifies such a bias-variance balance. It is of great interest to investigate how our theoretical insights can be carried over to the real deep neural networks, leading to a complete picture of the double descent phenomenon.
	
	Finally, as mentioned in this paper, different algorithms (regression, nearest neighbors algorithm) may enjoy the benefit of interpolation in different ways; thus, \blue{an in-depth exploration on each algorithm is necessary to obtain} a comprehensive understanding of the double descent phenomenon. Nonetheless, although the \blue{mechanisms} behind these results are different, all these tell us that a carefully \blue{designed} statistical method can overcome \blue{harmful} over-fitting.

	\bibliographystyle{siamplain}
	\bibliography{VaRHDIS}
	\newpage
	\appendix

	The supplementary material is organized as follows. In Section \ref{sec:knn_wnn_compare}, we demonstrate a concrete example with an intuitive explanation of why interpolation does not hurt the convergence of interpolated-NN. Section B-H provide the proofs for the main theorems in the manuscript. 
	
	
	Section \ref{sec:ownn} extend the class of function interpolating weighting scheme to general class of functions which enjoys the benefit of variance-bias trade-off.



	\section{Variance-Bias Trade-off in NN Algorithms }\label{sec:knn_wnn_compare}
	In this section, we present an intuitive comparison between the interpolated-NN and traditional $k$-NN. For any weighted-NN regressor defined in (\ref{weightedNN}), 
	if the smooth condition $|\eta(x_1)-\eta(x_2)|\leq A\|x_1-x_2\|_2^{\alpha}$ holds for some $\alpha$ and $A$, then we have the following bias-variance decomposition (\citealp{belkin2018overfitting}):
	\begin{equation}\label{mse}
		\begin{split}
			&\mathbb{E}\big\{ (\widehat{\eta}_{k,n}(x)-\eta(x))^2 \big\}= \mathbb{E}\{(\mathbb{E}(\widehat{\eta}_{k,n}(x))-\eta(x))^2 \}+ \mathbb{E}\{(\widehat{\eta}_{k,n}(x)-\mathbb{E}(\widehat{\eta}_{k,n}(x)))^2 \}, \quad\mbox{where}\\
			&\mathbb{E}\{(\mathbb{E}(\widehat{\eta}_{k,n}(x))-\eta(x))^2 \}=\mbox{bias}^2\leq A^2\mathbb{E}\bigg[\sum_{i=1}^kW_i\|X^i-x\|^{\alpha}\bigg]^2,\\
			&\mathbb{E}\{(\widehat{\eta}_{k,n}(x)-\mathbb{E}(\widehat{\eta}_{k,n}(x)))^2 \}=\mbox{variance}=\sum_{i=1}^k \mathbb{E}\bigg[W_i^2(Y^i-\eta(X^i))^2 \bigg].
		\end{split}
	\end{equation}
	Several insights are developed based on the decomposition in (\ref{mse}). First, if ${\rm Var}(Y|X)$ (or $\mathbb{E}[(Y^i-\eta(X^i))^2 \mid X^i]$) is invariant among $X$ (e.g., the regression setting with i.i.d. noise $\epsilon_i$, or the classification setting with constant function $\eta$), then $k$-NN ($W_i\equiv 1/k$) represents the optimal weight choice which minimizes the variance term.  Second, if the weight assignment prioritizes closer neighbors, i.e., larger $W_i$ for smaller $\|X^i-x\|$, it will lead to a smaller value for the weighted average $\sum_{i=1}^kW_i\|X^i-x\|^{\alpha}$. In other words, interpolated-NN can achieve smaller upper bound for the bias term.
	Therefore, we argue that $k$-NN and interpolated-NN employ different strategies in reducing the upper bound of MSE. The former emphasizes reducing the variance, while the latter emphasizes reducing the bias. 
	
	The above intuitive arguments are well-validated by the following toy examples.

	We take 30 training samples $x_i=-5,-4,...,25$ with responses generated by the three choices: (1) $y=0*x+\epsilon$, (2) $y=x^2+0*\epsilon$, and (3) $y=(x-10)^2/8+5*\epsilon$ where $\epsilon\sim N(0,1)$. In other words, the mean function $\eta(x)$ are (1) $\eta(x)\equiv 0$, (2) $\eta(x)=x^2$, and (3) $\eta(x)=(x-10)^2/8$ respectively. The number of neighbors $k$ is chosen to be 10. Three different weighting schemes are considered: (1) $\phi(t)=1/k$, (2) $\phi(t)=1-\log(t)$, and (3) $\phi(t)=t^{-1}$, where the second and third choices are both interpolated weighting schemes, and the first choice is simply the traditional $k$-NN. 
	
	The regression estimators ($\widehat{\eta}_{k,n}(x)$) under different choices of $\phi$ and $\eta$ are shown in Figure \ref{toy}, along with the ``true'' curve which represent the underlying true $\eta(x)$. Note that we only present the  $\widehat{\eta}_{k,n}(x)$ within a smaller range of $x$ where the NN estimator does not suffer from the boundary effect. There are several observations from the results of these toy examples.
	
	\begin{figure}[ht]
		\begin{center}
			
			\includegraphics[scale=0.75]{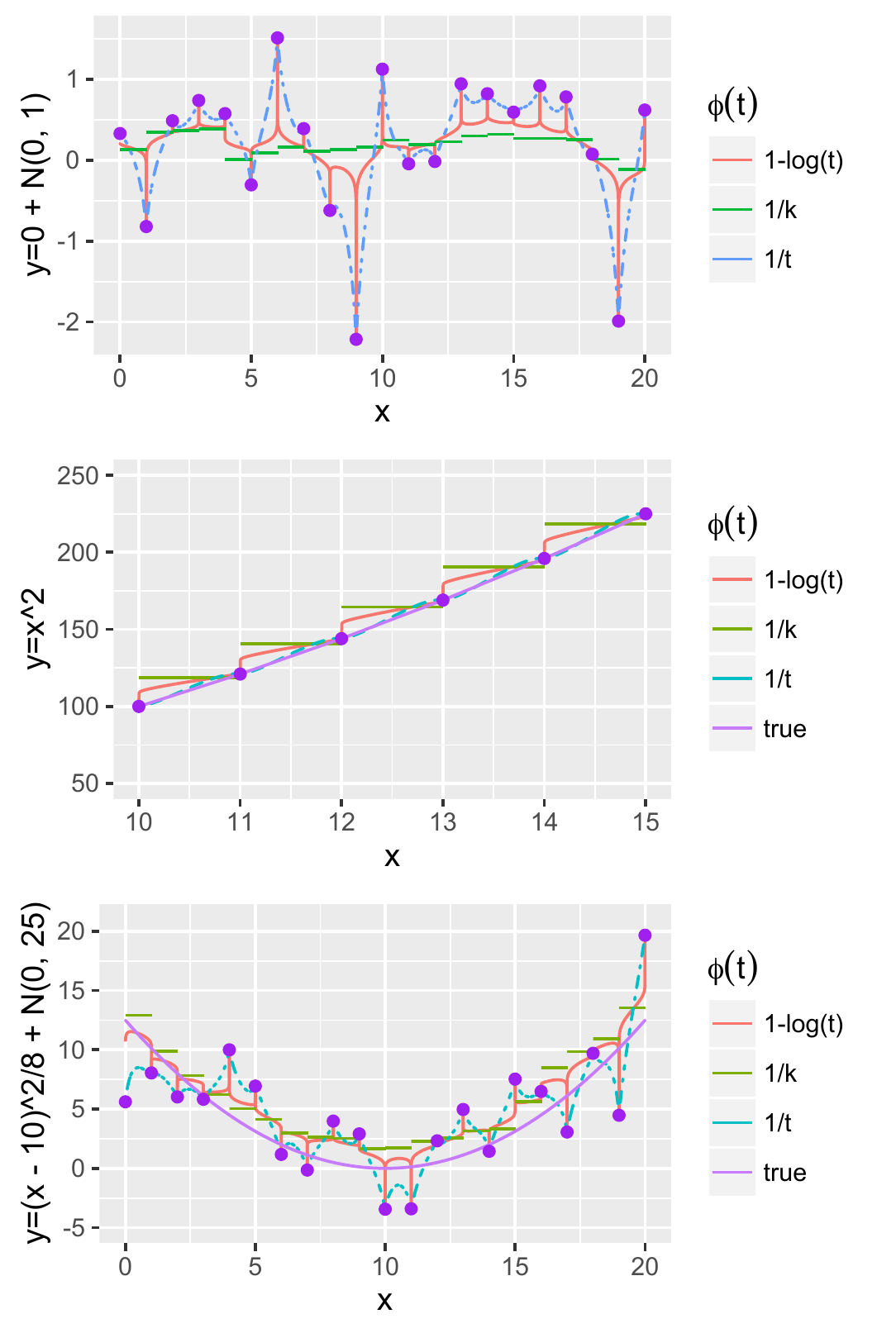}
			\caption{Three Toy Simulations: Upper: $\eta\equiv0$; Middle:  $\eta(x)=x^2$;
				Lower: $\eta(x)=(x-10)^{2}/8$.}
			\label{toy}
		\end{center}
	\end{figure}

	First of all, interpolated weight does ensure data interpolation. As $x$ gets closer to the any observed $x_i$, the estimator $\widehat{\eta}_{k,n}(x)$ is forced towards $y_i$. As a consequence, $\widehat{\eta}_{k,n}(x)$ is spiky for interpolated-NN. In contrast, the $k$-NN estimator is much more smooth.
	
	Secondly, different weighting schemes lead to different balance between bias and variance of $\widehat\eta_{k,n}$.
	In the first setting where $\eta\equiv 0$, the NN algorithm with any weighting scheme is unbiased, hence it corresponds to the extreme situation where bias is always 0.  The second model is noiseless (i.e., $\sigma^2$=0), thus it corresponds to the opposite extreme case where variance is always 0. 
	In the no-bias setting, $k$-NN performs the best, and interpolated-NN estimators lead to huge fluctuation. In the noiseless case, $k$-NN has the largest bias, and $\phi(t)=t^{-1}$ leads to the smallest bias. These observations are consistent with our arguments above, i.e., $k$-NN prioritizes minimizing the variance of the nearest neighbor estimator, while interpolated-NN prioritizes reducing the estimation bias instead. 
	For the comparison between the two different interpolated weighting schemes in this example, we comment that the faster $\phi(t)$ increases to infinite as $t\rightarrow0$, the smaller bias it will yield. Thus $\phi(t)=t^{-1}$ leads to smaller bias than $\phi(t)=1-\log(t)$, at the expense of larger estimation variance. The third model, $\eta(x)=(x-10)^2/8$, involves both noise and bias. From Figure \ref{toy}, the estimation of $\phi(t)=1/k$ tends to stay above the true $\eta$, i.e., high bias, due to the convexity of $\eta$. The estimators are more rugged for interpolated-NN but fluctuate along the true $\eta(x)$. In this case, it is difficult to claim which one is the best visually, but the trade-off phenomenon between bias and variance is clear.
	
	In conclusion, a faster decreasing $\phi$ leads to a smaller bias but a larger variance, and non-interpolated weights such as $k$-NN leads to a larger bias but a smaller variance.

	\section{Preliminary Proposition}
	This section provides an useful result when integrating c.d.f:
	\begin{proposition}\label{S.1}
		From Lemma S.1 in \cite{sun2016stabilized}, we have for any distribution function $G$,
		\begin{eqnarray*}
			\int_{\mathbb{R}} [G(-bu-a)-1_{\{ u<0 \}}]du &=& -\frac{1}{b}\left\{ a+\int_{\mathbb{R}}tdG(t)\right\},\\
			\int_{\mathbb{R}} u[G(-bu-a)-1_{\{ u<0 \}}]du &=& \frac{1}{b^2}\left\{  \frac{a^2}{2}+\frac{1}{2}\int_{\mathbb{R}}t^2dG(t) +a\int_{\mathbb{R}}tdG(t) \right\}.
		\end{eqnarray*}
	\end{proposition}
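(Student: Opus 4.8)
The plan is to reduce both identities to elementary facts about a random variable $T$ with distribution function $G$, via the layer-cake (Fubini) representation of expectations. The key observation is that the subtracted indicator $1_{\{u<0\}}$ is precisely the term that renders the integrand absolutely integrable near $u=-\infty$ (without it, $G(-bu-a)\to 1$ there and the integral diverges). Throughout I would assume $b>0$ and that the moments $\int t\,dG(t)$ and $\int t^2\,dG(t)$ appearing on the right-hand sides are finite, and I would set $U:=-(T+a)/b$, so that $\{U\ge u\}=\{T\le -bu-a\}$ and $\{U<u\}=\{T>-bu-a\}$ up to a $G$-null set of $u$'s (atoms of $G$), which does not affect the Lebesgue integrals.

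For the first identity I would split the integral at $u=0$: for $u>0$ the integrand equals $G(-bu-a)=\mathbb{P}(U\ge u)$, and for $u<0$ it equals $G(-bu-a)-1=-\mathbb{P}(U<u)=-\mathbb{P}(-U>-u)$, so after the substitution $v=-u$ on the negative part,
\begin{equation*}
\int_{\mathbb{R}}\big[G(-bu-a)-1_{\{u<0\}}\big]\,du=\int_0^\infty\mathbb{P}(U\ge u)\,du-\int_0^\infty\mathbb{P}(-U>v)\,dv=\mathbb{E}[U^+]-\mathbb{E}[(-U)^+]=\mathbb{E}[U],
\end{equation*}
and substituting $\mathbb{E}[U]=-(a+\mathbb{E}[T])/b$ gives the claim. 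For the second identity I would run the same bookkeeping on $u\big[G(-bu-a)-1_{\{u<0\}}\big]$ and apply the elementary identity $\int_0^\infty w\,\mathbb{P}(W>w)\,dw=\tfrac12\mathbb{E}[(W^+)^2]$ with $W=U$ and $W=-U$, obtaining $\tfrac12\mathbb{E}[(U^+)^2]+\tfrac12\mathbb{E}[(U^-)^2]=\tfrac12\mathbb{E}[U^2]$ since $(U^+)^2+(U^-)^2=U^2$; expanding $\mathbb{E}[U^2]=b^{-2}\mathbb{E}[(T+a)^2]$ then reproduces the stated formula.

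An equivalent and perhaps cleaner route, if one wants to avoid probabilistic language, is integration by parts: write the two integrals as $-\int_{\mathbb{R}}u\,dH(u)$ and $-\tfrac12\int_{\mathbb{R}}u^2\,dH(u)$ with $H(u)=G(-bu-a)-1_{\{u<0\}}$, note that the only atom of the signed measure $dH$ sits at $u=0$ and contributes $0$ in both cases, and then change variables $t=-bu-a$ to evaluate $\int u^j\,dH$. The one genuine obstacle, in either approach, is the same: justifying absolute integrability of the integrands and the vanishing of the boundary terms $u^jH(u)$ as $u\to\pm\infty$, which is exactly what finiteness of $\int|t|\,dG$ (resp.\ $\int t^2\,dG$) buys, together with a small amount of care about atoms of $G$ when passing between strict and non-strict inequalities. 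Since the statement is literally Lemma~S.1 of \cite{sun2016stabilized}, in practice I would just invoke it, but the above is the self-contained argument.
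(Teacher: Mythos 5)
Your proof is correct, but note that the paper itself gives no argument for this proposition at all: it is imported verbatim, as a citation, from Lemma S.1 in the supplement of \cite{sun2016stabilized}, which is exactly the fallback you mention in your last sentence. So the comparison is between a bare citation and your self-contained derivation, and the derivation checks out. Writing $U=-(T+a)/b$ with $T\sim G$ and $b>0$, the identity $G(-bu-a)=\mathbb{P}(U\ge u)$ is exact, the subtraction of $1_{\{u<0\}}$ indeed produces the integrable tails, and your layer-cake bookkeeping gives $\mathbb{E}[U^+]-\mathbb{E}[U^-]=\mathbb{E}[U]=-(a+\int t\,dG(t))/b$ for the first display and $\tfrac12\mathbb{E}[(U^+)^2]+\tfrac12\mathbb{E}[(U^-)^2]=\tfrac12\mathbb{E}[U^2]=b^{-2}\bigl(\tfrac{a^2}{2}+\tfrac12\int t^2dG(t)+a\int t\,dG(t)\bigr)$ for the second; the integration-by-parts variant with $H(u)=G(-bu-a)-1_{\{u<0\}}$ is equally valid since the only atom of $dH$ sits at $u=0$ and is killed by the weights $u$ and $u^2$, and the boundary terms vanish under the stated moment conditions. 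Your two caveats --- $b>0$ and finiteness of $\int|t|\,dG$, respectively $\int t^2dG$ --- are genuinely needed and are silently assumed in the paper's ``for any distribution function $G$'' phrasing; they hold where the proposition is actually used (Step 4 of the Regret proof and the CIS proof, with $G=\Phi$ or $\Phi^2$ and $b=\|\dot\eta(x_0)\|/\sqrt{s_{k,n,\gamma}^2/k}>0$), so nothing is lost. In short: the paper buys brevity by citation; your argument supplies the missing elementary proof and makes the hypotheses explicit.
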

	\section{Proof of Theorem \ref{general}}
	This section demonstrates the proof of Theorem \ref{general}.
	
	Based on the same argument used in  \cite{CD14} and \cite{belkin2018overfitting}, conditional on $R_{k+1}(x)$, $X^1$ to $X^k$ are i.i.d. variables whose support is a ball centered at $x$ with radius $R_{k+1}(x)$, and as a consequence, $R_1(x)$ to $R_k(x)$ are conditionally independent given $R_{k+1}(x)$ as well. 
	
	The analysis of classification is very subtle especially when $\eta(x)$ is near $1/2$. Hence, we need to have the following partition over the space $\mathcal X$. Let $p=2k/n$. Denote $E=\{\exists R_i>r_{p},\; i=1,\ldots,k \}$, and $\mathbb{E}R_{k,n}(x)-R^*(x)$ as the excess risk. Then define
	\begin{equation*}
		\widetilde{\eta}_{k,n,\gamma}(x|R_{k+1})=\mathbb{E}[(R_1/R_{k+1})^{-\gamma}\eta(X^1)]/\mathbb{E}(R_1/R_{k+1})^{-\gamma},
	\end{equation*}
	as well as
	\begin{align*}
		\mathcal{X}^+_{p,\Delta}=\{x\in\mathcal{X}|\eta(x)>\frac{1}{2},\widetilde{\eta}(x)\geq &\frac{1}{2}+\Delta ,\forall R_{k+1}<r_{2p}(x)\},\\
		\mathcal{X}^-_{p,\Delta}=\{x\in\mathcal{X}|\eta(x)<\frac{1}{2},\widetilde{\eta}(x)\leq &\frac{1}{2}-\Delta , \forall R_{k+1}<r_{2p}(x)\},
	\end{align*}
	with the decision boundary area:
	\begin{equation*}
		\partial_{p,\Delta}=\mathcal{X}\setminus(\mathcal{X}^+_{p,\Delta}\cup\mathcal{X}^-_{p,\Delta}).
	\end{equation*}
	
	Given $	\partial_{p,\Delta}$, $\mathcal{X}^+_{p,\Delta}$, and $\mathcal{X}^-_{p,\Delta}$, similar with Lemma 8 in \cite{CD14}, the event of $g(x)\neq \widehat{g}_{k,n,\gamma}(x)$ can be covered as:
	\begin{eqnarray*}
		1_{\{ g(x)\neq \widehat{g}_{k,n,\gamma}(x) \}}\leq 1_{\{ x\in \partial_{p,\Delta}\}}+1_{\{  \max\limits_{i=1,\ldots,k}R_i\geq r_{2p} \}}+1_{\{ |\widehat{\eta}_{k,n,\gamma}(x)-\widetilde{\eta}(x|R_{k+1})|\geq \Delta \}}.
	\end{eqnarray*}
	
	When $\widetilde{\eta}(x|R_{k+1})>1/2$ and $x\in\mathcal{X}_{p,\Delta}^+$,  assume  $\widehat{\eta}_{k,n,\gamma}(x)<1/2$, then  $$\widetilde{\eta}_{k,n,\gamma}(x|R_{k+1})-\widehat{\eta}_{k,n,\gamma}(x) >\widetilde{\eta}_{k,n,\gamma}(x|R_{k+1})-1/2\geq \Delta.$$
	The other two events are easy to figure out.

	In addition, from the definition of \text{Regret}, assume $\eta(x)<1/2$, 
	\begin{eqnarray*}
		&&P(\widehat{g}(x)\neq Y|X=x)-\eta(x)\\&=& \eta(x)P(\widehat{g}(x)=0|X=x) + (1-\eta(x))P(\widehat{g}(x)=1|X=x) -\eta(x)\\
		&=&\eta(x)P(\widehat{g}(x)=g(x)|X=x)+ (1-\eta(x))P(\widehat{g}(x)\neq g(x)|X=x)-\eta(x)\\
		&=& \eta(x)-\eta(x)P(\widehat{g}(x)\neq g(x)|X=x)+ (1-\eta(x))P(\widehat{g}(x)\neq g(x)|X=x)-\eta(x)\\
		&=& (1-2\eta(x))P(\widehat{g}(x)\neq g(x)|X=x).
	\end{eqnarray*}
	Similarly, when $\eta(x)>1/2$, we have
	\begin{eqnarray*}
		P(\widehat{g}(x)\neq Y|X=x)-1+\eta(x)= (2\eta(x)-1)P(\widehat{g}(x)\neq g(x)|X=x).
	\end{eqnarray*}
	As a result, the \text{Regret} can be represented as
	\begin{eqnarray*}
		\text{Regret}(k,n,\gamma) &=& \mathbb{E}\left( |1-2\eta(X)|P(g(X)\neq \widehat{g}_{k,n,\gamma}(X)) \right).
	\end{eqnarray*}
	For simplicity, denote $p=k/n$. We then follow the proof of Lemma 20 of \cite{CD14}. Without loss of generality assume $\eta(x)>1/2$. Under A.1, A.2, and A.3, define
	\begin{eqnarray*}
		\Delta_0&=& \sup\limits_{x}|\widetilde{\eta}(x|R_{k+1})-\eta(x)|=O(r_{2p}^\alpha) =O(k/n)^{\alpha/d},\\
		\Delta(x)&=&|\eta(x)-1/2|,
	\end{eqnarray*}
	then$$\widetilde{\eta}(x|R_{k+1})\geq \eta(x)-\Delta_0=\frac{1}{2}+(\Delta(x)-\Delta_0), $$
	hence $x\in\mathcal{X}_{p,\Delta(x)-\Delta_0}^{+}$.

	When $\Delta(x)>\Delta_0$, under A.1 and A.2, the Regret can be upper bounded as
	\begin{eqnarray}\label{eqn:prob}
		&&P(\widehat{g}_{k,n,\gamma}(X)\neq Y)-P(g(X)\neq Y)\nonumber\\&\leq& 2\Delta(x) \bigg[P(r_{(k+1)}>v_{2p})+P\bigg( \sum_{i=1}^kW_iY(X^i)-\widetilde{\eta}(x|R_{k+1})>\Delta(x)-\Delta_0 \bigg)\bigg]\nonumber\\
		&\leq& \exp(-k/8)\nonumber\\
		&&+ 2\Delta(x) P\bigg( \sum_{i=1}^k(R_i/R_{k+1})^{-\gamma}Y(X^i)>(\widetilde{\eta}(x|R_{k+1})+\Delta(x)-\Delta_0)\sum_{i=1}^k(R_i/R_{k+1})^{-\gamma} \bigg)\nonumber\\
		&=& \exp(-k/8)\nonumber\\&&+ 2\Delta(x) P\bigg( \sum_{i=1}^k(R_i/R_{k+1})^{-\gamma}Y(X^i)-k\mathbb{E}(R_1/R_{k+1})^{-\gamma}\eta(X^1)\nonumber\\&&\qquad\qquad\qquad\qquad-(\widetilde{\eta}(x|R_{k+1})+\Delta(x)-\Delta_0)\sum_{i=1}^k\left[(R_i/R_{k+1})^{-\gamma}-\mathbb{E}(R_1/R_{k+1})^{-\gamma}\right]\nonumber\\&&\qquad\qquad\qquad\qquad>k(\widetilde{\eta}(x|R_{k+1})+\Delta(x)-\Delta_0)\mathbb{E}(R_1/R_{k+1})^{-\gamma}-k\mathbb{E}(R_1/R_{k+1})^{-\gamma}\eta(X^1) \bigg)\nonumber\\
		&=& \nonumber\exp(-k/8)+ 2\Delta(x) P\bigg\{ \sum_{i=1}^k(R_i/R_{k+1})^{-\gamma}(Y(X^i)-\widetilde{\eta}(x|R_{k+1}))\\&&\qquad\qquad\qquad\qquad\qquad\qquad-(\Delta(x)-\Delta_0)\left(\sum_{i=1}^k(R_i/R_{k+1})^{-\gamma}-k\mathbb{E}(R_1/R_{k+1})^{-\gamma}\right)\\&&\qquad\qquad\qquad\qquad\qquad\qquad>k(\Delta(x)-\Delta_0)\mathbb{E}(R_1/R_{k+1})^{-\gamma}\bigg\}\nonumber.
	\end{eqnarray}
	Since
	\begin{eqnarray*}
		&&\mathbb{E}\sum_{i=1}^k(R_i/R_{k+1})^{-\gamma}(Y(X^i)-\widetilde{\eta}(x|R_{k+1}))=0,\\
		&&\mathbb{E}(\Delta(x)-\Delta_0)\left(\sum_{i=1}^k(R_i/R_{k+1})^{-\gamma}-k\mathbb{E}(R_1/R_{k+1})^{-\gamma}\right)=0,
	\end{eqnarray*}
	we can use Markov inequality to the power of $\kappa(\beta)$ to bound the probability in (\ref{eqn:prob}). Denote $$Z_i(x)=\left(\frac{R_i}{R_{k+1}}\right)^{-\gamma}(Y(X^i)-\widetilde{\eta}(x|R_{k+1}))-(\Delta(x)-\Delta_0)\left(\frac{R_i}{R_{k+1}}\right)^{-\gamma}+(\Delta(x)-\Delta_0)\mathbb{E}\left(\frac{R_1}{R_{k+1}}\right)^{-\gamma}$$
	for simplicity. Note that
	\begin{eqnarray*}
		Var(Z_1(x))&=& O(\Delta(x)-\Delta_0).
	\end{eqnarray*}

	For different settings of $\beta$ and $\gamma$, the following steps have the same logic but different details:
	
	\paragraph{Case 1: $\beta< 2$ and $\gamma<d/3$:} 
	Considering the problem that the upper bound can be much greater than 1 when $\Delta(x)$ is small, we define $\Delta_i=2^i\Delta_0$, taking $i_0 = \min\{ i\geq 1|\; (\Delta_i-\Delta_0)^{2}>1/k \}$. In this situation, since $\mathbb{E}Z_1^3(x)<\infty$, we can adopt non-uniform Berry-Essen Theorem for the proof:
	\begin{eqnarray*}
		&&P(\widehat{g}_{k,n,\gamma}(X)\neq Y)-P(g(X)\neq Y)\nonumber\\
		&=&\mathbb{E}(R_{k,n}(X)-R^*(X))1_{\{\Delta(X)\leq\Delta_{i_0}\}}+\mathbb{E}(R_{k,n}(X)-R^*(X))1_{\{\Delta(X)>\Delta_{i_0}\}}\nonumber\\
		&\leq& 2\Delta_{i_0} P(\Delta(X)\leq \Delta_{i_0})+\exp(-k/8)
		+4\mathbb{E}\left[\Delta(X)1_{\{\Delta_{i_0}<\Delta(X)\}}   \bar{\Phi}\left( \frac{\sqrt{k}(\Delta(x)-\Delta_0)}{\sqrt{Var(Z_1(X)|X)}} \right)\right]\label{eqn:part1}\\&&+4\mathbb{E}\left[\Delta(X)1_{\{\Delta_{i_0}<\Delta(X)\}} \frac{c_1}{\sqrt{k}}\frac{1}{1+k^{3/2} (\Delta(x)-\Delta_0)^3 } \right].\label{eqn:part2}
	\end{eqnarray*}	
	Due to $\beta$-margin condition, the two terms from Berry-Essen Theorem in \blue{the above inequality} become
	\begin{eqnarray*}
		&&\mathbb{E}\left[\Delta(X)1_{\{\Delta_{i}<\Delta(X)<\Delta_{i+1}\}} \frac{c_1}{\sqrt{k}}\frac{1}{1+k^{3/2} (\Delta(x)-\Delta_0)^3 } \right]\\
		&\leq& \Delta_{i+1}^{\beta+1} \frac{c_1}{\sqrt{k}}\frac{1}{1+k^{3/2}(\Delta_i-\Delta_0)^3}\\
		&=& \frac{1}{k^{(\beta+1)/2}}(\sqrt{k}\Delta_{i+1})^{\beta+1} \frac{c_1}{\sqrt{k}}\frac{1}{1+k^{3/2}(\Delta_i-\Delta_0)^3},
	\end{eqnarray*}
	and
	\begin{eqnarray*}
		&&\mathbb{E}\left[\Delta(X)1_{\{\Delta_{i_0}<\Delta(X)\}} \bar{\Phi}\left( \frac{\sqrt{k}(\Delta(x)-\Delta_0)}{\sqrt{Var(Z_1(X)|X)}} \right)\right]\\
		&\leq&\mathbb{E}\left[\Delta(X)1_{\{\Delta_{i_0}<\Delta(X)\}} \bar{\Phi}\left( c_3\sqrt{k}(\Delta(x)-\Delta_0) \right)\right]\\
		&\leq& c_4\frac{\Delta_{i+1}^{\beta+1}}{\sqrt{k}{(\Delta_i-\Delta_0)}}\exp\left(-c_3^2k(\Delta_i-\Delta_0)^2\right).
	\end{eqnarray*}
	The quantity $\exp(-c_3^2k(\Delta_i-\Delta_0)^2)$ is larger than 1 if $\Delta_i\leq c_5/\sqrt{k}$. When $\Delta_i>c_5/\sqrt{k}$, for some constant $c_5>0$,
	\begin{eqnarray*}
		\frac{  \frac{\Delta_{i+1}^{\beta+1}}{{(\Delta_i-\Delta_0)}}\exp\left(-c_3^2k(\Delta_i-\Delta_0)^2\right)}{\frac{\Delta_{i}^{\beta+1}}{{(\Delta_{i-1}-\Delta_0)}}\exp\left(-c_3^2k(\Delta_{i-1}-\Delta_0)^2\right)}
		&=& 2^{\beta+1}\frac{ 2^{i-1}-1  }{2^{i}-1} \frac{\exp\left(-c_3^2k(\Delta_{i}-\Delta_0)^2\right)  }{\exp\left(-c_3^2k(\Delta_{i-1}-\Delta_0)^2\right)}\\
		&\leq& 2^{\beta} \frac{\exp\left(-c_3^2k(\Delta_{i}-\Delta_0)^2\right)  }{\exp\left(-c_3^2k(\Delta_{i-1}-\Delta_0)^2\right)}<1/2.
	\end{eqnarray*}
	
	Therefore the sum of the excess risk can be bounded. When $\beta<2$,
	\begin{eqnarray*}
		&&\mathbb{E}\left[\Delta(X)1_{\{\Delta_{i}<\Delta(X)<\Delta_{i+1}\}} \frac{c_1}{\sqrt{k}}\frac{1}{1+k^{3/2} (\Delta(x)-\Delta_0)^3 } \right]\\
		&\leq&  O\left( \frac{1}{k^{(\beta+2)/2}} \right)\sum_{i\geq i_0}  [\sqrt{k}(\Delta_{i+1}-\Delta_i)]^{\beta-2} \\
		&\leq& O\left( \frac{1}{k^{(\beta+2)/2}} \right)\sum_{i\geq i_0}  (\sqrt{k}\Delta_{i_0})^{\beta-2}2^{i(\beta-2)} \\
		&=& O\left( \frac{1}{k^{(\beta+2)/2}} k^{(\beta-2)/2}\Delta_{i_0}^{\beta-2} \right)=O\left(  \frac{\Delta_{i_0}^{\beta-2}}{k^2}\right),
	\end{eqnarray*}	
	
	and
	\begin{eqnarray*}
		&&\mathbb{E}\left[\Delta(X)1_{\{\Delta_{i_0}<\Delta(X)\}}   \bar{\Phi}\left( \frac{\sqrt{k}(\Delta(x)-\Delta_0)}{\sqrt{Var(Z_1(X)|X)}} \right)\right]\\
		&\leq& O\left(\frac{1}{\sqrt{k}}\right) \sum_{i\geq i_0} \frac{\Delta_{i+1}^{\beta+1}}{{(\Delta_i-\Delta_0)}}\exp\left(-c_3^2k(\Delta_i-\Delta_0)^2\right)\\
		&=&  O\left(\frac{1}{\sqrt{k}}\right) \Delta_{i_0+1}^{\beta} \exp\left(-c_3^2k(\Delta_{i_0}-\Delta_0)^2\right).
	\end{eqnarray*}
	Recall that $\Delta_{i_0}>\Delta_0$ and $\Delta_{i_0}^2>1/k$, hence when $\Delta_{i_0}^2=O(1/k)$, we can obtain the minimum upper bound
	\begin{eqnarray*}
		P(\widehat{g}_{k,n,\gamma}(X)\neq Y)-P(g(X)\neq Y)\leq O(\Delta_{0}^{\beta+1}) + O\left( \left(\frac{1}{k}\right)^{(\beta+1)/2} \right).
	\end{eqnarray*}
	Taking $k\asymp(n^{\alpha/(2\alpha+d)})$, the upper bound becomes $O(n^{-\alpha(\beta+1)/(2\alpha+d)})$.
	
	\paragraph{Case 2: $\gamma<d/\kappa(\beta)$: } \blue{In this case, we have}
	\begin{eqnarray*}
		&&P(\widehat{g}_{k,n,\gamma}(X)\neq Y)-P(g(X)\neq Y)\\
		&=&\mathbb{E}(R_{k,n}(X)-R^*(X))1_{\{\Delta(X)\leq\Delta_{i_0}\}}+\mathbb{E}(R_{k,n}(X)-R^*(X))1_{\{\Delta(X)>\Delta_{i_0}\}}\\
		&\leq& 2\Delta_{i_0} P(\Delta(X)\leq \Delta_{i_0})+\exp(-k/8)
		\\&&+4\mathbb{E}\left[\Delta(X)1_{\{\Delta_{i_0}<\Delta(X)\}} \frac{ \mathbb{E}\left( \sum_{i=1}^kZ_i(X)\right)^{\kappa(\beta)} }{(\Delta(X)-\Delta_0)^{\kappa(\beta)}k^{\kappa(\beta)}\mathbb{E}^{\kappa(\beta)}(R_1/R_{k+1})^{-\gamma}}\right],
	\end{eqnarray*}	
	while for some constant $c_1>0$,
	\begin{eqnarray*}
		&&\mathbb{E}\left[\Delta(X)1_{\{\Delta_i<\Delta(X)\leq \Delta_{i+1}\}} \frac{ \mathbb{E}\left( \sum_{i=1}^kZ_i(X)\right)^{\kappa(\beta)} }{(\Delta(X)-\Delta_0)^{\kappa(\beta)}k^{\kappa(\beta)}\mathbb{E}^{\kappa(\beta)}(R_1/R_{k+1})^{-\gamma}}\right]\\
		&\leq&\mathbb{E}\left[\Delta(X)1_{\{\Delta_i<\Delta(X)\leq \Delta_{i+1}\}} \right]\frac{ \mathbb{E}\left( \sum_{i=1}^kZ_i(X)\right)^{\kappa(\beta)} }{(\Delta_i-\Delta_0)^{\kappa(\beta)}k^{\kappa(\beta)}\mathbb{E}^{\kappa(\beta)}(R_1/R_{k+1})^{-\gamma}}\\
		&\leq&\Delta_{i+1}P(\Delta(X)\leq \Delta_{i+1})\frac{ \mathbb{E}\left( \sum_{i=1}^kZ_i(X)\right)^{\kappa(\beta)} }{(\Delta_i-\Delta_0)^{\kappa(\beta)}k^{\kappa(\beta)}\mathbb{E}^{\kappa(\beta)}(R_1/R_{k+1})^{-\gamma}}\\
		&\leq& c_1\left(\frac{1}{k}\right)^{{\kappa(\beta)}/2} \Delta_{i+1}^{\beta+1}/(\Delta_i-\Delta_0)^{\kappa(\beta)},
	\end{eqnarray*}	
	where the last inequality is obtained from $\beta$-margin condition and  the assumption that $d-{\kappa(\beta)}\gamma>0$. Note that since $\kappa(\beta)>\beta+1$, for some constant $0<c<1$,
	\begin{equation}
		\frac{\Delta_{i+1}^{\beta+1}/(\Delta_i-\Delta_0)^{\kappa(\beta)}}{\Delta_{i}^{\beta+1}/(\Delta_{i-1}-\Delta_0)^{\kappa(\beta)}}<c.
	\end{equation}
	Therefore the sum of the excess risk for can be bounded, where
	\begin{eqnarray}\label{eqn:infinite}
		&&\mathbb{E}\left[\Delta(X)1_{\{\Delta_{i_0}<\Delta(X)\}} \frac{ \mathbb{E}\left( \sum_{i=1}^kZ_i(X)\right)^{\kappa(\beta)} }{(\Delta(X)-\Delta_0)^{\kappa(\beta)}k^{\kappa(\beta)}\mathbb{E}^{\kappa(\beta)}(R_1/R_{k+1})^{-\gamma}}\right]\nonumber\\
		&\leq& O\left(\frac{1}{k}\right)^{{\kappa(\beta)}/2} \sum_{i\geq i_0}\Delta_{i+1}^{\beta+1}/(\Delta_i-\Delta_0)^{\kappa(\beta)}\\
		&\leq& O\left(\Delta_{i_0}^{\beta+1}\right) .\nonumber
	\end{eqnarray}	
	Recall that $\Delta_{i_0}>\Delta_0$ and $\Delta_{i_0}^2>1/k$, hence when $\Delta_{i_0}^2=O(1/k)$, we can obtain the minimum upper bound
	\begin{eqnarray*}
		P(\widehat{g}_{k,n,\gamma}(X)\neq Y)-P(g(X)\neq Y)= O(\Delta_{0}^{\beta+1}) + O\left( \left(\frac{1}{k}\right)^{(\beta+1)/2} \right).
	\end{eqnarray*}
	Taking $k\asymp(n^{\alpha/(2\alpha+d)})$, the upper bound becomes $O(n^{-\alpha(\beta+1)/(2\alpha+d)})$.
	\section{Proof of Theorem \ref{thm:cis_rate}}
	This section provides the proof of Theorem \ref{thm:cis_rate}.
	
	For $CIS_{k,n}(\gamma)$, since we are considering two independent sets of data, it can be upper bounded by two times misclassification error:
	\begin{eqnarray*}
		CIS_{k,n}(\gamma)&=&2P(\hat{g}_1(X)\neq g(X), \hat{g}_2(X)=g(X) )\\
		&=& 2\mathbb{E}\left[ 1-P(\hat{g}_2(X)=g(X)|X) \right]P(\hat{g}_1(X)\neq g(X)|X)\\
		&\leq& 2\mathbb{E} P(\hat{g}_1(X)\neq g(X)|X)\\
		&=&2P(\hat{g}(X)\neq g(X)).
	\end{eqnarray*}
	Following the same procedures as in Theorem \ref{general}, when $\beta<2$ and $\gamma<d/3$, we have
	\begin{eqnarray*}
		&&P(\hat{g}_{k,n,\gamma}(X)\neq g(X))\\
		&=&\mathbb{E} P(\hat{g}_{k,n,\gamma}(X)\neq g(X))1_{\{ \Delta(X)\leq\Delta_{i_0} \}}+\mathbb{E} P(\hat{g}_{k,n,\gamma}(X)\neq g(X))1_{\{ \Delta(X)>\Delta_{i_0} \}}\\
		&\leq& P(\Delta(X)\leq\Delta_{i_0}) + \exp(-k/8)+\mathbb{E} \left[ 1_{\{\Delta_{i_0}<\Delta(X)\}} \bar{\Phi}\left( \frac{\sqrt{k}(\Delta(X)-\Delta_0)}{\sqrt{Var(Z_1(X)|X)}} \right) \right]\\
		&&+\mathbb{E}\left[1_{\{\Delta_{i_0}<\Delta(X)\}} \frac{c_1}{\sqrt{k}}\frac{1}{1+k^{3/2}(\Delta(X)-\Delta_0)^3} \right].
	\end{eqnarray*}
	The two terms from Berry-Essen Theorem becomes
	\begin{eqnarray*}
		\mathbb{E}\left[1_{\{\Delta_i<\Delta(X)<\Delta_{i+1}\}} \frac{c_1}{\sqrt{k}}\frac{1}{1+k^{3/2}(\Delta(X)-\Delta_0)^3} \right]\leq (\sqrt{k}\Delta_{i+1})^{\beta}\frac{1}{k^{\beta/2}}\frac{c_1}{\sqrt{k}}\frac{1}{1+k^{3/2}(\Delta_i-\Delta_0)^3},
	\end{eqnarray*}
	and
	\begin{eqnarray*}
		\mathbb{E} \left[ 1_{\{\Delta_{i_0}<\Delta(X)\}} \bar{\Phi}\left( \frac{\sqrt{k}(\Delta(X)-\Delta_0)}{\sqrt{Var(Z_1(X)|X)}} \right) \right]\leq c_4\frac{\Delta_{i+1}^\beta}{\sqrt{k}(\Delta_i-\Delta_0)}\exp(-c_3^2k(\Delta_i-\Delta_0)^2).
	\end{eqnarray*}
	Thus summing up $\Delta_i<\Delta(X)<\Delta_{i+1}$ for all $i>i_0$, we have
	\begin{equation*}
		CIS_{k,n}(\gamma)\leq O(\Delta_0^{\beta})+O\left( (1/k)^{\beta/2} \right).
	\end{equation*}
	When $k\asymp n^{\alpha/(2\alpha+d)}$, we have $CIS=O(n^{-\alpha\beta/(2\alpha+d)})$. The proof for $\beta\geq 2$ s similar.
	\section{Proof of Theorem \ref{thm:main} }\label{sec:thm:proof}
	
	This section is the proof of Theorem \ref{thm:main}.
	
	To prove Theorem \ref{thm:main}, we first prove the following theorem, then represents the multiplicative constants as functions of $\gamma$.
	\begin{theorem}\label{thm:main1}
		Assume $d-3\gamma\geq C>0$ for some constant $C$. For regression, suppose that assumptions A.1', A.2', A.5', and A.6' hold. If $k$ satisfies $n^{\delta}\leq k \leq n^{1-4\delta/d}$ for some $\delta>0$, we have
		\begin{eqnarray*}
			\text{MSE}(k,n,\gamma) &=&  \underbrace{k\mathbb{E}\left[\frac{(R_{1}(X)/R_{k+1}(X))^{-2\gamma}}{ (\sum_{i=1}^k (R_{i}(X)/R_{k+1}(X))^{-\gamma})^2}\sigma^2(X) \right]}_{Variance}\\&&+ \underbrace{k^2\mathbb{E}\left(a^2(X) \mathbb{E}^2\left[\frac{R_1(X)^2(R_1(X)/R_{k+1}(X))^{-\gamma}}{\sum_{i=1}^k(R_i(X)/R_{k+1}(X))^{-\gamma}}\bigg| X\right]\right)}_{Bias}+Remainder,
		\end{eqnarray*}
		where $Remainder=o(\text{MSE}(k,n,\gamma))$.
		
		For classification, under A.1' to A.4', the excess risk w.r.t. $\gamma$ becomes 
		\begin{eqnarray*}
			\text{Regret}(k,n,\gamma)=\underbrace{\frac{1}{4k}B_1\mathbb{E}s_{k,n,\gamma}^2(X)}_{Variance}+\underbrace{\int_{S}\frac{f(x_0)}{\|\dot{\eta}(x_0)\|} t_{k,n,\gamma}^2(x_0) d\text{Vol}^{d-1}(x_0)}_{Bias}+Remainder,
		\end{eqnarray*}
		where \begin{eqnarray*}
			Remainder&=&o(\text{Regret}(k,n,\gamma)),\\
			B_1&=&\int_{S}\frac{f(x_0)}{\|\dot{\eta}(x_0)\|}d\text{Vol}^{d-1}(x_0),\\
			s^2_{k,n,\gamma}(x)&=&\frac{\mathbb{E} ({R}_1/{R}_{k+1})^{-2\gamma} }{\mathbb{E}^2({R}_1/{R}_{k+1})^{-\gamma}},\\
			t_{k,n,\gamma}(x)&=& \frac{\mathbb{E}{R}_1^{2} ({R}_1/{R}_{k+1})^{-\gamma} }{\mathbb{E}({R}_1/{R}_{k+1})^{-\gamma}}.
		\end{eqnarray*} 
		
	\end{theorem}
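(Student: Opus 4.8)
\emph{Proof idea for Theorem~\ref{thm:main1}.} The plan is to exploit the same conditional structure used for Theorem~\ref{general}: given $R_{k+1}(x)=r$, the neighbours $(X^i,Y^i)_{i=1}^{k}$ are i.i.d.\ with $X^i$ drawn from $f$ restricted to $B(x,r)$, hence the distances $(R_i(x))_{i=1}^{k}$ are i.i.d.\ too. Since A.1--A.2 force $R_{k+1}(x)\asymp(k/n)^{1/d}$ with probability $1-o(1)$ uniformly over the interior of $\mathcal X$, one may everywhere restrict to this event and to testing points at distance $\gg(k/n)^{1/d}$ from $\partial\mathcal X$, folding the complementary contributions into $Remainder$ (the boundary shell has vanishing $P$-measure and, by A.4', does not perturb the decision-boundary integral). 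Throughout write $W_i=(R_i/R_{k+1})^{-\gamma}/\sum_{j}(R_j/R_{k+1})^{-\gamma}$. The hypotheses $d-3\gamma\ge C>0$ and $n^{\delta}\le k\le n^{1-4\delta/d}$ enter only at the end, when certifying that the third-moment type remainders below are negligible.

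For regression I would begin from the exact pointwise split $\mathbb E[(\widehat\eta_{k,n,\gamma}(x)-\eta(x))^2]=(\mathbb E\widehat\eta_{k,n,\gamma}(x)-\eta(x))^2+\mathrm{Var}(\widehat\eta_{k,n,\gamma}(x))$. Conditioning on $X^1,\dots,X^k$ gives $\mathrm{Var}(\widehat\eta_{k,n,\gamma}(x)\mid X^1,\dots,X^k)=\sum_{i}W_i^2\sigma^2(X^i)$; taking expectations, invoking exchangeability of the neighbours and the twice-differentiability of $\sigma^2$ (A.6') to replace $\sigma^2(X^1)$ by $\sigma^2(x)$ up to lower order, yields $\mathrm{Var}(\widehat\eta_{k,n,\gamma}(x))=(1+o(1))\,k\,\mathbb E[W_1^2\mid X=x]\,\sigma^2(x)$, whose integral against $f$ is the displayed $Variance$ term. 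For the bias, exchangeability gives $\mathbb E\widehat\eta_{k,n,\gamma}(x)-\eta(x)=k\,\mathbb E[W_1(\eta(X^1)-\eta(x))\mid X=x]$; conditioning on $(R_1,\dots,R_{k+1})$ turns $W_1$ into a fixed multiplier, and a second-order Taylor expansion of $\eta$ and of $f$ on the sphere $\partial B(x,R_1)$, averaged over the (essentially uniform) direction, shows that $\mathbb E[\eta(X^1)-\eta(x)\mid R_1,\dots,R_{k+1}]$ is, to leading order, $R_1^{2}$ times the coefficient $a(x)$ (the odd terms vanish under the sphere average; this is the computation of \cite{samworth2012optimal}, with the density factors normalised as there). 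Hence $\mathbb E\widehat\eta_{k,n,\gamma}(x)-\eta(x)=(1+o(1))\,k\,a(x)\,\mathbb E[W_1R_1^{2}\mid X=x]$, and squaring and integrating gives the displayed $Bias$ term, with the higher-order Taylor remainders and the conditioning-event corrections absorbed into $Remainder$.

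For classification I would use $\mathrm{Regret}(k,n,\gamma)=\mathbb E[\,2|\eta(X)-1/2|\,P(\widehat g_{k,n,\gamma}(X)\ne g(X)\mid X)\,]$ and, for $x$ with $\eta(x)>1/2$, rewrite $P(\widehat g_{k,n,\gamma}(x)\ne g(x)\mid x)=P(\widehat\eta_{k,n,\gamma}(x)<1/2\mid x)$. The one genuinely new ingredient is that the summands $W_iY^i$ are \emph{not} independent (each $W_i$ depends on all $R_j$'s), so I would condition on $(R_1,\dots,R_{k+1})$ --- under which $\widehat\eta_{k,n,\gamma}(x)=\sum_iW_iY^i$ is a weighted sum of conditionally \emph{independent} bounded summands with non-identical laws --- and apply the non-uniform Berry--Esseen theorem for such sums to obtain
$$P(\widehat\eta_{k,n,\gamma}(x)<1/2\mid x)=\bar\Phi\!\left(\frac{1/2-\mathbb E\widehat\eta_{k,n,\gamma}(x)}{\sqrt{\mathrm{Var}(\widehat\eta_{k,n,\gamma}(x))}}\right)+O\!\left(\frac{\sum_{i}\mathbb E|W_i(Y^i-\mathbb E Y^i)|^{3}}{\mathrm{Var}(\widehat\eta_{k,n,\gamma}(x))^{3/2}}\right),$$
the error being governed by $\mathbb E[(R_1/R_{k+1})^{-3\gamma}]<\infty$, i.e.\ by $\gamma<d/3$. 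Plugging in $\mathbb E\widehat\eta_{k,n,\gamma}(x)-\eta(x)=(1+o(1))\,k\,a(x)\,\mathbb E[W_1R_1^{2}\mid x]$ and $\mathrm{Var}(\widehat\eta_{k,n,\gamma}(x))=(1+o(1))\,\tfrac1k s^2_{k,n,\gamma}(x)\sigma^2(x)$ from the regression step (with $\sigma^2(x)=\eta(x)(1-\eta(x))\to 1/4$ near $S$), it remains to integrate. Because $\dot\eta\ne0$ on $S$ (A.4'), the coarea formula lets me change variables from $x$ to $(s,x_0)$ with $x_0\in S$ and $s$ the signed distance to $S$, so that $2|\eta(x)-1/2|=(1+o(1))\,2\|\dot\eta(x_0)\|\,|s|$ and $dx=(1+o(1))\,ds\,d\text{Vol}^{d-1}(x_0)$; the $x$-integral then reduces to a one-dimensional integral of the form $\int_{\mathbb R}|s|\,\bar\Phi\big((\|\dot\eta(x_0)\|s-t_{k,n,\gamma}(x_0))/\sqrt{\mathrm{Var}}\big)\,ds$ along each normal fibre, which Proposition~\ref{S.1} evaluates in closed form: its variance-driven part assembles into the $\tfrac1{4k}B_1\mathbb E s^2_{k,n,\gamma}(X)$ term and its squared-mean part into the $\int_S f(x_0)\|\dot\eta(x_0)\|^{-1}t^2_{k,n,\gamma}(x_0)\,d\text{Vol}^{d-1}(x_0)$ term, while $x$ bounded away from $S$ contributes only exponentially small mass by the Gaussian tail and A.3'.

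The step I expect to be the main obstacle is proving that the classification remainder is genuinely $o(\mathrm{Regret}(k,n,\gamma))$. This needs, at once, (i) unwinding the extra $(R_1,\dots,R_{k+1})$-conditioning that the dependence of the $W_iY^i$ forced upon us while keeping the Berry--Esseen error uniformly small --- delicate precisely in the thin shell around $S$ where $1/2-\mathbb E\widehat\eta_{k,n,\gamma}$ is comparable to $\sqrt{\mathrm{Var}}$ and the normal approximation is most stressed; and (ii) controlling the weight moments $\mathbb E[(R_1/R_{k+1})^{-\gamma}]$, $\mathbb E[(R_1/R_{k+1})^{-2\gamma}]$, $\mathbb E[(R_1/R_{k+1})^{-3\gamma}]$ jointly with the error of the coarea change of variables --- it is this pairing that fixes both the range $\gamma<d/3$ and the window $n^{\delta}\le k\le n^{1-4\delta/d}$. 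Everything else (the sphere averages, the Taylor expansions of $\eta$, $f$ and $\sigma^2$, the Proposition~\ref{S.1} integral) is routine once this bookkeeping is done, and Theorem~\ref{thm:main} then follows from Theorem~\ref{thm:main1} by evaluating $s^2_{k,n,\gamma}$, $t_{k,n,\gamma}$ and $k\mathbb E[W_1^2]$ via the order statistics of the radial distribution, which produces the explicit factors $(d-\gamma)^2/(d(d-2\gamma))$ and $(d-\gamma)^2(d+2)^2/((d+2-\gamma)^2d^2)$.
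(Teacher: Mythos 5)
Your plan follows essentially the same route as the paper's proof: for regression, a bias--variance split with Taylor expansions of $\eta$, $f$, $\sigma^2$ and moment computations for the weights $W_1$; for classification, the weighted-misclassification representation, reduction via a tube/coarea argument to an integral over $\mathcal S$ in normal coordinates, a non-uniform Berry--Esseen approximation conditional on the radii (with $\gamma<d/3$ supplying the third moment), and Proposition~\ref{S.1} to evaluate the resulting Gaussian integrals into the $s^2_{k,n,\gamma}$ and $t^2_{k,n,\gamma}$ terms. The only deviations are bookkeeping choices --- you condition on all of $R_1,\dots,R_{k+1}$ (independent, non-identically distributed summands) where the paper conditions only on $R_{k+1}$ to get i.i.d.\ $Z_i$'s, and your law-of-total-variance split places the lower-order design-variance term differently --- and you correctly flag the remainder control (the interchange of the radius expectation with $\Phi$, handled in the paper via the $r_1$--$r_5$ terms) as the genuinely delicate step.
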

	\subsection{Regression}
	
	Rewrite the interpolated-NN estimate at $x$ given the distance to the $k+1$th neighbor $R_{k+1}$, interpolation level $\gamma$ as $$S_{k,n,\gamma}(x,R_{k+1}) =\sum_{i=1}^k W_iY_i,$$
	where the weighting scheme is defined as
	\begin{eqnarray*}
		W_i=\frac{(R_i/R_{k+1})^{-\gamma}}{\sum_{i=1}^k (R_i/R_{k+1})^{-\gamma}}.
	\end{eqnarray*}
	For regression, we decompose \text{MSE} into bias square and variance, where
	\begin{eqnarray*}
		\mathbb{E}[(S_{k,n,\gamma}(x,R_{k+1})-\eta(x))^2|x]&=&\mathbb{E} \left[\sum_{i=1}^k W_i(\eta(X^i)-\eta(x))\right]^2+\mathbb{E} \left[\sum_{i=1}^k W_i(Y_i-\eta(X^i))\right]^2,
	\end{eqnarray*}
	in which the bias square can be rewritten as
	\begin{eqnarray*}
		\mathbb{E} \left[\sum_{i=1}^k W_i(\eta(X^i)-\eta(x))\right]^2&=& k\mathbb{E}(W_1(\eta(X^1)-\eta(x)))^2 + (k^2-k)\mathbb{E}^2(W_1(\eta(X^1)-\eta(x))),
	\end{eqnarray*}
	and the variance can be approximated as
	\begin{eqnarray*}
		\mathbb{E} \left[\sum_{i=1}^k W_i(Y_i-\eta(X^i))\right]^2= k\mathbb{E}W_1^2\sigma(X^1)^2
		=k\sigma(x)^2\mathbb{E}W_1^2+o.
	\end{eqnarray*}
	Following a procedure similar as \textit{Step 1} for classification, i.e., use Taylor expansion to approximate the bias square, we obtain that for some function $a$, the bias becomes
	\begin{eqnarray*}
		\mathbb{E}W_1(\eta(X^1)-\eta(x)) = a(x) \mathbb{E}W_1^2R_1^2+o.
	\end{eqnarray*}
	As a result, the \text{MSE} of interpolated-NN estimate given $x$ becomes,
	\begin{eqnarray*}
		\mathbb{E}[(S_{k,n,\gamma}(x,R_{k+1})-\eta(x))^2|x] = k\sigma(x)^2\mathbb{E}W_1^2+k^2a(x)^2\mathbb{E}^2W_1^2R_1^2 +o.
	\end{eqnarray*}
	Finally we integrate \text{MSE} over the whole support.
	
	\subsection{Classification}\label{sec:classification_proof}
	The main structure of the proof follows \cite{samworth2012optimal}. As the whole proof is long, we provide a brief summary in Section \ref{sec:summary} to describe things we will do in each step, then in Section \ref{sec:detail} we will present the details in each step.
	\subsubsection{Brief Summary}\label{sec:summary}
	\textit{Step 1}: denote i.i.d. random variables $Z_i(x,R_{k+1})$ for $i=1,\ldots,k$ where	\begin{eqnarray*}
		Z_i(x,R_{k+1})=\frac{ (R_i/R_{k+1})^{-\gamma}(Y(X^i)-1/2) }{\mathbb{E}(R_i(x)/R_{k+1}(x))^{\gamma}},
	\end{eqnarray*} 
	then the probability of classifying as 0 becomes $$P(S_{k,n,\gamma}(x)<1/2)=P\left(\sum_{i=1}^k Z_i(x,R_{k+1})<0\right).$$ The mean and variance of $Z_i(x,R_{k+1})$ can be obtained through Taylor expansion of $\eta$ and density function of $x$:
	\begin{eqnarray*}
		\mathbb{E}(Z_1(x,R_{k+1}))&=&\eta(x)-\frac{1}{2}+a(x)\frac{\mathbb{E}R_1^2(R_1/R_{k+1})^{-\gamma}}{\mathbb{E}(R_1/R_{k+1})^{-\gamma}}+o\\
		Var(Z_1(x,R_{k+1}))&=&\frac{1}{4}\frac{\mathbb{E}(R_1/R_{k+1})^{-2\gamma}}{\mathbb{E}^2(R_1/R_{k+1})^{-\gamma}}+o,
	\end{eqnarray*}
	for some function $a$. The smoothness conditions are assumed in A.4 and A.5. 
	
	Note that on the denominator of $Z_i$, there is an expectation $\mathbb{E}(R_i(x)/R_{k+1}(x))^{\gamma}$. From later calculation in Corollary \ref{coro}, the value of this expectation in fact has little changes given or without a condition of $R_{k+1}$, and it is little affected by $x$ either.
	
	\textit{Step 2}: 
	One can rewrite \text{Regret} as
	$$ \int_{\mathbb{R}^d} \left( P\left( \sum_{i=1}^k W_iY_i\leq \frac{1}{2} \right)-1_{\{ \eta(x)<1/2 \}}  \right)d\bar{P}(x).$$
	
	From Assumption A.2, A.4, the region where $\widehat{\eta}$ is likely to make a wrong prediction is near $\{ x|\eta(x)=1/2 \}$, thus we use tube theory to transform the integral of \text{Regret} over the $d$-dimensional space into a tube, i.e.,
	\begin{eqnarray*}
		&&\int_{\mathbb{R}^{d}}\left(P\left(\sum_{i=1}^kW_iY_i\leq \frac{1}{2}\right)-1\{ \eta(x)<1/2 \} \right)d\bar{P}(x)\\
		&=& \{1+o(1)\}\int_{\mathcal{S}}\int_{-\epsilon}^{\epsilon}t\|\dot{\Psi}(x_0)\|\left( P(S_{k,n}(x_0^t)<1/2)-1_{\{t<0\}}\right)dtd\text{Vol}^{d-1}(x_0) +o.
	\end{eqnarray*}
	The term $\epsilon$ will be defined in detail later. Basically, when $\epsilon$ is within a suitable range, the integral over $t$ will not depend on $\epsilon$ asymptotically.
	
	\textit{Step 3}: given $R_{k+1}$ and $x$, the nearest $k$ neighbors are i.i.d. random variables distributed in $B(x,R_{k+1})$, thus we use non-uniform Berry-Esseen Theorem to get the Gaussian approximation of the probability of wrong prediction:
	\begin{eqnarray*}
		&&\int_{\mathcal{S}}\int_{-\epsilon}^{\epsilon}t\|\dot{\Psi}(x_0)\| \left(P(S_{k,n}(x_0^t)<1/2)-1_{\{t<0\}}\right)dtd\text{Vol}^{d-1}(x_0) \\&=&
		\int_{\mathcal{S}}\int_{-\epsilon}^{\epsilon}t\|\dot{\Psi}(x_0)\|\mathbb{E}_{R_{k+1}} \left( \Phi\left( \frac{-k\mathbb{E}Z_1(x_0^t,R_{k+1})}{\sqrt{kVar(Z_1(x_0^t,R_{k+1}))}} \right)-1_{\{t<0\}} \right)dtd\text{Vol}^{d-1}(x_0)+o.
	\end{eqnarray*}
	
	\textit{Step 4}: take expectation over all $R_{k+1}$, and integral the Gaussian probability over the tube to obtain
	\begin{eqnarray*}
		&&\int_{\mathcal{S}}\int_{-\epsilon}^{\epsilon}t\|\dot{\Psi}(x_0)\|\mathbb{E}_{R_{k+1}} \left( \Phi\left( \frac{-k\mathbb{E}Z_1(x_0^t,R_{k+1})}{\sqrt{kVar(Z_1(x_0^t,R_{k+1}))}} \right)-1_{\{t<0\}} \right)dtd\text{Vol}^{d-1}(x_0)\\&=&\int_{\mathcal{S}}\int_{\mathbb{R}} t\|\dot{\Psi}(x_0)\|\left(\Phi\left(-\frac{t\|\dot{\eta}(x_0)\|}{\sqrt{s_{k,n,\gamma}^2/k}} -\frac{ \mathbb{E}(R_1/R_{k+1})^{-\gamma}a(x_0^t)R_1^2}{\sqrt{s_{k,n,\gamma}^2/k}}\right)-1_{\{t<0\}}\right)dtd\text{Vol}^{d-1}(x_0)\\&&+o\\
		&=& \frac{B_1}{4k}\frac{ \mathbb{E}(R_1/R_{k+1})^{-2\gamma} }{\mathbb{E}^2(R_1/R_{k+1})^{-\gamma}} + \int_{S}\frac{\|\dot{\Psi}(x_0)\|}{\|\dot{\eta}(x_0)\|^2}a^2(x_0) \frac{  \mathbb{E}^2(R_1/R_{k+1})^{-\gamma}R_1^{2}}{\mathbb{E}^2(R_1/R_{k+1})^{-\gamma}} d\text{Vol}^{d-1}(x_0)+o
		\\&=&\frac{1}{8k}B_1\mathbb{E}s_{k,n,\gamma}^2+\int_{S}\frac{\|\dot{\Psi}(x_0)\|}{2\|\dot{\eta}(x_0)\|^2} a^2(x_0)t_{k,n,\gamma}^2 d\text{Vol}^{d-1}(x_0)+o.
	\end{eqnarray*}

	\subsubsection{Details}\label{sec:detail}
	Denote  $a_d$ is the Euclidean ball volume parameter $$a_d=\text{Vol}(B(0,1))=(\pi/2)^{d/2}/\Gamma(d/2+1).$$

	Define $p=k/n$ and $r_{2p}=\sup\limits_{x}\mathbb{E}R_{2k}(x)$. Denote $E$ be the set that there exists $R_i$ such that $R_{i}>r_{2p}$, then for some constant $c>0$,
	\begin{eqnarray*}
		r_{2p}=\frac{c}{a_d^{1/d}c_0^{1/d}}\left( \frac{2k}{n}\right)^{1/d}.
	\end{eqnarray*} Hence from Claim A.5 in \cite{belkin2018overfitting}, there exist $c_1$ and $c_2$ satisfying$$P(E)\leq c_1k\exp(-c_2k).$$
	
	\textit{Step 1:} in this step, we figure out the i.i.d. random variable in our problem, and calculate its mean and variance given $x$. 
	
	Denote
	\begin{eqnarray}\label{eqn:Z}
		Z_i(x,R_{k+1})=\frac{ (R_i/R_{k+1})^{-\gamma}(Y(X^i)-1/2) }{\mathbb{E}(R_i/R_{k+1})^{\gamma}},
	\end{eqnarray} 
	then the dominant part we want to integrate becomes
	\begin{eqnarray*}
		&&P\left( {S}_{k,n}(x,R_{k+1})\leq \frac{1}{2} \right) \\&=& P\left( \sum_{i=1}^k(R_i/R_{k+1})^{-\gamma}(Y(X^i)-1/2)<0 \bigg| R_{k+1}\right)\\
		&=& P\left( \frac{\sum_{i=1}^k Z_i(x,R_{k+1})-k\mathbb{E}Z_1(x,R_{k+1})}{\sqrt{kVar(Z_1(x,R_{k+1}))}}< \frac{ -k\mathbb{E}Z_1(x,R_{k+1})}{\sqrt{kVar(Z_1(x,R_{k+1}))}}\bigg| R_{k+1}\right).
	\end{eqnarray*}
	Therefore, one can adopt non-uniform Berry-Essen Theorem to approximate the probability using normal distribution. Unlike \cite{samworth2012optimal} in which $\mathbb{E}Y(X^i)$ is calculated, since the i.i.d. item in non-uniform Berry-Essen Theorem is $Z$ rather than $Y$, we now calculate mean and variance of $Z$. Under $R_{k+1}$,
	\begin{eqnarray*}
		\mu_{k,n,\gamma}(x,R_{k+1}):=\mathbb{E}Z_1(x,R_{k+1})&=& \frac{  \mathbb{E}(R_1/R_{k+1})^{-\gamma}(Y(X^1)-1/2) }{\mathbb{E}(R_{1}/R_{k+1})^{-\gamma}} \\
		&=&\frac{ \mathbb{E}(R_1/R_{k+1})^{-\gamma}(\eta(X^1)-1/2) }{\mathbb{E}(R_{1}/R_{k+1})^{-\gamma}},
	\end{eqnarray*}
	and
	\begin{eqnarray*}
		\mathbb{E}Z_1^2(x,R_{k+1})&=& \frac{ \mathbb{E}(R_1/R_{k+1})^{-2\gamma}(Y(X^1)-1/2)^2  }{\mathbb{E}^2(R_{1}/R_{k+1})^{-\gamma} } \\
		&=&\frac{   \mathbb{E}(R_1/R_{k+1})^{-2\gamma}}{4\mathbb{E}^2(R_{i}/R_{k+1})^{-\gamma}},\\\sigma^2_{k,n,\gamma}(x,R_{k+1})&:=&Var(Z_1(x,R_{k+1})).
	\end{eqnarray*}
	
	Then the mean and variance of $Z_i$ can be calculated as
	
	\begin{eqnarray*}
		&&{\mu}_{k,n}(x_0^t,R_{k+1}) =\mathbb{E}Z_1(x_0^t,R_{k+1})+\frac{1}{2}=\frac{ \mathbb{E}(R_1/R_{k+1})^{-\gamma}\eta(X^1)}{\mathbb{E}(R_1/R_{k+1})^{-\gamma}}\\
		&=&  \frac{\mathbb{E} (R_1/R_{k+1})^{-\gamma} \left(\eta(x_0^t)+ (X^1-x_0^t)^{\top} \dot{\eta}(x_0^t) + 1/2(X^1-x_0^t)^{\top}\ddot{\eta}(x_0^t) (X^1-x_0^t)\right) }{\mathbb{E}(R_1/R_{k+1})^{-\gamma}} +o(R_{k+1}^3)\\
		&=& \eta(x_0^t) + \frac{ \mathbb{E} (R_1/R_{k+1})^{-\gamma}(X^1-x_0^{t})^{\top}\dot{\eta}(x_0^t) }{\mathbb{E}(R_1/R_{k+1})^{-\gamma}} \\&&+ \frac{1}{2} \frac{\mathbb{E} (R_1/R_{k+1})^{-\gamma}  tr[\ddot{\eta}(x_0^t) \left( (X^1-x_0^t)(X^1-x_0^t)^{\top}\right)] }{\mathbb{E}(R_1/R_{k+1})^{-\gamma}}  + O(R_{k+1}^3).
	\end{eqnarray*}
	Fixing $R_1$ and $R_{k+1}$, denoting $f(\cdot|x_2,R)$ as the conditional density of $X$ given $x_2$ and $\|X-x_2\|=R_1$, we have
	\begin{eqnarray}\label{eqn:first_order}
		&&\mathbb{E}((X^1-x_0^{t})^{\top}\dot{\eta}(x_0^t) |R_1)\nonumber\\ &=& \int (x-x_0^t)^{\top}\dot{\eta}(x_0^t) f(x|x_0^t,R_{k+1})dx\nonumber\\
		&=& \int (x-x_0^t)^{\top}\dot{\eta}(x_0^t) [f(x_0^t|x_0^t,R_{1})+f'(x_0|x_0^t,R_{1})^{\top}(x-x_0^t)+o]dx\nonumber\\
		&=& 0+\int (x-x_0^t)^{\top}\dot{\eta}(x_0^t) f'(x_0^t|x_0^t,R_{1})^{\top}(x-x_0^t)dx+o\nonumber\\
		&=& tr\left( \dot{\eta}(x_0^t) f'(x_0^t|x_0^t,R_{1})^{\top} \int(x-x_0^t) (x-x_0^t)^{\top}dx \right)+o
	\end{eqnarray}
	and
	\begin{eqnarray}\label{eqn:second_order}
		&&tr\left(  \frac{1}{2}\ddot{\eta}(x_0^t)\mathbb{E} \left( (X^1-x_0^t)(X^1-x_0^t)^{\top}|R_{1}\right) \right)\nonumber\\
		&=& tr\left(  \frac{1}{2}\ddot{\eta}(x_0^t)\int (x-x_0^t)(x-x_0^t)^{\top} f(x|x_0^t,R_{1})dx\right)\nonumber\\
		&=& tr\left(  \frac{1}{2}\ddot{\eta}(x_0^t)\int (x-x_0^t)(x-x_0^t)^{\top} [f(x_0^t|x_0^t,R_{1})+f'(x_0|x_0^t,R_{1})^{\top}(x-x_0^t)+o]dx\right)\nonumber\\
		&=& tr\left(  \frac{f(x_0^t|x_0^t,R_{1})}{2}\ddot{\eta}(x_0^t)\int (x-x_0^t)(x-x_0^t)^{\top} dx\right)+o,
	\end{eqnarray}
	Then taking function $a(x)$ for $x$ such that 
	\begin{eqnarray*}
		a(x_0^t)R_{1}^2 &=& \mathbb{E}((X^1-x_0^{t})^{\top}\dot{\eta}(x_0^t) |R_1)+ tr\left(  \frac{1}{2}\ddot{\eta}(x_0^t)\mathbb{E} \left( (X^i-x_0^t)(X^i-x_0^t)^{\top}|R_1\right) \right)+o.
	\end{eqnarray*}
	The difference caused by the value of $R_1$ is only a small order term.
	
	Finally,
	\begin{eqnarray*}
		\mu_{k,n,\gamma}(x_0^t,R_{k+1})
		&=&\eta(x_0) + t\|\dot{\eta}(x_0)\| + a(x_0)t_{k,n,\gamma}(R_{k+1})+o.
	\end{eqnarray*}

	\textit{Step 2:} in this step we construct a tube based on the set $\mathcal{S}=\{x|\eta(x)=1/2\}$, then figure out that the part of \text{Regret} outside this tube is a remainder term.

	Assume $\epsilon_{k,n}$ satisfies $s_{k,n,\gamma}=o(\epsilon_{k,n})$ and $\epsilon_{k,n}=o(s_{k,n,\gamma}k^{1/2})$, then the residual terms throughout the following steps will be $o(s_{k,n,\gamma}^2+t_{k,n,\gamma}^2)$. Hence although the choice of $\epsilon_{k,n}$ is different among choices of $k$ and $n$,  this does not affect the rate of \text{Regret}. Note that we ignore the arguments $x$ and $R_{k+1}$ as from A.2 and A.4, $s_{k,n,\gamma}^2(x,R_{k+1})\asymp 1/k$ and $t_{k,,n,\gamma}(x,R_{k+1})\asymp(k/n)^{2/d}$ for all $x$ while $R_{k+1}\asymp (k,n)^{2/d}$ in probability.

	By \cite{samworth2012optimal}, recall that $\Psi(x)=d(\pi_1P_1(x)-\pi_2P_2(x))$,  then
	\begin{eqnarray*}
		&&\int_{\mathbb{R}^d} \left( P\left( \sum_{i=1}^k W_iY_i\leq \frac{1}{2} \right)-1_{\{ \eta(x)<1/2 \}} \right)d\bar{P}(x) \\&=& \{1+o(1) \} \int_{\mathcal{S}} \int_{-\epsilon_{k,n}}^{\epsilon_{k,n}} t\|\dot{\Psi}(x_0)\| \left( P({S}_{k,n}(x_0^t)<1/2) -1_{\{t<0 \}} \right)dtd\text{Vol}^{d-1}(x_0)+r_1,
	\end{eqnarray*}
	where
	\begin{eqnarray*}
		r_1&=&\int_{\mathbb{R}^d\backslash \mathcal{S}^{\epsilon_{k,n}}} \left( P\left( \sum_{i=1}^k W_iY_i\leq \frac{1}{2}  \right)-1_{\{ \eta(x)<1/2 \}} \right)d\bar{P}(x)\\
		&=&\int_{\mathbb{R}^d\backslash \mathcal{S}^{\epsilon_{k,n}}} \mathbb{E}_{R_{k+1}}\left( P\left( \sum_{i=1}^k Z_i(x,R_{k+1})<0\right)-1_{\{ \eta(x)<1/2 \}} \right)d\bar{P}(x).
	\end{eqnarray*}
	For $r_1$,
	\begin{eqnarray*}
		0&\geq &\int_{\mathbb{R}^d\backslash\mathcal{S}^{\epsilon_{k,n}}\cap \{x|\eta(x)<1/2  \}} \mathbb{E}_{R_{k+1}}\left( P\left( \sum_{i=1}^k Z_i(x,R_{k+1})\leq 0\right)-1_{\{ \eta(x)<1/2 \}}  \right)d\bar{P}(x)\\
		&=& -\int_{\mathbb{R}^d\backslash\mathcal{S}^{\epsilon_{k,n}}\cap \{x|\eta(x)<1/2  \}}\\&&\qquad\qquad\qquad\mathbb{E}_{R_{k+1}}\left( P\left( \sum_{i=1}^k Z_i(x,R_{k+1})-k\mathbb{E}Z_1(x,R_{k+1})> -k\mathbb{E}Z_1(x,R_{k+1})\right)\right)d\bar{P}(x).
	\end{eqnarray*}
	Using non-uniform Berry-Essen Theorem, when $\mathbb{E}Z_1^3(x,R_{k+1})<\infty$, i.e.,$\gamma<d/3$, it becomes
	\begin{eqnarray*}
		&&-\int_{\mathbb{R}^d\backslash\mathcal{S}^{\epsilon_{k,n}}\cap \{x|\eta(x)<1/2  \}}\\&&\qquad\qquad\qquad\mathbb{E}_{R_{k+1}}\left( P\left( \sum_{i=1}^k Z_i(x,R_{k+1})-k\mathbb{E}Z_1(x,R_{k+1})> -k\mathbb{E}Z_1(x,R_{k+1})\right)\right)d\bar{P}(x)\\
		&\leq&\int_{\mathbb{R}^d\backslash\mathcal{S}^{\epsilon_{k,n}}\cap \{x|\eta(x)<1/2  \}}  \mathbb{E}_{R_{k+1}}\bar{\Phi}\left( -\frac{\sqrt{k}\mathbb{E}Z_1(x,R_{k+1})}{Var(Z_1(x,R_{k+1})) } \right)d\bar{P}(x)\\
		&&+c_1\frac{1}{\sqrt{k}}\int_{\mathbb{R}^d\backslash\mathcal{S}^{\epsilon_{k,n}}\cap \{x|\eta(x)<1/2  \}}\mathbb{E}_{R_{k+1}} \frac{ 1 }{ 1+k^{3/2}|\mathbb{E}Z_1(x,R_{k+1})|^3 }d\bar{P}(x),
	\end{eqnarray*}
	
	where $\bar{\Phi}(x)=1-\Phi(x)$. Since $s_{k,n,\gamma}(x,R_{k+1})=o(\epsilon_{k,n})$, $r_1=o(s_{k,n,\gamma}^2(x,R_{k+1}))$.
	
	By the definition of $\epsilon_{k,n}$, we have
	\begin{eqnarray*}
		\exp(-\epsilon_{k,n}^2/s_{k,n,\gamma}^2(x,R_{k+1}))&=&o(s_{k,n,\gamma}^2)+o(1/k),\\
		\inf\limits_{x\in\mathbb{R}^d\backslash S^{\epsilon_{k,n}}}|\eta(x)-1/2|&\geq& c_3\epsilon_{k,n}.
	\end{eqnarray*}
	As a result, using Berstain inequality, $r_1$ is a smaller order term compared with $s_{k,n,\gamma}^2$ when $s_{k,n,\gamma}^2(R_{k+1})=o(1)$, hence $r_1=o(s_{k,n,\gamma}^2)$.

	\textit{Step 3}: now we apply non-uniform Berry-Esseen Theorem. From \textit{Step 3}, we have
	\begin{eqnarray*}
		&&\int_{\mathcal{S}}\int_{-\epsilon_{k,n}}^{\epsilon_{k,n}} \mathbb{E}_{R_{k+1}}t\|\dot{\Psi}(x_0)\| \left( P({S}_{k,n}(x_0^t)<1/2|R_{k+1}) -1_{\{t<0 \}} \right)dtd\text{Vol}^{d-1}(x_0)\\&=& \int_{\mathcal{S}} \int_{-\epsilon_{k,n}}^{\epsilon_{k,n}}\mathbb{E}_{R_{k+1}} t\|\dot{\Psi}(x_0)\|\left( \Phi\left( \frac{ -k\mathbb{E}Z_1(x_0^t,R_{k+1})}{\sqrt{kVar(Z_1(x_0^t,R_{k+1}))}}\right)-1_{\{t<0 \}} \right)dtd\text{Vol}^{d-1}(x_0) + r_2,
	\end{eqnarray*}
	where based on non-uniform Berry-Esseen Theorem:
	\begin{eqnarray*}
		&&\bigg|P\left( \frac{\sum_{i=1}^k Z_i(x_0^t,R_{k+1})-k\mathbb{E}Z_1(x_0^t,R_{k+1})}{\sqrt{kVar(Z_1(x_0^t,R_{k+1}))}}< \frac{ -k\mathbb{E}Z_1(x_0^t,R_{k+1})}{\sqrt{kVar(Z_1(x_0^t,R_{k+1}))}}\right)\\&&\qquad-  \Phi\left( \frac{ -k\mathbb{E}Z_1(x_0^t,R_{k+1})}{\sqrt{kVar(Z_1(x_0^t,R_{k+1}))}}\right) \bigg|\\&\leq& c\frac{k\mathbb{E}|Z_1(x_0^t,R_{k+1})|^3}{k^{3/2}Var^{3/2}(Z_1(x_0^t,R_{k+1}))}\frac{1}{    1+ \left|\frac{ -k\mathbb{E}Z_1(x_0^t,R_{k+1})}{\sqrt{kVar(Z_1(x_0^t,R_{k+1}))}}\right|^3
		},
	\end{eqnarray*} and
	
	\begin{eqnarray*}
		|r_2|&\leq& \int_{\mathcal{S}} \int_{-\epsilon_{k,n}}^{\epsilon_{k,n}}\\&&\qquad \mathbb{E}_{R_{k+1}}t\|\dot{\Psi}(x_0)\| \frac{k\mathbb{E}|Z_1(x_0^t,R_{k+1})|^3}{k^{3/2}Var^{3/2}(Z_1(x_0^t,R_{k+1}))}\frac{1}{    1+ \left|\frac{ -k\mathbb{E}Z_1(x_0^t,R_{k+1})}{\sqrt{kVar(Z_1(x_0^t,R_{k+1}))}}\right|^3
		}dtd\text{Vol}^{d-1}(x_0).
	\end{eqnarray*}
	For $r_2$,
	\begin{eqnarray*}
		r_2&\leq&  \int_{\mathcal{S}} \int_{-\epsilon_{k,n}}^{\epsilon_{k,n}}\\&&\qquad\mathbb{E}_{R_{k+1}} t\|\dot{\Psi}(x_0)\| \frac{k\mathbb{E}|Z_1(x_0^t,R_{k+1})|^3}{k^{3/2}Var^{3/2}(Z_1(x_0^t,R_{k+1}))}\frac{1}{    1+ \left|\frac{ -k\mathbb{E}Z_1(x_0^t,R_{k+1})}{\sqrt{kVar(Z_1(x_0^t,R_{k+1}))}}\right|^3
		}dtd\text{Vol}^{d-1}(x_0)\\
		&=& \frac{c_1}{\sqrt{k}}\int_{\mathcal{S}} \int_{-\epsilon_{k,n}}^{\epsilon_{k,n}} \mathbb{E}_{R_{k+1}}t\|\dot{\Psi}(x_0)\| \frac{1}{    1+ \left|\frac{ -k\mathbb{E}Z_1(x_0^t,R_{k+1})}{\sqrt{kVar(Z_1(x_0^t,R_{k+1}))}}\right|^3
		}dtd\text{Vol}^{d-1}(x_0)\\
		&=&\frac{c_2}{\sqrt{k}}\int_{\mathcal{S}} \int_{|t|<s_{k,n,\gamma}(x)} t\|\dot{\Psi}(x_0)\| dtd\text{Vol}^{d-1}(x_0)\\
		&&+\frac{c_3}{\sqrt{k}}\int_{\mathcal{S}} \int_{s_{k,n,\gamma}(x)<|t|<\epsilon_{k,n}}t\|\dot{\Psi}(x_0)\| \frac{1}{    1+ k^{3/2} t^3
		}dtd\text{Vol}^{d-1}(x_0)\\
		&=& o(s_{k,n,\gamma}^2).
	\end{eqnarray*}
	\textit{Step 4}: the integral becomes
	\begin{eqnarray*}
		&&\int_{\mathcal{S}}\mathbb{E}_{R_{k+1}}\int_{-\epsilon_{k,n}}^{\epsilon_{k,n}} t\|\dot{\Psi}(x_0)\|\left(\Phi\left( \frac{ -k\mathbb{E}Z_1(x_0^t,R_{k+1})}{\sqrt{kVar(Z_1(x_0^t,R_{k+1}))}}\right)-1_{\{t<0\}}\right)dtd\text{Vol}^{d-1}(x_0)\\
		&=&\int_{\mathcal{S}}\mathbb{E}_{R_{k+1}}\int_{-\epsilon_{k,n}}^{\epsilon_{k,n}} t\|\dot{\Psi}(x_0)\|\\&&\qquad\left(\Phi\left(-\frac{t\|\dot{\eta}(x_0)\|}{\sqrt{s_{k,n,\gamma}^2(x,R_{k+1})/k}} -\frac{ \mathbb{E}(R_1/R_{k+1})^{-\gamma}a(x_0^t)R_1^2}{\sqrt{s_{k,n,\gamma}^2(x,R_{k+1})/k}}\right)-1_{\{t<0\}}\right)dtd\text{Vol}^{d-1}(x_0)\\&&+r_3+o\\
		&=&\int_{\mathcal{S}}\mathbb{E}_{R_{k+1}}\int_{\mathbb{R}} t\|\dot{\Psi}(x_0)\|\\&&\qquad\left(\Phi\left(-\frac{t\|\dot{\eta}(x_0)\|}{\sqrt{s_{k,n,\gamma}^2(x,R_{k+1})/k}} -\frac{ \mathbb{E}(R_1/R_{k+1})^{-\gamma}a(x_0^t)R_1^2}{\sqrt{s_{k,n,\gamma}^2(x,R_{k+1})/k}}\right)-1_{\{t<0\}}\right)dtd\text{Vol}^{d-1}(x_0)\\&&+r_3+r_4+o\\
		&=&\int_{\mathcal{S}}\int_{\mathbb{R}} t\|\dot{\Psi}(x_0)\|\left(\Phi\left(-\frac{t\|\dot{\eta}(x_0)\|}{\sqrt{s_{k,n,\gamma}^2(x)/k}} -\frac{ \mathbb{E}(R_1/R_{k+1})^{-\gamma}a(x_0^t)R_1^2}{\sqrt{s_{k,n,\gamma}^2(x)/k}}\right)-1_{\{t<0\}}\right)dtd\text{Vol}^{d-1}(x_0)\\&&+r_3+r_4+r_5+o\\
		&=& \frac{B_1}{4k}\frac{ \mathbb{E}(R_1/R_{k+1})^{-2\gamma} }{\mathbb{E}^2(R_1/R_{k+1})^{-\gamma}} + \int_{S}\frac{f(x_0)}{\|\dot{\eta}(x_0)\|}a^2(x_0) \frac{  \mathbb{E}^2(R_1/R_{k+1})^{-\gamma}R_1^{2}}{\mathbb{E}^2(R_1/R_{k+1})^{-\gamma}} d\text{Vol}^{d-1}(x_0) +r_3+r_4+r_5+o.
	\end{eqnarray*}
	Note that $\|\dot{\Psi}(x_0)\|/\|\dot{\eta}(x_0)\|=2f(x_0)$. The last step follows Proposition \ref{S.1} and the fact that $R_{k+1}$ does not affect the dominant parts. The term $\mathbb{E}((R_1/R_{k+1})^{-\gamma}|R_{k+1})$ is almost the same for all $R_{k+1}$. For the small order terms, following \cite{samworth2012optimal} we obtain
	\begin{eqnarray*}
		r_3&=&\int_{\mathcal{S}}\mathbb{E}_{R_{k+1}} \int_{-\epsilon_{k,n}}^{\epsilon_{k,n}} t\|\dot{\Psi}(x_0)\|\bigg(\Phi\left( \frac{ -k\mathbb{E}Z_1(x_0^t,R_{k+1})}{\sqrt{kVar(Z_1(x_0^t,R_{k+1}))}}\right)\\&&\qquad\qquad-\Phi\left(-\frac{t\|\dot{\eta}(x_0)\|}{\sqrt{s_{k,n,\gamma}^2(x,R_{k+1})/k}} -\frac{ \mathbb{E}(R_1/R_{k+1})^{-\gamma}a(x_0^t)R_1^2}{\sqrt{s_{k,n,\gamma}^2(x,R_{k+1})/k}}\right) \bigg)dtd\text{Vol}^{d-1}(x_0),\\
		&=& o(s_{k,n,\gamma}^2+t_{k,n,\gamma}^2),
	\end{eqnarray*}
	and
	\begin{eqnarray*}
		r_4&=& \int_{\mathcal{S}}\frac{\|\dot{\Psi}(x_0)\|}{\|\dot{\eta}(x_0)\|^2} \mathbb{E}_{R_{k+1}}\int_{\mathbb{R}\backslash [-\epsilon_{k,n},\epsilon_{k,n}]}\\&&\qquad t\|\dot{\Psi}(x_0)\|\left( \Phi\left(-\frac{t\|\dot{\eta}(x_0)\|}{\sqrt{s_{k,n,\gamma}^2/k}} -\frac{ \mathbb{E}(R_1/R_{k+1})^{-\gamma}a(x_0^t)R_1^2}{\sqrt{s_{k,n,\gamma}^2/k}}\right)-1_{\{v<0 \}} \right)dtd\text{Vol}^{d-1}(x_0)\\&=&o(s_{k,n,\gamma}^2).
	\end{eqnarray*}
	The term $r_5$ is the difference between the normal probability given $R_{k+1}$ and the one after taking expectation. Similar with \cite{cannings2017local}, for each $x$, when $|t|<\epsilon_{k,n}$, we have
	\begin{eqnarray*}
		&&\mathbb{E}_{R_{k+1}}\Phi\left(-\frac{t\|\dot{\eta}(x_0)\|}{\sqrt{s_{k,n,\gamma}^2(x,R_{k+1})/k}} -\frac{ \mathbb{E}(R_1/R_{k+1})^{-\gamma}a(x_0^t)R_1^2}{\sqrt{s_{k,n,\gamma}^2(x,R_{k+1})/k}}\right)\\
		&=& \Phi\left(-\frac{t\|\dot{\eta}(x_0)\|}{\sqrt{s_{k,n,\gamma}^2(x,R_{k+1})/k}} -\frac{ \mathbb{E}(R_1/R_{k+1})^{-\gamma}a(x_0^t)R_1^2}{\sqrt{s_{k,n,\gamma}^2(x,R_{k+1})/k}}\right)+O\left( k Var(t_{k,n,\gamma}(x,R_{k+1})) \right)+o.
	\end{eqnarray*}
	Following step 3 in \cite{cannings2017local}, we obtain
	\begin{eqnarray*}
		Var(t_{k,n,\gamma}(x,R_{k+1}))\leq \frac{1}{k^2}\sum_{j=1}^k \mathbb{E}(\eta(X^1)-\eta(x))^2=O\left(\frac{1}{k} r_{2p}^2\right).
	\end{eqnarray*}
	For the case when $|t|\gg s_{k,n,\gamma}+t_{k,n,\gamma}$, differentiate normal cdf twice still leads to very small probability, thus for each $x_0$, we have
	\begin{eqnarray*}
		&&\int t\mathbb{E}_{R_{k+1}}\Phi\left(-\frac{t\|\dot{\eta}(x_0)\|}{\sqrt{s_{k,n,\gamma}^2(x,R_{k+1})/k}} -\frac{ \mathbb{E}(R_1/R_{k+1})^{-\gamma}a(x_0^t)R_1^2}{\sqrt{s_{k,n,\gamma}^2(x,R_{k+1})/k}}\right)dt\\
		&=&\int t\Phi\left(-\frac{t\|\dot{\eta}(x_0)\|}{\sqrt{s_{k,n,\gamma}^2(x,R_{k+1})/k}} -\frac{ \mathbb{E}(R_1/R_{k+1})^{-\gamma}a(x_0^t)R_1^2}{\sqrt{s_{k,n,\gamma}^2(x,R_{k+1})/k}}\right)dt+o(s_{k,n,\gamma}^2+t_{k,n,\gamma}^2).
	\end{eqnarray*}

	\subsection{ Connecting Multiplicative Constants w.r.t $\gamma$}\label{sec:cor:proof}
	
	To show Theorem \ref{thm:main}, we need to work out the multiplicative constants.

	For classification, given $x$, we know that if $X$ follows multi-dimensional uniform distribution with density $1/f(x)$, for some constant $c_d$ that only depends on $d$,
	\begin{eqnarray*}
		\mathbb{E}({R}_1/{R}_{k+1})^{-2\gamma} 
		&=& \frac{d}{d-2\gamma}+o,\\
		\mathbb{E}({R}_1/{R}_{k+1})^{-\gamma} &=& \frac{d}{d-\gamma}+o,\\
		\mathbb{E}({R}_1/{R}_{k+1})^{-\gamma}R_1^{2}&=& \mathbb{E}({R}_1/{R}_{k+1})^{2-\gamma}R_{k+1}^2 =c_d\left(\frac{k}{nf(x)}\right)^{\frac{2}{d}}\frac{d}{d+2-\gamma}+o.
	\end{eqnarray*}

	For regression, one more step needed compared with classification is to evaluate 
	\begin{eqnarray*}
		k\mathbb{E}\left[\frac{(R_{1}/R_{k+1})^{-2\gamma}}{ (\sum_{i=1}^k (R_{i}/R_{k+1})^{-\gamma})^2}\right]\mathbb{E}\sigma(X)^2 + k^2 \mathbb{E}\left(a^2(X)\mathbb{E}^2\left[\frac{R_1^2(R_1/R_{k+1})^{-\gamma}}{\sum_{i=1}^k(R_i/R_{k+1})^{-\gamma}}\right]\right).
	\end{eqnarray*}
	
	The sum of ratios $\sum_{i=1}^k (R_i/R_{k+1})^{-\gamma}$ is hard to evaluated directly in the denominator, hence we use upper bound and lower bound on it. Since $d-3\gamma>0$, using non-uniform Berry-Essen Theorem, given $R_{k+1}$, we have
	\begin{eqnarray*}
		&&\left|P\left(\sum_{i=1}^k (R_i/R_{k+1})^{-\gamma} -k\mathbb{E}(R_1/R_{k+1})^{-\gamma} > \xi  \right)- \bar{\Phi}\left( \frac{ \xi }{\sqrt{kVar((R_1/R_{k+1})^{-\gamma})}}\bigg|R_{k+1} \right)\right| \\&\leq& \frac{c}{1+(\xi/\sqrt{k})^3}.
	\end{eqnarray*}
	
	Therefore, taking $\xi=\delta_kk\mathbb{E}(R_1/R_{k+1})^{-\gamma}$, 
	\begin{eqnarray*}
		&&P\left(\sum_{i=1}^k (R_i/R_{k+1})^{-\gamma}  > (\delta_k+1)k\mathbb{E}(R_1/R_{k+1})^{-\gamma}\bigg|R_{k+1} \right)\\& \leq&
		\bar{\Phi}\left( \frac{ \delta_k\sqrt{k}\mathbb{E}(R_1/R_{k+1})^{-\gamma  }}{\sqrt{Var((R_1/R_{k+1})^{-\gamma})}} \bigg|R_{k+1}\right) +\frac{c}{1+(\sqrt{k}\delta_k)^3}.
	\end{eqnarray*}
	
	Note that $(R_1/R_{k+1})^{-\gamma}$ is always larger than 1, hence
	
	\begin{eqnarray*}
		&&\mathbb{E}\left[\frac{(R_{1}/R_{k+1})^{-2\gamma}}{ (\sum_{i=1}^k (R_{i}/R_{k+1})^{-\gamma})^2}\right]\\&\leq&  \mathbb{E}_{R_{k+1}}\left[\frac{\mathbb{E}(R_1/R_{k+1})^{-2\gamma}}{ (1-\delta_k)^2k^2\mathbb{E}^2(R_1/R_{k+1})^{-\gamma} }\right]\\
		&&+ \mathbb{E}_{R_{k+1}}\left[P\left(\sum_{i=1}^k (R_i/R_{k+1})^{-\gamma}  < (1-\delta_k)k\mathbb{E}(R_1/R_{k+1})^{-\gamma} \bigg|R_{k+1}\right)\frac{\mathbb{E}(R_1/R_{k+1})^{-2\gamma}}{k^2}\right]+o\\
		&\leq& \frac{1}{k^2(1-\delta_k)^2} \frac{\mathbb{E}(R_1/R_{k+1})^{-2\gamma}}{\mathbb{E}^2(R_1/R_{k+1})^{-\gamma}} +\frac{\mathbb{E}(R_1/R_{k+1})^{-2\gamma}}{k^2}	\bar{\Phi}\left( \frac{ \delta_k\sqrt{k}\mathbb{E}(R_1/R_{k+1})^{-\gamma  }}{\sqrt{Var((R_1/R_{k+1})^{-\gamma})}} \right) \\&&+\frac{\mathbb{E}(R_1/R_{k+1})^{-2\gamma}}{k^2}	\frac{c}{1+(\sqrt{k}\delta_k)^3}+o,
	\end{eqnarray*}
	while
	\begin{eqnarray*}
		\mathbb{E}\left[\frac{(R_{1}/R_{k+1})^{-2\gamma}}{ (\sum_{i=1}^k (R_{i}/R_{k+1})^{-\gamma})^2}\right]&\geq& \frac{1}{k^2(1+\delta_k)^2} \frac{\mathbb{E}(R_1/R_{k+1})^{-2\gamma}}{\mathbb{E}^2(R_1/R_{k+1})^{-\gamma}}-\frac{\mathbb{E}(R_1/R_{k+1})^{-2\gamma}}{k^2}	\frac{c}{1+(\sqrt{k}\delta_k)^3}\\&&+o.
	\end{eqnarray*}
	Hence taking $\delta_k$ such that $\delta_k\rightarrow0$ while $\delta_k\sqrt{k}\rightarrow\infty$, we have
	\begin{eqnarray*}
		\mathbb{E}\left[\frac{(R_{1}/R_{k+1})^{-2\gamma}}{ (\sum_{i=1}^k (R_{i}/R_{k+1})^{-\gamma})^2}\right]=\frac{1}{k^2}\frac{(d-\gamma)^2}{d(d-2\gamma)}+o,
	\end{eqnarray*}
	and similarly
	\begin{eqnarray*}
		\mathbb{E}\left[\frac{R_1^2(R_1/R_{k+1})^{-\gamma}}{\sum_{i=1}^k(R_i/R_{k+1})^{-\gamma}}\right]= \frac{1}{k} \left(\frac{k}{nf(x)}\right)^{\frac{2}{d}}\frac{d-\gamma}{d+2-\gamma}+o.
	\end{eqnarray*}

	\section{Proof of Theorem \ref{CIS}}\label{sec:cis:proof}
	
	The proof is similar with Theorem 1 in \cite{sun2016stabilized}.
	
	From the definition of CIS, we have
	\begin{eqnarray*}
		CIS(\gamma)/2&=&\int_{\mathcal{R}} P(S_{k,n,\gamma}(x)\geq 1/2)\left(1-P(S_{k,n,\gamma}(x)\geq 1/2) \right)dP(x)\\
		&=& \int_{\mathcal{R}} \left(P(S_{k,n,\gamma}(x)\geq 1/2)-1_{\{ \eta(x)\leq 1/2 \}}\right)d{P}(x)\\
		&&-\int_{\mathcal{R}} \left(P^2(S_{k,n,\gamma}(x)\geq 1/2)-1_{\{ \eta(x)\leq 1/2 \}}\right)d{P}(x).
	\end{eqnarray*}

	Based on the definition of $Z_i(x,R_{k+1})$ in (\ref{eqn:Z}), the derivation of $\mu_{k,n,\gamma}(x,R_{k+1})$ and $s_{k,n,\gamma}(x,R_{k+1})$, follow the same procedures as in Theorem \ref{thm:main}, we obtain
	\begin{eqnarray*}
		&&\int_{\mathcal{R}} \left(P(S_{k,n,\gamma}(x)\geq 1/2)-1_{\{ \eta(x)\leq 1/2 \}}\right)d{P}(x)\\&=&
		\int_{S}\int_{-\epsilon_{k,n}}^{\epsilon_{k,n}} f(x_0^t)\left\{   \mathbb{E}_{R_{k+1}} P\left( S_{k,n,\gamma}(x_0^t)<1/2 |R_{k+1}\right)-1_{ \{t<0 \} } \right\}dtd\text{Vol}^{d-1}(x_0)+o\\
		&=& \int_{S}\mathbb{E}_{R_{k+1}}\int_{\mathbb{R}} f(x_0) \left\{  \Phi\left(-\frac{t\|\dot{\eta}(x_0)\|}{\sqrt{s_{k,n,\gamma}^2/k}} -\frac{ \mathbb{E}(R_1/R_{k+1})^{-\gamma}a(x_0^t)R_1^2}{\sqrt{s_{k,n,\gamma}^2/k}}\right)-1_{\{ t<0 \}} \right\}dtd\text{Vol}^{d-1}(x_0)\\&&+o,
	\end{eqnarray*}
	and similarly,
	\begin{eqnarray*}
		&&\int_{\mathcal{R}} \left(P^2(S_{k,n,\gamma}\geq 1/2|R)-1_{\{ \eta(x)\leq 1/2 \}}\right)d\bar{P}(x)\\&=&
		\int_{S}\int_{-\epsilon_{k,n}}^{\epsilon_{k,n}} f(x_0^t)\left\{    P^2\left( S_{k,n,\gamma}(x_0^t)<1/2 \right)-1_{ \{t<0 \} } \right\}dtd\text{Vol}^{d-1}(x_0)+o\\
		&=& \int_{S}\mathbb{E}_{R_{k+1}}\int_{\mathbb{R}}\\&&\qquad f(x_0)  \left\{  \Phi^2\left(-\frac{t\|\dot{\eta}(x_0)\|}{\sqrt{s_{k,n,\gamma}^2/k}} -\frac{ \mathbb{E}(R_1/R_{k+1})^{-\gamma}a(x_0^t)R_1^2}{\sqrt{s_{k,n,\gamma}^2/k}}\right)-1_{\{ t<0 \}} \right\}dtd\text{Vol}^{d-1}(x_0)+o.
	\end{eqnarray*}
	Adopting Proposition \ref{S.1} and the fact that $s_{k,n,\gamma}(x,R_{k+1})$ is little changed by $x$ and $R_{k+1}$, treating $\Phi$ and $\Phi^2$ as two distribution functions, we have
	\begin{eqnarray*}
		CIS(\gamma) = \frac{B_1}{\sqrt{\pi}}\frac{1}{\sqrt{k}} \mathbb{E}\sqrt{s_{k,n,\gamma}^2(X)}+o=\frac{B_1}{\sqrt{\pi}}\frac{1}{\sqrt{k}} \sqrt{1+\frac{\gamma^2}{d(d-2\gamma)}}+o.
	\end{eqnarray*}
	
	\section{Regret under Testing Data Corruption}
	This section is the proof for Regret under testing data corruption.
	\begin{proof}
		\begin{eqnarray*}
			&&\int_{\mathcal{S}} \int_{-\epsilon_{k,n,\omega}}^{\epsilon_{k,n,\omega}} t\|\dot{\Psi}(x_0)\|\left( \Phi\left( \frac{ k\mathbb{E}(1/2-Y_1)}{\sqrt{kVar(Y_1)}}\right)-1_{\{t<0 \}} \right)dtd\text{Vol}^{d-1}(x_0)\\
			&=&	\int_{\mathcal{S}}\int_{\mathbb{R}} t\|\dot{\Psi}(x_0)\|\\&&\quad\left(\Phi\left(-\frac{t\|\dot{\eta}(x_0)\|-sign(t)\omega\|\dot{\eta}(x_0)\|}{\sqrt{s_{k,n}^2}} -\frac{ b(x_0)t_{k,n}(x_0))}{\sqrt{s_{k,n}^2}}\right)-1_{\{t<0\}}\right)dtd\text{Vol}^{d-1}(x_0)+o\\
			&=& \frac{1}{2}\int_{\mathcal{S}}\int_{\mathbb{R}} t\|\dot{\Psi}(x_0)\|\left(\Phi\left(-\frac{t\|\dot{\eta}(x_0)\|+\omega\|\dot{\eta}(x_0)\|}{\sqrt{s_{k,n}^2}} -\frac{ b(x_0)t_{k,n}(x_0))}{\sqrt{s_{k,n}^2}}\right)-1_{\{t<0\}}\right)dtd\text{Vol}^{d-1}(x_0)\\
			&&+\frac{1}{2}\int_{\mathcal{S}}\int_{\mathbb{R}} t\|\dot{\Psi}(x_0)\|\\&&\qquad\qquad\qquad\left(\Phi\left(-\frac{t\|\dot{\eta}(x_0)\|-\omega\|\dot{\eta}(x_0)\|}{\sqrt{s_{k,n}^2}} -\frac{ b(x_0)t_{k,n}(x_0))}{\sqrt{s_{k,n}^2}}\right)-1_{\{t<0\}}\right)dtd\text{Vol}^{d-1}(x_0)\\&&+r_5+o\\
			&=&\frac{B_1}{4k}+\frac{1}{2}\int_{S}\frac{\|\dot{\Psi}(x_0)\|}{\|\dot{\eta}(x_0)\|^2} \left(b(x_0)^2\mathbb{E}^2R_1(x)^2+\omega^2\|\dot{\eta}(x_0)\|^2\right) d\text{Vol}^{d-1}(x_0)+ r_5+o.
		\end{eqnarray*}
		\begin{eqnarray*}
			2r_5&=&   \int_{\mathcal{S}}\int_{0}^{2\omega} (t-2\omega)\|\dot{\Psi}(x_0)\|\Phi\left(-\frac{t\|\dot{\eta}(x_0)\|-\omega\|\dot{\eta}(x_0)\|}{\sqrt{s_{k,n}^2}} -\frac{ b(x_0)t_{k,n}(x_0)}{\sqrt{s_{k,n}^2}}\right)dtd\text{Vol}^{d-1}(x_0)\\
			&&+ \int_{\mathcal{S}}\int_{-2\omega}^0 (t+2\omega)\|\dot{\Psi}(x_0)\|\Phi\left(-\frac{t\|\dot{\eta}(x_0)\|+\omega\|\dot{\eta}(x_0)\|}{\sqrt{s_{k,n}^2}} -\frac{ b(x_0)t_{k,n}(x_0)}{\sqrt{s_{k,n}^2}}\right)dtd\text{Vol}^{d-1}(x_0)\\&&+o\\
			&=& 2\int_{\mathcal{S}}\int_{-\omega}^\omega t\|\dot{\Psi}(x_0)\|\Phi\left(-\frac{t\|\dot{\eta}(x_0)\|}{\sqrt{s_{k,n}^2}} -\frac{ b(x_0)t_{k,n}(x_0)}{\sqrt{s_{k,n}^2}}\right)dtd\text{Vol}^{d-1}(x_0)+o\\
			&=& 2\int_{\mathcal{S}}\int_{0}^\omega t\|\dot{\Psi}(x_0)\|\Phi\left(-\frac{t\|\dot{\eta}(x_0)\|}{\sqrt{s_{k,n}^2}} -\frac{ b(x_0)t_{k,n}(x_0)}{\sqrt{s_{k,n}^2}}\right)dtd\text{Vol}^{d-1}(x_0)\\
			&&+2\int_{\mathcal{S}}\int_{-\omega}^0 t\|\dot{\Psi}(x_0)\|\Phi\left(-\frac{t\|\dot{\eta}(x_0)\|}{\sqrt{s_{k,n}^2}} -\frac{ b(x_0)t_{k,n}(x_0)}{\sqrt{s_{k,n}^2}}\right)dtd\text{Vol}^{d-1}(x_0)+o,
		\end{eqnarray*}
		\blue{and some further calculation reveals that}
		\begin{eqnarray*}
			2r_5&=& -2\int_{\mathcal{S}}\int_{-\omega}^0 t\|\dot{\Psi}(x_0)\|\Phi\left(\frac{t\|\dot{\eta}(x_0)\|}{\sqrt{s_{k,n}^2}} -\frac{ b(x_0)t_{k,n}(x_0)}{\sqrt{s_{k,n}^2}}\right)dtd\text{Vol}^{d-1}(x_0)\\
			&&+2\int_{\mathcal{S}}\int_{-\omega}^0 t\|\dot{\Psi}(x_0)\|\Phi\left(-\frac{t\|\dot{\eta}(x_0)\|}{\sqrt{s_{k,n}^2}} -\frac{ b(x_0)t_{k,n}(x_0)}{\sqrt{s_{k,n}^2}}\right)dtd\text{Vol}^{d-1}(x_0)+o\\
			&=& 2\int_{\mathcal{S}}\int_{-\omega}^0 t\|\dot{\Psi}(x_0)\|\bigg[\Phi\left(-\frac{t\|\dot{\eta}(x_0)\|}{\sqrt{s_{k,n}^2}} -\frac{ b(x_0)t_{k,n}(x_0)}{\sqrt{s_{k,n}^2}}\right)\\&&\qquad\qquad+\Phi\left(-\frac{t\|\dot{\eta}(x_0)\|}{\sqrt{s_{k,n}^2}} +\frac{ b(x_0)t_{k,n}(x_0)}{\sqrt{s_{k,n}^2}}\right)\bigg]dtd\text{Vol}^{d-1}(x_0)\\
			&& +\omega^2\int_{\mathcal{S}}\|\dot{\Psi}(x_0)\|d\text{Vol}^{d-1}(x_0)+o.
		\end{eqnarray*}
		Taking gradient on $r_5$ w.r.t. $t_{k,n}$, the gradient is positive. As a result, when introducing interpolation and fixing $k$, $r_5$ becomes smaller.
	\end{proof}
	
	\section{Formal Representations for Corollary \ref{coro:corruption}}\label{sec:corruption}
	This section is a formal representations for Corollary \ref{coro:corruption}.
	\begin{corollary}
		Under the conditions stated in Corollary \ref{coro:corruption}, for
		\begin{itemize}
			\item random perturbation: if $\omega^3=o(n^{-4/(d+3)})$, the following result holds:
			\begin{eqnarray*}
				\text{Regret}(k,n,\gamma)&=&\underbrace{ \frac{(d-\gamma)^2}{d(d-2\gamma)}\frac{1}{4k}\int_{S}\frac{f(x_0)}{\|\dot{\eta}(x_0)\|}d\text{Vol}^{d-1}(x_0)}_{Variance}\\&&+\underbrace{\frac{(d-\gamma)^2}{(d+2-\gamma)^2}\frac{(d+2)^2}{d^2}\int_{S}\frac{f(x_0)a(x_0)^2}{\|\dot{\eta}(x_0)\|} \mathbb{E}(R_1^2|X=x_0) d\text{Vol}^{d-1}(x_0)}_{Bias}\\
				&&+ \underbrace{\frac{1}{2}\int_{\mathcal{S}}\frac{\omega^2}{d}\|\dot{\Psi}(x_0)\| d\text{Vol}^{d-1}(x_0)}_{Corruption}+Remainder.
			\end{eqnarray*}
			The corruption is not related to $\gamma$.
			\item black-box attack: for simplicity, we use $\eta$ instead of $\widetilde\eta$. In this case, if $\omega^3=o(n^{-4/(d+3)})$, the regret becomes
			\begin{eqnarray*}
				\text{Regret}(k,n,\gamma)&=&\underbrace{ \frac{(d-\gamma)^2}{d(d-2\gamma)}\frac{1}{4k}\int_{S}\frac{f(x_0)}{\|\dot{\eta}(x_0)\|}d\text{Vol}^{d-1}(x_0)}_{Variance}\\&&+\underbrace{\frac{(d-\gamma)^2}{(d+2-\gamma)^2}\frac{(d+2)^2}{d^2}\int_{S}\frac{f(x_0)a(x_0)^2}{\|\dot{\eta}(x_0)\|} \mathbb{E}(R_1^2|X=x_0) d\text{Vol}^{d-1}(x_0)}_{Bias}\\
				&&+Corruption+Remainder,
			\end{eqnarray*}
			where Corruption decreases when $\gamma$ slightly increases from zero.
		\end{itemize}
	\end{corollary}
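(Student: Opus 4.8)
The plan is to rerun the proof of Theorem~\ref{thm:main} with the sole modification that the estimator is queried at the corrupted point $\widetilde{x}=x+\omega u$ instead of at $x$, and to isolate the one extra term this shift contributes.

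Because $\widetilde{x}_{rand}$ and $\widetilde{x}_{black}$ are both generated independently of the training data, $\widehat{g}_{k,n,\gamma}(\widetilde X)$ and $Y$ remain conditionally independent given $X$, so the identity derived in the proof of Theorem~\ref{general} still gives $\text{Regret}(k,n,\gamma)=\mathbb{E}[\,|1-2\eta(X)|\,P(\widehat{g}_{k,n,\gamma}(\widetilde X)\neq g(X))\,]$. Then, exactly as in Step~2 of the proof of Theorem~\ref{thm:main}, under A.1'--A.4' the Regret reduces to the tube integral
$$\{1+o(1)\}\int_{\mathcal S}\int_{-\epsilon_{k,n}}^{\epsilon_{k,n}} t\,\|\dot\Psi(x_0)\|\Big(P\big(S_{k,n,\gamma}(\widetilde{x}_0^t)<\tfrac12\big)-1_{\{t<0\}}\Big)\,dt\,d\text{Vol}^{d-1}(x_0)+o,$$
where $\widetilde{x}_0^t$ is the corruption of the tube point $x_0^t$. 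The hypothesis $\omega^3=o(n^{-4/(d+3)})$ serves precisely to keep $\widetilde{x}_0^t$ inside the tube and to make the cubic Taylor remainder of $\eta$ about $x_0^t$, together with the non-uniform Berry--Esseen remainder, negligible against the $O(n^{-4/(d+4)})$ leading Regret.

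Next I would apply the Gaussian approximation from Step~3 of Theorem~\ref{thm:main} and expand $\mu_{k,n,\gamma}(\widetilde{x}_0^t,R_{k+1})=\eta(\widetilde{x}_0^t)+a(x_0)\,t_{k,n,\gamma}+o$ with $\eta(\widetilde{x}_0^t)=\tfrac12+t\|\dot\eta(x_0)\|+\omega\,u^\top\dot\eta(x_0)+O(\omega^2+\omega|t|)$, then evaluate the $t$-integral via Proposition~\ref{S.1}: it becomes a multiple of $\tfrac1{b^2}\big(\tfrac{a^2}{2}+\tfrac12\big)$ with $b=\|\dot\eta(x_0)\|/\sqrt{s_{k,n,\gamma}^2/k}$ and $a=\big(\omega\,u^\top\dot\eta(x_0)+a(x_0)t_{k,n,\gamma}\big)/\sqrt{s_{k,n,\gamma}^2/k}$. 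For the random perturbation $u$ is (up to scaling) uniform on the unit sphere, so $\mathbb{E}_u u=0$ annihilates the cross product and $\mathbb{E}_u(u^\top\dot\eta(x_0))^2=\|\dot\eta(x_0)\|^2/d$; the $a(x_0)^2t_{k,n,\gamma}^2$ part of $\mathbb{E}_u a^2$ merges into the Bias term of Theorem~\ref{thm:main} (the residual $\tfrac12$ reproducing the Variance term), while the $\omega^2\|\dot\eta(x_0)\|^2/d$ part, after the $1/b^2$ factor and $\|\dot\Psi\|/\|\dot\eta\|=2f$, gives exactly the $\gamma$-free Corruption term $\tfrac12\int_{\mathcal S}(\omega^2/d)\,\|\dot\Psi(x_0)\|\,d\text{Vol}^{d-1}(x_0)$. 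For the black-box attack (writing $\eta$ for $\widetilde\eta$) the shift is instead the deterministic $-\text{sign}(t)\,\omega\,\dot\eta(x_0)/\|\dot\eta(x_0)\|$, so no averaging occurs near the boundary and the region essentially $|t|\le 2\omega$ collects into the term $r_5$ computed in the ``Regret under Testing Data Corruption'' proof section; differentiating that explicit formula in $t_{k,n}$ and using $t<0$ over the range of integration shows $\partial r_5/\partial t_{k,n}>0$, and since $t_{k,n,\gamma}\propto(d-\gamma)/(d+2-\gamma)$ is strictly decreasing in $\gamma$, the Corruption term $r_5$ decreases as $\gamma$ rises from $0$, as claimed.

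I expect the main obstacle to be the error analysis of the tube reduction for the corrupted query: one must verify that replacing $x_0^t$ by $\widetilde{x}_0^t$ --- which can push the point off the exact tube and perturb its nearest-neighbor ball --- does not affect the leading behavior, and that every Taylor and Berry--Esseen remainder is genuinely $o(n^{-4/(d+4)})$ under the precise condition $\omega^3=o(n^{-4/(d+3)})$, most delicately in the black-box case where the shift is adversarial and no cancellation is available. Once that is in place, the cross-term bookkeeping and the sign computation $\partial r_5/\partial t_{k,n}>0$ are routine.
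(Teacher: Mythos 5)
Your proposal follows essentially the same route as the paper: rerun the tube/Berry--Esseen argument of Theorem \ref{thm:main} at the corrupted query point, evaluate the shifted Gaussian integral via Proposition \ref{S.1} (averaging over the perturbation direction to get the $\omega^2/d$ term, which the paper leaves implicit), and for the black-box case identify the corruption contribution with the paper's residual $r_5$ and conclude from $\partial r_5/\partial t_{k,n}>0$ together with the monotone decrease of $t_{k,n,\gamma}\propto(d-\gamma)/(d+2-\gamma)$ in $\gamma$. This matches the paper's proof, so I consider it correct and essentially identical in approach.
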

	
	\section{Variance-Bias Trade-off in General Weighting Schemes}\label{sec:ownn}
	This section discusses the variance-bias trade-off in general weighting schemes.
	
	Besides OWNN and interpolated-NN, we found that the benefit from the variance-bias trade-off exists for a general class of weights (not essential to be interpolated).  Similar as for interpolated-NN, when $\gamma$ is closed to zero, the increase of variance is approximately a quadratic function in $\gamma$, while bias is linearly reduced. 
	\begin{corollary}\label{coro:general}
		Denote $x$ as $R_{1}/R_{k+1}$ and  $\phi(x,\gamma):[0,1]\times [0,\infty)\rightarrow [0,\infty)$ as a function such that $\phi(x,0)\equiv1$, and taking $\phi'(x,\gamma)=\partial \phi/\partial \gamma$. If
		\begin{eqnarray*}
			\Delta:=\left(\int_0^1\phi'(x,0)x^{d+1}dx\right)\left(\int_0^1x^{d-1}dx\right)-\left(\int_0^1\phi'(x,0)x^{d-1}dx\right)\left(\int_0^1x^{d+1}dx\right)<0,
		\end{eqnarray*}
		then when sightly increasing $\gamma$ (to cause interpolation / allocating more weight on closer neighbors), it is guaranteed that the overall MSE / Regret will get decreased. 
	\end{corollary}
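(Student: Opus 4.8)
The plan is to imitate the proof of Theorem~\ref{thm:main1}, replacing the power weights $(R_i/R_{k+1})^{-\gamma}$ by the general weights $\phi(R_i/R_{k+1},\gamma)$ everywhere, and then to extract and differentiate the $\gamma$-dependence of the leading terms at $\gamma=0$. \textbf{Step 1.} First I would re-run Steps~1--4 of the proof of Theorem~\ref{thm:main1}. Conditionally on $R_{k+1}$ the ratios $R_i/R_{k+1}$ are i.i.d.\ with limiting density $d\,u^{d-1}$ on $[0,1]$, and for $\gamma$ in a neighbourhood of $0$ the function $\phi(\cdot,\gamma)$ is close to the constant $1$, so all moments $\mathbb{E}\,\phi^j(R_1/R_{k+1},\gamma)$ with $j\le 3$ are finite and $\sum_{i=1}^k\phi(R_i/R_{k+1},\gamma)$ concentrates around $k\,\mathbb{E}\,\phi(R_1/R_{k+1},\gamma)$. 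Exactly as in the power-weight case this yields, writing $\langle g\rangle:=d\int_0^1 g(u)\,u^{d-1}\,du$ so that $\langle 1\rangle=1$,
\begin{equation*}
\mathrm{Risk}(k,n,\gamma)=\frac{\alpha}{k}\,V(\gamma)+\beta\, B(\gamma)^2+\mathrm{Remainder},\qquad \mathrm{Remainder}=o(\mathrm{Risk}),
\end{equation*}
where $\mathrm{Risk}$ is the MSE (regression) or the Regret (classification), $\alpha=\alpha(n)>0$ and $\beta=\beta(n)>0$ depend only on $d$ and the underlying distribution (through $\mathbb{E}\sigma^2(X)$ or $\int_S f/\|\dot\eta\|$, through $\mathbb{E}[a^2(X)\mathbb{E}^2(R_{k+1}^2\mid X)]$, and through $R_{k+1}\asymp(k/n)^{1/d}$; note $\alpha/k$ and $\beta$ have the same order at the optimal $k\asymp n^{4/(d+4)}$), and
\begin{equation*}
V(\gamma)=\frac{\langle\phi(\cdot,\gamma)^2\rangle}{\langle\phi(\cdot,\gamma)\rangle^2},\qquad B(\gamma)=\frac{\langle u^2\phi(\cdot,\gamma)\rangle}{\langle\phi(\cdot,\gamma)\rangle}.
\end{equation*}

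\textbf{Step 2.} Next I would differentiate $V$ and $B$ at $\gamma=0$, using $\phi(\cdot,0)\equiv1$ and $\phi'(\cdot,0)=\partial_\gamma\phi(\cdot,0)$. Since $\langle\phi(\cdot,0)\rangle=\langle\phi(\cdot,0)^2\rangle=\langle 1\rangle=1$, the quotient rule gives $V(0)=1$ and
\begin{equation*}
V'(0)=2\langle\phi'(\cdot,0)\rangle-2\langle\phi'(\cdot,0)\rangle=0,
\end{equation*}
that is, the variance coefficient is flat to first order in $\gamma$ — the analytic shadow of the fact that uniform weights minimise the variance, cf.\ Section~\ref{sec:knn_wnn_compare} — so the variance contributes only an $O(\gamma^2)$ change. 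For the bias, $B(0)=\langle u^2\rangle=d/(d+2)>0$ and
\begin{align*}
B'(0)&=\langle u^2\phi'(\cdot,0)\rangle-\langle u^2\rangle\langle\phi'(\cdot,0)\rangle\\
&=d^2\!\left[\frac{1}{d}\!\int_0^1\!\phi'(u,0)u^{d+1}\,du-\frac{1}{d+2}\!\int_0^1\!\phi'(u,0)u^{d-1}\,du\right]=d^2\,\Delta,
\end{align*}
where I used $\int_0^1 u^{d-1}\,du=1/d$ and $\int_0^1 u^{d+1}\,du=1/(d+2)$. Hence $\Delta<0$ is precisely $B'(0)<0$.

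\textbf{Step 3.} Combining the two,
\begin{equation*}
\frac{d}{d\gamma}\mathrm{Risk}(k,n,\gamma)\Big|_{\gamma=0}=\frac{\alpha}{k}V'(0)+2\beta B(0)B'(0)=\frac{2\beta\, d^3}{d+2}\,\Delta,
\end{equation*}
which is strictly negative whenever $\Delta<0$ since $\beta>0$. Thus for any admissible $k$ (in particular the $k$-NN–optimal $k_0=\argmin_k\mathrm{Risk}(k,n,0)$) the leading order of $\gamma\mapsto\mathrm{Risk}(k_0,n,\gamma)$ is strictly decreasing at $\gamma=0$, so $\mathrm{Risk}(k_0,n,\gamma)<\mathrm{Risk}(k_0,n,0)$ once $n$ is large and $\gamma>0$ is small; and since $\min_k\mathrm{Risk}(k,n,\gamma)\le\mathrm{Risk}(k_0,n,\gamma)$ the same conclusion holds when $k$ is re-optimised for each $\gamma$. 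The main obstacle is Step~1: checking that the whole expansion machinery behind Theorem~\ref{thm:main1} (the tube argument, the non-uniform Berry--Esseen step, and the concentration of the random denominator $\sum_i\phi$) still goes through for a general $\phi(\cdot,\gamma)$ rather than the specific power $u^{-\gamma}$. This needs only mild regularity of $\phi$ in $\gamma$ near $0$ — enough integrability of $\phi'(\cdot,0)$ against $u^{d-1}$ and $u^{d+1}$, which is implicit in $\Delta$ being well defined, plus the boundedness/third-moment control required by Berry--Esseen — all automatic in a neighbourhood of $\gamma=0$ because $\phi(\cdot,\gamma)\to 1$.
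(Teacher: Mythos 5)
Your proposal is correct and follows essentially the same route as the paper: reduce the variance and bias coefficients to the integral functionals $\langle\phi^2\rangle/\langle\phi\rangle^2$ and $\langle u^2\phi\rangle/\langle\phi\rangle$ against the limiting density $d\,u^{d-1}$ of $R_1/R_{k+1}$, and then observe that at $\gamma=0$ the variance factor has zero derivative (so its increase is $O(\gamma^2)$) while the bias factor has derivative proportional to $\Delta$, so a negative $\Delta$ gives a first-order (linear-in-$\gamma$) decrease that dominates. The paper phrases this via ratios to the $\gamma=0$ case rather than differentiating the risk directly, but the two computations ($V'(0)=0$, $B'(0)=d^2\Delta$) are the same, and your integrability caveats match the paper's assumption that $\int_0^1\phi(x,\gamma)^3x^{d-1}\,dx$ is finite.
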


	\begin{proof}[Proof of Corollary \ref{coro:general}]
		
		When $\gamma$ is chosen that $\int_0^1 \phi(x,\gamma)^3 x^{d-1} dx $ is finite, the ratios of variance and bias in weighted-NN using $\phi(x,\gamma)$ and $\phi(x,0)$ become
		\begin{eqnarray*}
			&&\frac{\int_0^1 \phi(x,\gamma)^2x^{d-1}dx}{\left(\int_0^1 \phi(x,\gamma)x^{d-1}dx \right)^2}\frac{\left(\int_0^1 \phi(x,0)x^{d-1}dx\right)^2}{\left(\int_0^1\phi(x,0)^2 x^{d-1}dx\right)},\\\text{and }&& \frac{\left(\int_0^1 \phi(x,\gamma)x^{d-1}x^2dx\right)^2}{\left(\int_0^1 \phi(x,0)x^{d-1} x^2dx\right)^2}\frac{\left(\int_0^1 \phi(x,0)x^{d-1}dx\right)^2}{\left(\int_0^1 \phi(x,\gamma) x^{d-1}dx\right)^2}.
		\end{eqnarray*}
		In the context of $\phi(x,\gamma)=x^{-\gamma}$, the above two ratios refer to $s_{k,n,\gamma}^2/s_{k,n,0}^2$ and $t_{k,n,\gamma}^2/t_{k,n,0}^2$ in Theorem \ref{thm:main} respectively.
		
		For the ratio of variances, its gradient w.r.t $\gamma$ is 0 at $\gamma=0$:
		\begin{eqnarray*}
			\frac{\int_0^1 \phi(x,\gamma)^2x^{d-1}dx}{\left(\int_0^1 \phi(x,\gamma)x^{d-1}dx \right)^2}\frac{\left(\int_0^1 x^{d-1}dx\right)^2}{\left(\int_0^1 x^{d-1}dx\right)}-1=O(\gamma^2).
		\end{eqnarray*}
		However, for bias, it becomes
		\begin{eqnarray*}
			&&\frac{\left(\int_0^1 \phi(x,\gamma)x^2x^{d-1}dx\right)^2}{\left(\int_0^1 x^2x^{d-1}dx\right)^2}\frac{\left(\int_0^1 \phi(x,0)x^{d-1}dx\right)^2}{\left(\int_0^1 \phi(x,\gamma) x^{d-1}dx\right)^2}-1\\&=&O(\gamma^2)+\frac{2\gamma\Delta\left(\int_0^1 x^{d+1}dx\right)\left(\int_0^1 x^{d-1}dx\right)}{\left(\int_0^1 x^2x^{d-1}dx\right)^2\left(\int_0^1 \phi(x,\gamma) x^{d-1}dx\right)^2}.
		\end{eqnarray*}
		As a result, the increase of variance is of $O(\gamma^2)$, and the decrease of bias is a linear function of $\gamma$ since $\Delta<0$.
		
	\end{proof}

\end{document}